%% file: neurips_2026.tex
\documentclass{article}

\PassOptionsToPackage{numbers,sort&compress}{natbib}

\usepackage[preprint]{neurips_2026}

\usepackage[utf8]{inputenc}
\usepackage[T1]{fontenc}

\usepackage[colorlinks=true,citecolor=blue,linkcolor=blue,urlcolor=blue]{hyperref}
\usepackage{url}
\usepackage{xcolor}
\usepackage{nicefrac}
\usepackage{microtype}

\usepackage{amsmath,amssymb,amsfonts,amsthm,mathtools,bm}

\usepackage{booktabs}
\usepackage{multirow}
\usepackage{makecell}
\usepackage{graphicx}
\usepackage{subcaption}
\usepackage{float}

\usepackage{algorithm}
\usepackage[noend]{algpseudocode}

\newtheorem{theorem}{Theorem}[section]
\newtheorem{corollary}{Corollary}[theorem]
\newtheorem{lemma}[theorem]{Lemma}
\newtheorem{proposition}[theorem]{Proposition}
\newtheorem{remark}[theorem]{Remark}

\theoremstyle{definition}
\newtheorem{definition}{Definition}[section]

\newtheoremstyle{myexample}
  {6pt}
  {6pt}
  {\normalfont}
  {}
  {\bfseries}
  {.}
  {0.5em}
  {}

\theoremstyle{myexample}
\newtheorem{example}{Example}[section]

\definecolor{nischalcolor}{RGB}{220,68,5}

\title{Learning with Conflicts of Interest}

\author{%
  Nischal Aryal\textsuperscript{1} \quad
  Arash Termehchy\textsuperscript{1} \quad
  Ali Vakilian\textsuperscript{2} \quad
  Marianne Winslett\textsuperscript{3}
  \\
  \textsuperscript{1}Oregon State University
  \quad
  \textsuperscript{2}Virginia Tech
  \quad
  \textsuperscript{3}University of Illinois
  \\
  \texttt{\{aryaln,termehca\}@oregonstate.edu}
  \quad
  \texttt{vakilian@vt.edu}
  \quad
  \texttt{winslett@illinois.edu}
}
\begin{document}

\maketitle

\begin{abstract}
Financial, social, and political factors often prevent the interests of the owners of ML systems and services and their users from being perfectly aligned. ML systems often produce biased information that can influence users to make decisions that are not in their best interest. Current solution approaches require ML systems to implement protocols to mitigate their biases. However, ML system owners usually do not have any incentive to implement these protocols and often argue that it limits their freedom of expression or business. We believe that a successful solution to this problem must recognize the conflict of interest between the ML systems and their users, and use this information to protect users against information that adversely influences their decisions while allowing users to safely benefit from these systems. To this end, we propose a game-theoretic framework that models the interaction between ML systems and users with conflicts of interest. 
We present scalable algorithms with theoretical guarantees that maximize the amount of desired information and actions and minimize the amount of biased and manipulative actions in interaction with ML systems.
\end{abstract}


\input{1-Introduction.tex}
\input{1.5-Related-Work}

\input{2-Framework.tex}

\input{3-Possibility-of-Influence}
\input{4-Optimal-Influence}

\input{5-Noise-In-Learning}

\input{5-Experiments}
\bibliographystyle{plainnat}
\bibliography{reference}
\newpage
\appendix
\input{Appendix}

\end{document}

%% file: 1-Introduction.tex
\section{Introduction}
\label{sec:introduction}
Popular ML systems and services are usually developed and operated by government or business entities. 
Financial, social, and political factors prevent the incentives of these entities and their users from being perfectly aligned \citep{yeh2025position}.
For example, companies are often said to rework their recommendation and ranking models to promote their own products and undermine those of their competitors \citep{Fussell2019,Mattioli2019,Nicas2019}.
News sources often provide biased recommendations that can convince users to make decisions based on the goals of the website owner \citep{summerfield2024advancedaisystemsimpact,Zraik2025}.
Social networks often promote extreme content or points of view, which is not necessarily in the best interest of their users or society \citep{10.1145/3533379,10.1145/3533380, summerfield2024advancedaisystemsimpact}. 
The access to a wealth of background knowledge 
using foundation models has given AI agents a substantial ability to influence user decisions \citep{summerfield2024advancedaisystemsimpact,yeh2025position,DBLP:journals/corr/abs-2404-15058, jones2024liesdamnedliesdistributional,bozdag2025persuadecanframeworkevaluating,10.1145/3586183.3606763}.

Current proposals to combat these undesirable actions and influences of ML systems require them to implement certain rules and restrictions \citep{yeh2025position,10.1145/3533380}. 
However, these restrictions usually conflict with the objectives of the entities behind these systems, so they do not have an incentive to implement them. 
In addition, some argue that these restrictions limit the freedom to consume, learn from, and distribute information or conduct business and may not be ethical or would have long-term adverse impacts on society.
Also, they often impose significant computational overhead \citep{yeh2025position}. 

In this paper, we propose a game‐theoretic framework for supervised learning in settings where the user and the learner have conflicts of interest.
In our framework, the user has private training data that, if truthfully conveyed to the ML system (learner), may lead the system to learn a biased model to the benefit of the system. Importantly, our framework also captures the case where the system first learns an unbiased model and then injects bias at inference time. In both cases, the model's predictions undermine the user's interest, e.g., by providing biased recommendations.

However, the owner of the ML system and the user do share some interests: the ML system should satisfy the user's objective well enough to keep them engaged.  If the system learns an overly biased model, the inferences using this model may be too biased and discourage the users from using the system, e.g., recommending a set of products that only remotely reflect the training data collected from the user. 
We can leverage this partial common interest and design algorithms that take the user's true data as input and compute the optimal training data to share with the learner to minimize the bias of the learned model.
But two can play this game!
The ML system may be aware that users know about its undesirable biases in learning, e.g., based on their previous experiences. Thus, it may try to interpret the input training data strategically to offset the user's modifications, e.g., by using its prior information about the user.

We propose novel algorithms that find the equilibria of this reasoning and detect the equilibria in which the user and the learner have enough common interest to make learning an accurate model possible.
We also provide efficient algorithms that find strategies for the user to modify the training data so that the resulting data reflects the true training data, to learn an accurate model, and convince the learner to deploy the least biased model possible in the setting.

%% file: 1.5-Related-Work.tex
\section{Related Work}
Our model is related to several active areas in strategic and robust machine learning, but it is distinct in both
\emph{what is manipulated} and \emph{how the interaction is modeled}. In our setting, the user privately observes a dataset
$D \sim \pi$ and sends a \emph{costless, unverifiable report} $M \in \mathcal{M}$ to influence the learner's choice of
hypothesis $h \in \mathcal{H}$. The learner then updates beliefs about $D$ and chooses a hypothesis that maximizes its
expected utility. Thus, the strategic object is not an individual feature vector or a corrupted training point, but the
\emph{reported dataset} itself, and the interaction is modeled as a Bayesian information-transmission game.

This differs from \emph{strategic classification}, where agents manipulate their own observable features at deployment time
in response to a fixed decision rule~\citep{hardt2016strategicclassification,bechavod2022information,cohen2024strategic,milli2019social,dong2018strategic}. There, the central question is how a classifier changes behavior when individuals
best-respond through costly feature manipulation. In contrast, in our model the user does not adapt features to a deployed
classifier; instead, the user strategically controls the \emph{information released before learning}, anticipating how the
learner will interpret the report and choose $h$.

It also differs from the literature on \emph{adversarial examples}, which studies worst-case test-time perturbations of an
input within a geometric budget such as an $\ell_p$ ball~\citep{szegedy2013intriguing,goodfellow2014explaining,madry2018towards}. That literature is primarily minimax and robustness-driven. By
contrast, our model is incentive-driven: the user chooses a report to maximize $U^r(D,h)$, the learner responds according
to $U^\ell(D,h)$, and equilibrium depends on beliefs and strategic signaling rather than norm-bounded perturbations.

Finally, our setting is distinct from \emph{data poisoning}, where an attacker modifies the training pipeline by injecting or
altering data to degrade the learned model~\citep{biggio2012poisoning,steinhardt2017certified,jagielski2018manipulating,mei2015using}. Poisoning treats the learner as training directly on corrupted samples. In our
model, the key friction is instead \emph{information asymmetry}: the learner knows only the prior $\pi$ and the received
report $M$, and must infer the hidden dataset through Bayes' rule. For this reason, our framework is closer in spirit to cheap talk and Bayesian persuasion with endogenous learning objectives~\citep{kamenica2011bayesianpersuasion,crawford1982cheaptalk} than to standard poisoning models. 

In summary, compared with these related areas, our framework focuses on \emph{strategic dataset reporting under conflicting
utilities}, where equilibrium behavior is governed by signaling, posterior beliefs, and the learner's downstream choice of
hypothesis.

%% file: 2-Framework.tex
\section{Model}
\label{sec:framework}
We study a strategic game between a user and a learner in the setting of supervised learning.

\paragraph{Private dataset.} Let $\mathcal X \subseteq \mathbb R^d$ be the feature space and $\mathcal Y$ the label space. A private dataset is $D=((\bm x_1,y_1),\ldots,(\bm x_n,y_n)) \in \mathcal D = (\mathcal X \times \mathcal Y)^n$, where the samples $(\bm x_i,y_i)$ may be interpreted as i.i.d. draws from an underlying data-generating environment. The user picks their private dataset from a prior
distribution $\pi$, which is known to the learner.

\begin{example}
\label{example:loan-scoring}
Consider a loan-scoring problem with feature space $\mathcal X=[0,1]^2$, where $\bm x_i=(x_{i1},x_{i2})$ records the normalized income and credit score of applicants. A simple private dataset is
$D=((\bm x_1,y_1),(\bm x_2,y_2),(\bm x_3,y_3))$
with $\bm x_1=(1,0)$, $\bm x_2=(0,1)$, $\bm x_3=(1,1)$, and
$\bm y=(0.2,0.2,0.8)^\top$ where $y_i$ represents the loan qualification score of the applicant with feature vector $\bm x_i$.
\end{example}

\paragraph{Dataset Report.}
For a private dataset $D$, the user submits a dataset report to the learner $M\in \mathcal{M}$, where $\mathcal M$ is an arbitrary report space. The dataset report may not be equivalent to $D$, e.g., to hide the user’s private dataset or to manipulate the learner. For instance, in Example~\ref{example:loan-scoring}, a report may keep labels fixed and alter feature values, e.g., change all credit and income features to $1$. A \textbf{user strategy} is a (stochastic) mapping from a private dataset to a dataset report $\sigma_r:\mathcal D \to \Delta(\mathcal M)$.

\underline{Reports are \emph{costless} and \emph{unverifiable}}: the learner observes $M$ but cannot directly verify whether it is a truthful description of the private dataset. 



\paragraph{Hypothesis.} Let $\mathcal H \subseteq \mathbb R^{\mathcal X}$ be a hypothesis class, where $\mathbb R^{\mathcal X} = \{h:\mathcal X \to \mathbb R\}$ -- for instance, linear functions $\mathcal H=\{h_{\bm w}:\mathcal X\to\mathbb R \mid h_{\bm w}(\bm x)=\bm x^\top \bm w,\ \bm w\in W\}$ for some feasible parameter set $\mathcal W\subseteq \mathbb R^d$. The learner's action is the choice of a hypothesis $h\in\mathcal H$,
or equivalently model parameters $\bm{w}\in \mathcal W$. A \textbf{learner strategy} is a mapping from the dataset report to a hypothesis, $\sigma_l:\mathcal M\to \Delta(\mathcal H)$. The learner uses $\sigma_l$ to interpret the private information behind the submitted report.


\paragraph{Utility Functions.} The utility functions of the user and the learner measure the desirability of the hypothesis for the user and the learner, respectively. The learner's and user's utility functions are denoted by $U^\ell(D,h)$ and $U^r(D,h)$, respectively. Both map $\mathcal D\times\mathcal H$ into $\mathbb R$. 
If there is a conflict of interest between the user and the learner, $U^r$ and $U^\ell$ take their maximum values for different values of $h$ for the same private dataset $D$. 


\begin{example}
\label{example:utility-conflict}
Continuing Example~\ref{example:loan-scoring}, let $\mathcal H=\{h:\bm x\mapsto \bm x^\top\bm w,\ \bm w\in\mathbb R^2\}$. Assume the learner's utility is the negative empirical squared loss:
$U^\ell(D,h)=-\hat L^\ell(D,\bm w)$,
where
$\hat L^\ell(D,\bm w)=\|\bm X\bm w-\bm y\|_2^2$.
Hence
$\bm w_\ell^*(D)\in\arg\max_{\bm w}U^\ell(D,\bm w)
=\arg\min_{\bm w}\hat L^\ell(D,\bm w)
=\left(\frac13,\frac13\right)$. Suppose the user prefers systematically more favorable loan scores, so that both income and credit score should receive larger weight parameters. 
Assume the user's utility has the same squared-loss form, but with a conflict bias vector
$\bm b=(0.1,0.1)^\top$ and 
$
U^r(D,h)= -\|\bm X(\bm w-\bm b)-\bm y\|_2^2.
$
Then,
$
\bm w_r^*(D)\in\arg\max_{\bm w}U^r(D,h)
=\left(\frac{13}{30},\frac{13}{30}\right)
$. Thus, for the same dataset $D$, the learner's and the user's utility functions are maximized for different ideal hypotheses. 
\end{example}


\paragraph{Common knowledge.}
We assume that the prior $\pi$ over $\mathcal D$, the report space $\mathcal M$,
and the utility functions $U^\ell$ and $U^r$ are common knowledge. This follows
the standard game-theoretic convention used in strategic classification, where
agents' costs/utilities and the learner's decision rule are model primitives
~\citep{hardt2016strategicclassification,cohen2024strategic,dong2018strategic}, as well as in adversarial learning and data poisoning, which explicitly specify the learner’s training rule and the adversary’s objective~\citep{biggio2012poisoning,steinhardt2017certified}.

\paragraph{Equilibria.} As explained in Section~\ref{sec:introduction}, the user and the learner use their available information to reason about the strategies of
the other party to maximize their utility function. We would like to identify whether this reasoning will converge to any eventual stable state in which the user and learner settle on a pair of
fixed strategies where neither can increase their utility by deviating from their strategies.   
Since the learner does not know the user's private dataset, we follow the approach used in the game theory literature and use the concept of {\it Bayesian equilibrium} to characterize the stable state of the user and learner reasoning \citep{kamenica2011bayesianpersuasion,crawford1982cheaptalk}. For ease of exposition, we present the equilibrium definition for the finite-support case, e.g., $\mathcal{D}$ is finite. In the continuous or infinite-support case, the corresponding sums are replaced by integrals. 

The \emph{belief of the learner} is a posterior distribution
$\Phi(\cdot \mid M)\in \Delta(\mathcal D)$, where for each report \(M\),
\(\Phi(D\mid M)\) is the posterior probability that the private dataset is \(D\).
Given a user strategy \(\sigma_r\), Bayes' rule requires $\Phi(D\mid M)
=
\frac{\sigma_r(M\mid D)\pi(D)}
{\sum_{D'\in\mathcal D}\sigma_r(M\mid D')\pi(D')}$
whenever the denominator is positive. For a fixed private dataset \(D\), the \textit{user's
expected utility} from sending report \(M\) is
\(\mathbb E_{h\sim\sigma_l(\cdot\mid M)}[U^r(D,h)]
=\sum_{h\in\mathcal H}U^r(D,h)\sigma_l(h\mid M)\). For a fixed report \(M\), the
\textit{learner's expected utility} from choosing hypothesis \(h\) is
\(\mathbb E_{D\sim\Phi(\cdot\mid M)}[U^\ell(D,h)]
=\sum_{D\in\mathcal D}U^\ell(D,h)\Phi(D\mid M)\).

\vspace{-1mm}
\begin{definition}
\label{definition:equilibrium}
An equilibrium for the learning game is a pair of user and learner strategies
\((\sigma_r(M\mid D),\sigma_l(h\mid M))\)
whenever \(\sigma_r(M\mid D)>0\),
\(M\in\arg\max_{M'}\mathbb E_{h\sim\sigma_l(\cdot\mid M')}[U^r(D,h)]\); and whenever
\(\sigma_l(h\mid M)>0\),
\(h\in\arg\max_{h'}\mathbb E_{D\sim\Phi(\cdot\mid M)}[U^\ell(D,h')]\).
\end{definition}
At equilibrium, the user sends reports that are optimal given how the learner interprets them, and the learner chooses hypotheses that are optimal given beliefs induced by those reports. 

Equilibria are stable states of mutual best response, \emph{so the same characterizations support either party}: a user can use our algorithms to find the optimal dataset report that steers the learner toward a less biased hypothesis, or a learner can use them to choose an optimal learning strategy. We adopt the user-side convention throughout for concreteness.

%% file: 3-Possibility-of-Influence.tex
\section{Possibility of Influence}
\label{sec:possibility-of-influence}
\vspace{-3mm}
\subsection{Influential Information Release}
\label{sec:influence}
An equilibrium $(\sigma_r,\sigma_\ell)$ is \emph{influential} if there exist two distinct reports $M,M' \in \mathcal{M}$, both sent with positive probability under $(\sigma_r,\pi)$, such that
the learner responds differently after the two reports: $\sigma_\ell(\cdot\mid M)\neq \sigma_\ell(\cdot\mid M')$. 
Otherwise, the equilibrium is \emph{non-influential}. A non-influential equilibrium always exists: the user sends the same report for every private dataset, so the report conveys no information, and the learner best responds to its prior. The central question is therefore not whether communication can collapse, but whether meaningful communication can survive despite the conflict of interest. In other words, can the user still transmit information that changes the learned hypothesis, even when the user strategically chooses what to reveal?

We say that a learning setting is \emph{influential} if it admits at least one influential equilibrium, and \emph{non-influential} if every equilibrium is non-influential.

\vspace{4mm}
\textit{Problem 1: Given the learning setting $(D, U^r, U^\ell)$, determine whether the setting is influential. }

\paragraph{Conflict over hypotheses.}
For a private dataset $D\in\mathcal D$, let
$\bm{w}_\ell^*(D)\in\arg\max_{\bm w\in\mathcal W}U^\ell(D,h_w)$
and
$\bm w_r^*(D)\in\arg\max_{\bm w\in\mathcal W}U^r(D,h_w)$
denote the learner's and user's ideal hypotheses, respectively. At a high level, the conflict between the learner and the user is a conflict over these ideal hypotheses. To capture this conflict in a tractable and interpretable way, we adopt a standard reduced-form assumption from the game-theoretic literature on strategic communication ~\citep{kamenica2011bayesianpersuasion,crawford1982cheaptalk}, in which the preferences of the two parties differ by a fixed bias vector. 

In particular, we assume that the user's preferred hypothesis is a shifted version of the learner's: $\bm w_r^*(D) = \bm{w}_\ell^*(D) + \bm b$,
where $\bm b \in \mathbb R^d$ encodes the direction and magnitude of the user's strategic bias. This specification captures a systematic misalignment in objectives while preserving the underlying dependence on the private dataset.
This motivates the reduced-form \textbf{ utility functions}
$\widetilde U^\ell(D,\bm w)=-\|\bm w-\bm{w}_\ell^*(D)\|_2^2$
and
$\widetilde U^r(D,\bm w)=-\|\bm w-(\bm w_\ell^*(D)+ \bm b)\|_2^2$,
where $\bm w\in\mathcal W$ is the learner's final hypothesis after observing the user's report.~\footnote{While we place the bias on the user's side, equilibrium depends only on the gap between the two ideal points, so our model applies when the bias instead resides with the learner.}
Each private dataset $D$ induces a learner-optimal hypothesis $\bm w_\ell^*(D)$. Therefore, the prior $\pi$ over datasets induces a corresponding distribution over hypotheses, which we denote by $\pi_{\bm w}$. Concretely, $\pi_{\bm w}$ is the distribution of $\bm w_\ell^*(D)$ when $D \sim \pi$.
Thus, although the primitive uncertainty is over datasets, the private information in our later results is the optimal hypothesis learned from the private dataset $\bm w_\ell^*(D)$.

\paragraph{Reports over hypotheses.}
Since reports are costless and the learner acts only on the induced hypothesis, two datasets inducing the same $\bm w_\ell^*$ are strategically equivalent. The utility-relevant content of a report $M\in\mathcal M$ is therefore the subset of $\mathcal W$ that it identifies. We use report $M$ and region $R\subseteq\mathcal W$ interchangeably: reporting $M$ communicates that $\bm w_\ell^*(D)\in R$, and the learner best-responds given the posterior over $R$. In practice, the user implements report $M$ by submitting any dataset $\tilde D$ such that $\bm w_\ell^*(\tilde D)\in R$; the specific choice is utility-irrelevant since the learner's response depends only on which region $\bm w_\ell^*(\tilde D)$ falls in.

\subsection{Single Dimension}
\label{sec:existence-single-dimention}

Similar to previous works ~\citep{cohen2024strategic,hardt2016strategicclassification}, we adopt the following setup for the user’s
information release problem. We are motivated by screening problems such as school admissions and
hiring, where an individual’s qualification level can be captured via a real-valued number, say, a test score. As we are in a strategic setting, users can “game” the learner by being intentionally vague about their test score, e.g., pass/fail categories.

\paragraph{Setup.} We keep the utility functions exactly as before and consider $d=1$. Therefore, $U^\ell(D,w)=-|w-w_\ell^*(D)|^2$ and  $U^r(D,w)=-|w-(w_\ell^*(D)+b)|^2$ with $b>0$. Here $w_\ell^*(D)$ is the optimal hypothesis learned from the private dataset, and $w$ is the final or deployed hypothesis. 

\begin{theorem}
\label{thm:existence-onde-dimension}
Suppose $w_\ell^*(D)\in[0,1]$ has a continuous prior $\pi_w$ with full support on $[0,1]$. For any $t\in(0,1)$, define $\Phi_L(t)=\mathbb E\!\left[w_\ell^*(D)\mid w_\ell^*(D)\in[0,t]\right]$ and $\Phi_H(t)=\mathbb E\!\left[w_\ell^*(D)\mid w_\ell^*(D)\in(t,1]\right]$.
Then learning is influential if and only if there exists $t \in (0,1)$ such that $t + b = \frac{\Phi_L(t) + \Phi_H(t)}{2}$.

\end{theorem}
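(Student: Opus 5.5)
The argument is the standard Crawford--Sobel reduction specialised to two intervals. Throughout write $\theta = w_\ell^*(D)$.

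\emph{Step 1: the learner's response and the user's single-crossing property.} Fix any equilibrium. After a report $M$, the learner's expected utility is $-\mathbb E[(w-\theta)^2\mid M]$. This is strictly concave in $w$, so the unique best response is the pure action $a(M)=\mathbb E[\theta\mid M]$. Mixing never occurs, and influence just means two on-path reports induce distinct actions. The user's utility $-(a-(\theta+b))^2$ satisfies single crossing in $(a,\theta)$. Hence for two induced actions $a_1<a_2$, type $\theta$ weakly prefers $a_2$ exactly when $\theta+b\ge (a_1+a_2)/2$. This shows that the set of types inducing each action is an interval, up to a null set of indifferent types, and that the action intervals are ordered. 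An influential equilibrium is therefore (almost surely) a partition equilibrium with $N\ge 2$ intervals $0=t_0<t_1<\dots<t_N=1$. Its actions are $a_k=\mathbb E[\theta\mid\theta\in(t_{k-1},t_k]]$, and at each interior boundary the indifference condition $t_k+b=(a_k+a_{k+1})/2$ holds. Full support of $\pi_w$ ensures the conditional means are strictly increasing across intervals. It also ensures $a_1<a_2$ for distinct intervals, and I would argue this forces each boundary to lie strictly inside $(0,1)$.

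\emph{Step 2 (if).} Suppose $t\in(0,1)$ satisfies $t+b=(\Phi_L(t)+\Phi_H(t))/2$. I construct the equilibrium directly. Types in $[0,t]$ send $M_L$ and types in $(t,1]$ send $M_H$. The learner plays $\Phi_L(t)$ and $\Phi_H(t)$ respectively, and after any off-path report plays, say, $\Phi_L(t)$. Since $\Phi_L(t)\le t<\Phi_H(t)$, the two actions differ. Single crossing together with the indifference at $t$ shows that types $\theta\le t$ prefer $\Phi_L$ and types $\theta>t$ prefer $\Phi_H$. Both reports have positive probability by full support, so the equilibrium is influential.

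\emph{Step 3 (only if), the main obstacle.} From an arbitrary influential equilibrium with $N\ge2$ intervals we must extract a \emph{two}-interval solution. I would first try a continuity/intermediate-value argument on
\[
g(t)=\tfrac{\Phi_L(t)+\Phi_H(t)}{2}-t-b .
\]
By full support and continuity of the prior, $g$ is continuous on $(0,1)$. As $t\to1$, $\Phi_H(t)\to1$ and $\Phi_L(t)\to\mathbb E\theta<1$, so $g(t)\to(\mathbb E\theta-1)/2-b<0$. It therefore suffices to find one $t$ with $g(t)\ge 0$. Take the first boundary $t_1$ of the given equilibrium, so $t_1+b=(a_1+a_2)/2$ with $a_1=\Phi_L(t_1)$ and $a_2=\mathbb E[\theta\mid\theta\in(t_1,t_2]]\le\Phi_H(t_1)$. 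This gives
\[
g(t_1)=\tfrac{\Phi_H(t_1)-a_2}{2}\ge 0 ,
\]
and the intermediate value theorem then yields the desired $t\in[t_1,1)$. The case $g(t_1)=0$ already gives $t=t_1$.

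The delicate points are the handling of mixing, which Step 1 rules out by strict concavity, and measure-zero indifferent types. A further issue is justifying that the equilibrium partition is genuinely interval-based with $t_1\in(0,1)$ and that $\Phi_L,\Phi_H$ are continuous there; both rely on full support and on the absence of atoms in $\pi_w$.
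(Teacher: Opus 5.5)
Your proposal is correct and follows the paper's proof essentially step for step: the two-interval construction for sufficiency, and for necessity the ordered-partition structure, the bound $g(t_1)\ge 0$ from $\Phi_H(t_1)\ge a_2$, and the intermediate value theorem applied to $g$. The only difference is the upper endpoint. You use $\lim_{t\to 1} g(t)=(\mathbb E\theta-1)/2-b<0$, which relies on $b>(\mathbb E\theta-1)/2$ (guaranteed here since $b>0$), whereas the paper uses the mirror bound $g(t_{n-1})\le 0$ at the last cutpoint, which holds for any sign of $b$ and places the root in $[t_1,t_{n-1}]$.
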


\paragraph{Statistics oracle.}
Since we work with an arbitrary prior $\pi_w$, we assume access to an oracle that,
given a measurable region in the parameter space $R\subseteq\mathcal W$, returns its prior mass, posterior mean, and posterior second moment (Algorithm~\ref{algo:posterior-oracle}).
The region $R$ need not be continuous. For certain priors considered later, these statistics
are available in closed form, so the oracle abstraction is not needed. When $d>1$,
we use the same oracle with the bold notation $\pi_{\bm w}$ and $\bm w_\ell^*(D)$.

\begin{algorithm}[t]
\footnotesize
\caption{$\mathrm{Oracle}(\pi_w,R)$}
\label{algo:posterior-oracle}
\begin{algorithmic}[1]
\State \textbf{Input:} prior $\pi_w$, measurable region $R\subseteq\mathcal W$ with positive prior mass
\State \textbf{Output:} prior mass $p(R)$, posterior mean $\Phi(R)$, posterior second moment $\Psi(R)$
\State $p(R) \leftarrow \Pr_{\pi_w}\!\bigl(w_\ell^*(D)\in R\bigr)$
\State $\Phi(R) \leftarrow \mathbb E_{\pi_w}\!\left[w_\ell^*(D)\mid w_\ell^*(D)\in R\right]$
\State $\Psi(R) \leftarrow \mathbb E_{\pi_w}\!\left[w_\ell^*(D)w_\ell^*(D)^\top\mid w_\ell^*(D)\in R\right]$
\State \Return $(p(R),\Phi(R),\Psi(R))$
\end{algorithmic}
\end{algorithm}

\paragraph{Algorithm.}
Theorem~\ref{thm:existence-onde-dimension} reduces influence detection to finding a root of $g(b,t)
=
\frac{\Phi(R(t))+\Phi(R'(t))}{2}
-t-b,$
where \(R(t)=[0,t]\) and \(R'(t)=(t,1]\). Each evaluation of \(g\) requires two oracle calls, one for each region. Therefore, if a solver evaluates \(g\) at \(K\) candidate points, the oracle complexity is \(2K\).

\begin{remark}
When the cutoff is computed numerically, the correctness of the algorithm depends on the guarantees of the solver and the error in the
posterior means returned by the oracle. For specific priors, the cutoff condition can be
solved analytically, as shown next.
\end{remark}

\begin{corollary}
\label{corollary:uniform-one-dimentional}
If the prior $\pi_w$ is uniform on $[0,1]$, learning is influential if and only if $b<\frac{1}{4}.$
\end{corollary}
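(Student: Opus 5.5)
We specialize Theorem~\ref{thm:existence-onde-dimension} to the uniform prior on $[0,1]$, where the two conditional means have closed forms. For $t\in(0,1)$, the conditional law of $w_\ell^*(D)$ given $w_\ell^*(D)\in[0,t]$ is uniform on $[0,t]$, so $\Phi_L(t)=t/2$. Likewise, $\Phi_H(t)=(1+t)/2$. The cutoff condition $t+b=\frac{\Phi_L(t)+\Phi_H(t)}{2}$ then becomes
\[
t+b=\frac{t/2+(1+t)/2}{2}=\frac{2t+1}{4}=\frac{t}{2}+\frac14,
\]
which is equivalent to $t=\frac12-2b$.

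By Theorem~\ref{thm:existence-onde-dimension}, learning is influential if and only if this $t$ lies in the open interval $(0,1)$. Since $b>0$ by the standing assumption, $t=\frac12-2b<\frac12<1$ automatically. The only binding constraint is $t>0$, which holds if and only if $b<\frac14$. This gives both directions at once.
\begin{itemize}
\item If $b<\frac14$, the cutoff $t=\frac12-2b\in(0,\frac12)$ solves the equation, so the setting is influential.
\item If $b\ge\frac14$, the unique solution satisfies $t\le 0$, so no $t\in(0,1)$ works and every equilibrium is non-influential.
\end{itemize}

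The argument reduces to invoking Theorem~\ref{thm:existence-onde-dimension} and a linear solve. The only point needing care is the boundary case $b=\frac14$: there $t=0$, which is excluded because the theorem quantifies over the open interval $(0,1)$. So the inequality $b<\frac14$ is strict, matching the statement.
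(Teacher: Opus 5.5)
Your proof is correct and follows the paper's argument exactly. Like the paper, you compute $\Phi_L(t)=t/2$ and $\Phi_H(t)=(1+t)/2$, solve the cutoff equation to get $t=\frac12-2b$, and observe that this lies in $(0,1)$ precisely when $b<\frac14$ (using $b>0$), with the boundary case $b=\frac14$ correctly excluded by the open interval.
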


 We can check this condition in constant time. Informally, when the bias is sufficiently small, the user's preferences are sufficiently aligned with the learner's so that partial information transmission can be sustained: low parameters prefer to induce lower hypotheses, and high parameters prefer to induce higher ones, yielding an influential equilibrium. In contrast, when $b$ gets large, all parameters prefer to induce a higher hypothesis regardless of the dataset. All missing proofs are in Appendix~\ref{sec:appendix-possibility}.

\subsection{Multiple Dimensions}
We now extend the earlier setup to higher dimensions $d > 1$. Each private dataset $D \sim \pi$ results in a learner-optimal parameter \(\bm w_\ell^*(D)\in\mathbb R^d\), and hence the dataset prior induces a prior \(\pi_{\bm w}\) over optimal parameters. We keep the same utility functions as before and give a sufficient condition.

\begin{theorem}
\label{thm:spherical-md-existence}
Suppose the prior \(\pi_{\bm w}\) is non-degenerate with finite first moment, and is spherically symmetric around its mean \(\bar{\bm w}:=\mathbb E_{\pi_{\bm w}}[\bm w_\ell^*(D)]\), learning is influential for every \(\bm b\in\mathbb R^d\).
\end{theorem}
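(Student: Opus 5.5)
Our plan is to build an explicit two-message influential equilibrium by splitting the parameter space with a hyperplane orthogonal to the bias. If $\bm b=\bm 0$, full revelation is an equilibrium: the learner plays $\bm w_\ell^*(D)$, which is also the user's ideal point. Since $\pi_{\bm w}$ is non-degenerate, at least two reports induce distinct hypotheses, so the equilibrium is influential. For $\bm b\neq\bm 0$, the bias has a direction, so there is a \emph{unit vector $\bm u$ orthogonal to $\bm b$} whenever $d\ge 2$. We split along such a $\bm u$ through the mean, defining
\begin{equation*}
R_+=\{\bm w:\bm u^\top(\bm w-\bar{\bm w})>0\},\qquad R_-=\{\bm w:\bm u^\top(\bm w-\bar{\bm w})\le 0\}.
\end{equation*}
The user reports which half-space contains $\bm w_\ell^*(D)$. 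Under the quadratic loss $\widetilde U^\ell$, the learner best-responds to report $R_\pm$ with the posterior mean $\Phi(R_\pm)$.

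We then use spherical symmetry around $\bar{\bm w}$ to locate the two posterior means. Let $\bm z=\bm w_\ell^*(D)-\bar{\bm w}$. Every reflection $\bm v\mapsto \bm v-2(\bm e^\top\bm v)\bm e$ with $\bm e\perp\bm u$ preserves both the law of $\bm z$ and the sign of $\bm u^\top\bm z$. So the conditional mean of $\bm z$ on $R_\pm$ has zero component along each such $\bm e$, and it lies on the line spanned by $\bm u$. This gives $\Phi(R_\pm)=\bar{\bm w}\pm c\,\bm u$ with $c=\mathbb E[\bm u^\top\bm z\mid \bm u^\top\bm z>0]$. The reflection $\bm z\mapsto-\bm z$ gives the symmetric $-c$ on $R_-$. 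Finiteness of this expectation follows from the finite first moment.

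Next we need $c>0$ and both messages sent with positive probability. If $\Pr(\bm u^\top\bm z>0)=0$, symmetry also forces $\Pr(\bm u^\top\bm z<0)=0$, so $\bm z$ lies in $\bm u^\perp$. But a spherically symmetric law concentrated on a hyperplane must be a point mass at $\bm 0$, contradicting non-degeneracy. Hence $p(R_+)=p(R_-)=\tfrac12$ up to any mass on the boundary hyperplane, which symmetry forces to be null or harmless, and $c>0$.

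The user's incentive check is the crux, and the choice $\bm u\perp\bm b$ is what makes it work. A type $\bm w^*$ compares $\|\Phi(R_\pm)-\bm w^*-\bm b\|^2$. The difference of these two squared distances equals $-4c\,\bm u^\top(\bm w^*+\bm b-\bar{\bm w})=-4c\,\bm u^\top(\bm w^*-\bar{\bm w})$, using $\bm u^\top\bm b=0$. So every type in $R_+$ weakly prefers $R_+$ and every type in $R_-$ prefers $R_-$; boundary types are indifferent, so sending $R_-$ is optimal for them. Off-path reports can be assigned any belief, say the prior, which makes them unprofitable. The two on-path reports induce distinct hypotheses, so the equilibrium is influential.

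\textbf{Main obstacle.} The expected obstacle is $d\ge2$ itself: the construction needs a direction orthogonal to $\bm b$, which is exactly why the statement sits in the multi-dimensional section. For $d=1$, Corollary~\ref{corollary:uniform-one-dimentional} shows the claim can fail, so we would state the proof for $d\ge 2$ and handle $\bm b=\bm 0$ separately as above. The remaining technical care is making the symmetry argument for the conditional mean rigorous under only a finite first moment.
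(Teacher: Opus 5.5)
Your construction is the paper's: cut through $\bar{\bm w}$ with a hyperplane whose unit normal $\bm u$ is orthogonal to $\bm b$, place the posterior means at $\bar{\bm w}\pm c\,\bm u$ by symmetry, and use $\bm u^\top\bm b=0$ to make the user's indifference set coincide with the cut. Your reflection argument for the conditional means is more careful than the paper's appeal to symmetry. You are also right that $d\ge 2$ is needed; the paper's proof uses $d>1$ too. However, two of your side claims are false as written.

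First, off-path reports cannot be given the prior as belief. With a null cut, $\bar{\bm w}$ is the midpoint of $\Phi(R_+)$ and $\Phi(R_-)$. For a type $\bm w^*$, the loss under response $\bar{\bm w}+c\,\bm u$ exceeds the loss under $\bar{\bm w}$ by $c^2-2c\,\bm u^\top(\bm w^*-\bar{\bm w})$, using $\bm u^\top\bm b=0$. Hence every type with $|\bm u^\top(\bm w^*-\bar{\bm w})|<c/2$ strictly gains by sending an off-path report. This slab has positive prior mass, because the projection of a non-degenerate spherically symmetric law in $d\ge 2$ charges every interval $(0,\epsilon)$. The fix is to have the learner answer every off-path report with $\Phi(R_-)$, i.e., the belief induced by $R_-$. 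Such deviations are then no better than the on-path deviation you already ruled out. The paper sidesteps this by identifying reports with the two cells.

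Second, the boundary is neither forced to be null nor harmless. For a spherically symmetric law in $d\ge2$, the hyperplane $\{\bm u^\top(\bm w-\bar{\bm w})=0\}$ carries exactly the atom $p_0=\Pr(\bm w_\ell^*(D)=\bar{\bm w})$. A prior that is merely not a point mass can have $p_0>0$, e.g., half its mass at $\bar{\bm w}$ and half uniform on a sphere around it.

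\begin{itemize}
\item Placing that atom in $R_-$ gives $\Phi(R_-)=\bar{\bm w}-c'\bm u$ with $q=(1-p_0)/2$ and $c'=cq/(q+p_0)<c$.
\item The indifference hyperplane then moves to $\bm u^\top(\bm w-\bar{\bm w})=(c-c')/2>0$.
\item The positive-mass set of $R_+$ types below it strictly prefers $R_-$.
\end{itemize}

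You can repair this by having the type at $\bar{\bm w}$ send $R_+$ and $R_-$ with probability $1/2$ each; stochastic user strategies are allowed. This restores symmetric means and leaves that type indifferent. Alternatively, read non-degeneracy as atomlessness, in which case the cut is null. The paper's proof makes the same silent assumption.

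Finally, your separate $\bm b=\bm 0$ case is unnecessary, since any $\bm u$ is orthogonal to $\bm 0$. Moreover, full revelation does not meet the paper's definition of influence for an atomless prior, because each individual report is then sent with probability zero.
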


This applies, for example, to \(\mathcal N(\bar{\bm w},\sigma^2 I_d)\) and to uniform distributions on balls or spheres centered at \(\bar{\bm w}\). Spherical symmetry lets us split the prior through its mean along any direction orthogonal to \(\bm b\), yielding two regions with distinct posterior means that are unaffected by the bias. The finite first-moment assumption ensures that \(\bar{\bm w}\) exists, and non-degeneracy rules out point masses, so the equilibrium is influential. For general multidimensional priors, however, there is no scalar cutoff equation as in one dimension, making the problem hard. We extend this discussion in Section~\ref{sec:user-optimal-multiple}. 

%% file: 4-Optimal-Influence.tex
\vspace{-2mm}
\section{The User's Optimal Information Release Problem}
\label{sec:optimal-information-release}

In this section, we study the user’s optimization problem. 
Given the learner’s strategy, the user seeks to choose a dataset report \(M \in \mathcal M\) that maximizes their utility \(U^r\). A naive algorithm would evaluate the user’s utility under every feasible dataset report and output a maximizer. Such an approach is computationally inefficient in general, because the report space \(\mathcal M\) can be exponentially large or even infinite. Our goal is therefore to identify structures in the problem that yield efficient algorithms for computing an optimal dataset report. We provide both positive and negative results.

\vspace{4mm}
\textit{Problem 2: Given the learning setting $(D, U^r, U^\ell)$, find the report that maximizes the user's utility. }


\subsection{Single Dimension}
\label{sec:single-dimension-optimal}
We consider the quadratic utility functions as in Theorem~\ref{thm:existence-onde-dimension} for \(d=1\). Let
$w_\ell^*(D)\in \mathcal W=[\underline w,\bar w]\subseteq \mathbb R$
and let \(\pi_w\) denote the prior over \(w_\ell^*(D)\).

\begin{lemma}
\label{lem:ordered-regions-1d}
When \(d=1\), after identifying dataset reports that result in the same learner hypothesis, the
equilibrium dataset reports are ordered by \(w_\ell^*(D)\).
\end{lemma}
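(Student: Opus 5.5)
We prove the lemma with the standard Crawford--Sobel monotonicity argument, using the reduced-form quadratic utilities $U^r(D,w)=-(w-(w_\ell^*(D)+b))^2$ and $U^\ell(D,w)=-(w-w_\ell^*(D))^2$. Write $\theta=w_\ell^*(D)$ for the user's type.

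\textbf{Step 1: the learner's response to each report is a single point.} Since $U^\ell$ is strictly concave in $w$, the learner's unique best response to any report $M$ on path is the posterior mean $\Phi(M)=\mathbb E[\theta\mid M]$. So every equilibrium $\sigma_l(\cdot\mid M)$ is a point mass at $a(M):=\Phi(M)$. We then identify reports that induce the same action $a$, as the lemma allows. After this, the equilibrium is described by a finite or countable set of distinct induced actions $a_1<a_2<\cdots$. For each action $a_k$ we let $R_k$ be the set of types that send some report inducing $a_k$ with positive probability.

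\textbf{Step 2: single crossing of the user's preferences.} Fix two actions $a<a'$ and compute
\[
U^r(\theta,a')-U^r(\theta,a)=(a'-a)\bigl(2(\theta+b)-a-a'\bigr).
\]
This difference is strictly increasing in $\theta$. Hence there is a single threshold $\theta^*=\frac{a+a'}{2}-b$ with the following property: types $\theta<\theta^*$ strictly prefer $a$, types $\theta>\theta^*$ strictly prefer $a'$, and only the type $\theta=\theta^*$ is indifferent. By the user's optimality condition in Definition~\ref{definition:equilibrium}, every type in $R_k$ weakly prefers $a_k$ over every other induced action $a_j$.

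\textbf{Step 3: conclude that the regions are ordered.} Suppose $\theta\in R_j$ and $\theta'\in R_k$ with $a_j<a_k$, and suppose for contradiction that $\theta>\theta'$. Type $\theta$ weakly prefers $a_j$, so $\theta\le\theta^*_{jk}$. Type $\theta'$ weakly prefers $a_k$, so $\theta'\ge\theta^*_{jk}$. Then $\theta'\ge\theta^*_{jk}\ge\theta>\theta'$, a contradiction. Therefore $\sup R_j\le\inf R_k$ whenever $a_j<a_k$. So the report regions are intervals, possibly overlapping only at single boundary points, and they are ordered by $w_\ell^*(D)$ in the same order as the induced hypotheses.

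Each region is convex as well as ordered. If $\theta_1<\theta_2<\theta_3$ with $\theta_1,\theta_3\in R_k$ and $\theta_2\in R_j$ for some $j\neq k$, applying the ordering to the pair $(\theta_2,\theta_3)$ or to the pair $(\theta_1,\theta_2)$, depending on whether $a_j<a_k$ or $a_j>a_k$, gives a contradiction.

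\textbf{Main obstacle.} The delicate points are the boundary types and mixing. An indifferent type $\theta^*$ may randomize between two adjacent actions. Under a continuous prior this is a measure-zero set, so it does not affect the ordering up to null sets. With atoms, however, we must state the ordering as $\sup R_j\le\inf R_k$ rather than as a strict separation. A second point is that off-path reports need beliefs that make deviating to them unprofitable. Since we only claim the ordering among equilibrium (on-path) reports, we can ignore off-path beliefs.
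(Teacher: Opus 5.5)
Your proof is correct and follows the paper's argument: both rest on the strictly increasing gain $(a'-a)\bigl(2(\theta+b)-a-a'\bigr)$ and derive a contradiction from a higher type inducing a lower hypothesis, and your Step 1 makes explicit the point-mass learner response that the paper uses implicitly. In your optional convexity remark the pairs are swapped: when $a_j<a_k$ the contradiction comes from $(\theta_1,\theta_2)$, and when $a_j>a_k$ it comes from $(\theta_2,\theta_3)$.
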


Lemma~\ref{lem:ordered-regions-1d} says that any one-dimensional equilibrium
can be represented by ordered report regions. Thus, if an equilibrium has \(n\)
distinct reports, we can write its regions as a partition $\underline w=t_0<t_1<\cdots<t_n=\bar w .$
The next lemma shows why our bounded-support assumption on the prior is useful: it gives a
finite upper bound on the number of partitions, and hence on the number of reports
that we should consider to find the user-optimal dataset report.

\begin{lemma}
\label{lem:finite-bound-1d}
Under a continuous full-support prior $\pi_w$ on \([\underline w,\bar w]\) and $b>0$, every equilibrium has at most
$N_{\max}
=
\left\lfloor 1+\frac{\bar w-\underline w}{2b}\right\rfloor
$
distinct dataset reports.
\end{lemma}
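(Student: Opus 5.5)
By Lemma~\ref{lem:ordered-regions-1d}, any equilibrium with $n$ distinct reports can be written through a partition $\underline w=t_0<t_1<\cdots<t_n=\bar w$. Report $k$ corresponds to the interval $(t_{k-1},t_k]$ and induces the learner action $a_k=\Phi((t_{k-1},t_k])$. Under quadratic loss the learner's best response to a report is the posterior mean, so $a_k$ is exactly this conditional mean. Since the prior is continuous with full support, every interval has positive mass. Hence each $a_k$ lies strictly inside its interval, $t_{k-1}<a_k<t_k$, and the actions are strictly increasing.

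The first step is to extract the boundary indifference conditions. A user of type $t_k$ sits on the boundary between regions $k$ and $k+1$, for $1\le k\le n-1$. Continuity of the user's utility in the type, together with optimality of reports on both sides, forces this type to be indifferent between $a_k$ and $a_{k+1}$. With $U^r=-(w-(t_k+b))^2$ this indifference reads $t_k+b=\tfrac{a_k+a_{k+1}}{2}$, the same cutoff equation as in Theorem~\ref{thm:existence-onde-dimension}. I would justify indifference by a limit argument: types just below $t_k$ weakly prefer $a_k$, types just above weakly prefer $a_{k+1}$, and we pass to the limit.

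The second step turns indifference into a lower bound on action gaps. Combining $a_k<t_k$ with the midpoint equation gives $a_{k+1}=2t_k+2b-a_k>t_k+2b$. Together with $a_{k+1}<t_{k+1}$ this yields $t_{k+1}-t_k>2b$ for each $k$ with $1\le k\le n-2$. So every interior interval is longer than $2b$.

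Summing these interior lengths gives
\[
\bar w-\underline w \;\ge\; t_{n-1}-t_1 \;>\; 2b(n-2).
\]
This is slightly weaker than needed: it only yields $n<2+\tfrac{\bar w-\underline w}{2b}$, whereas the claim is $n\le 1+\tfrac{\bar w-\underline w}{2b}$.

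To get the sharper bound I would work with the actions rather than the interval lengths. The identity $a_{k+1}-a_k=2(t_k-a_k)+2b>2b$ holds for every $k=1,\dots,n-1$. Summing gives $a_n-a_1>2b(n-1)$. Since both actions lie in $[\underline w,\bar w]$, we have $a_n-a_1\le \bar w-\underline w$. Therefore $n-1<\tfrac{\bar w-\underline w}{2b}$, hence $n\le 1+\tfrac{\bar w-\underline w}{2b}$, and since $n$ is an integer, $n\le N_{\max}$.

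The main subtlety is the indifference step. It needs care at endpoints and because of possible ties at a boundary type. Because the prior has no atoms, assigning a boundary type to either side does not change any posterior mean, so the argument is unaffected.
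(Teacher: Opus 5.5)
Your argument is correct, and its skeleton matches the paper's: ordered intervals from Lemma~\ref{lem:ordered-regions-1d}, indifference of the boundary type at each interior cutpoint, and the fact that each posterior mean lies inside its own interval. The only difference is the final telescoping step. You sum the action gaps $a_{k+1}-a_k=2(t_k-a_k)+2b>2b$ and use $a_n-a_1\le\bar w-\underline w$. The paper instead sums interval lengths: it inserts $\Phi(t_{i-1},t_i)\le t_i$ and $\Phi(t_i,t_{i+1})\le t_{i+1}$ into the midpoint equation to get $t_{i+1}-t_i\ge 2b$ for $i=1,\dots,n-1$, hence $(n-1)2b\le t_n-t_1\le\bar w-\underline w$.

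Your remark that the interval-length route is ``slightly weaker than needed'' comes from a miscount. Your own inequality $t_{k+1}-t_k>2b$ holds for every $k=1,\dots,n-1$, including $k=n-1$, because that case only uses $a_n<t_n=\bar w$, which is true. That gives $n-1$ gaps, not $n-2$, so the interval-length route already yields the sharp bound. The detour through actions is therefore valid but unnecessary. The two versions are equally elementary; yours bounds the spread of the induced hypotheses, the paper's bounds the spread of the cutpoints.
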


The two lemmas reduce the problem of finding the user's optimal dataset report to
a finite search over ordered partitions. Lemma~\ref{lem:ordered-regions-1d}
shows that equilibrium regions can be ordered, and
Lemma~\ref{lem:finite-bound-1d} bounds the number of partitions that need to be considered.

\begin{theorem}
\label{theorem:1d-feasibility}
Assume \(\pi_w\) is continuous with full support on
\([\underline w,\bar w]\), and let \(b>0\). A feasible one-dimensional equilibrium
is any partition \(\underline w=t_0<\cdots<t_n=\bar w\), with \(n\le N_{\max}\),
satisfying
\begin{align}
t_i
&=\frac{\Phi(t_{i-1},t_i)+\Phi(t_i,t_{i+1})}{2}-b,
\qquad i=1,\ldots,n-1 ,
\label{eq:cutpoint-system}\\
\min_{\substack{\underline w=t_0<\cdots<t_n=\bar w\\ n\le N_{\max}}}
&\sum_{i=1}^n
p(t_{i-1},t_i)
\left(\Psi(t_{i-1},t_i)-\Phi(t_{i-1},t_i)^2\right)
\quad
\mathrm{s.t.}\ \eqref{eq:cutpoint-system}.
\label{eq:user-optimal-objective}
\end{align}
The user-optimal equilibrium is any feasible partition solving
\eqref{eq:user-optimal-objective}. Moreover, if
\(w_\ell^*(D)\in[t_{i-1},t_i]\), then the user-optimal dataset report is \(M^*(D)=[t_{i-1},t_i]\).
\end{theorem}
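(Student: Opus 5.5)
We establish three facts in turn: every equilibrium is an ordered partition satisfying \eqref{eq:cutpoint-system} with \(n\le N_{\max}\); every such partition is an equilibrium; and the user's ex ante expected utility at any equilibrium equals minus the objective in \eqref{eq:user-optimal-objective} minus the constant \(b^2\). Together these show that minimizing \eqref{eq:user-optimal-objective} over feasible partitions selects the user-optimal equilibrium, and that the report prescribed by that partition is \(M^*(D)=[t_{i-1},t_i]\) for \(w_\ell^*(D)\in[t_{i-1},t_i]\).

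\emph{Equilibria are ordered partitions satisfying \eqref{eq:cutpoint-system}.} Start from an arbitrary equilibrium. By Lemma~\ref{lem:ordered-regions-1d}, after merging reports that induce the same hypothesis, the report regions are intervals \([t_{i-1},t_i]\) ordered by \(w_\ell^*(D)\). Lemma~\ref{lem:finite-bound-1d} then gives \(n\le N_{\max}\).

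The learner's best response under quadratic loss is the posterior mean, so the induced actions are \(a_i=\Phi(t_{i-1},t_i)\). These are strictly increasing in \(i\) by full support.

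For a user with \(w=w_\ell^*(D)\), the utility \(-(a-(w+b))^2\) compares \(a_i\) and \(a_{i+1}\) according to which is closer to \(w+b\). Because regions are contiguous, a user at \(w<t_i\) weakly prefers \(a_i\), and a user at \(w>t_i\) weakly prefers \(a_{i+1}\). The boundary type \(t_i\) must therefore be exactly indifferent, by continuity of the utility in \(w\). Indifference means \(t_i+b=(a_i+a_{i+1})/2\), which is \eqref{eq:cutpoint-system}.

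\emph{Partitions satisfying \eqref{eq:cutpoint-system} are equilibria.} Take any partition satisfying \eqref{eq:cutpoint-system}. Let the user report the region containing \(w\), and let the learner play the posterior mean on path. Off path, assign the posterior belief that triggers \(a_1\); this is consistent with Bayes' rule, which places no restriction on unsent reports.

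Indifference at each \(t_i\), together with single crossing of the quadratic in \((w,a)\), implies that a type in region \(i\) prefers \(a_i\) to every other \(a_j\). The argument is a chain: the preference between \(a_j\) and \(a_{j+1}\) flips exactly at \(t_j\), and the \(t_j\) are monotone in \(j\). So no user type gains by deviating, and the partition is an equilibrium.

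\emph{The user's value equals the objective.} Decompose \(\mathbb E[(a_i-w-b)^2\mid R_i]\) as \(\mathrm{Var}(w\mid R_i)+b^2\), using \(a_i=\mathbb E[w\mid R_i]\). Summing with weights \(p(R_i)\), the user's ex ante expected utility is \(-\sum_i p_i(\Psi_i-\Phi_i^2)-b^2\). Since \(b^2\) does not depend on the partition, maximizing the user's utility over equilibria is equivalent to the minimization \eqref{eq:user-optimal-objective}, restricted by the two steps above to feasible partitions. The statement about \(M^*(D)\) then follows from the user strategy defined in the second step.

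\emph{Main obstacle.} The delicate step is the full equivalence in the first two steps, and in particular showing that boundary types are indifferent rather than merely weakly preferring a region. This requires strict monotonicity of the \(a_i\) and continuity of the prior, both supplied by the full-support assumption. Ties at the cutpoints themselves have measure zero and do not affect the objective.
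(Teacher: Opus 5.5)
Your proposal is correct and follows essentially the same route as the paper's proof: you reduce to ordered partitions via Lemma~\ref{lem:ordered-regions-1d}, take the posterior mean as the learner's best response, derive \eqref{eq:cutpoint-system} from boundary indifference, extend adjacent incentive compatibility to all deviations by single crossing, and write the user's loss as posterior variance plus the constant $b^2$. Your explicit chain argument for non-adjacent deviations, and your choice of off-path beliefs, make rigorous two points that the paper only asserts or leaves implicit.
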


Informally, every equilibrium is a monotone partition of
\([\underline w,\bar w]\). For any such partition, the learner responds on each
partition with the posterior mean of \(w_\ell^*(D)\). Under the quadratic user
utility, the user's expected loss from a report region is the posterior
variance within that region plus the fixed bias cost \(b^2\). Therefore, a
finer partition weakly provides higher utility to the user because it weakly reduces
within-region uncertainty. Hence, the user's optimization problem reduces to
finding the finest feasible partition and then releasing information about the unique region
that contains \(w_\ell^*(D)\).

\begin{corollary}
\label{cor:finite-prior-dp}
Suppose \(\pi_w\) has finite support
\(w^{(1)}<\cdots<w^{(m)}\). A feasible finite-support equilibrium is a partition
of the ordered support into contiguous blocks
$
R_i=\{w^{(t_{i-1}+1)},\ldots,w^{(t_i)}\},
0=t_0<t_1<\cdots<t_k=m .
$ For adjacent blocks \(R_i\) and \(R_{i+1}\), the cutpoint equation
\eqref{eq:cutpoint-system} is replaced by
\begin{equation}
w^{(t_i)}
\le
\frac{\Phi(R_i)+\Phi(R_{i+1})}{2}-b
\le
w^{(t_i+1)} .
\label{eq:block-ic}
\end{equation}
\end{corollary}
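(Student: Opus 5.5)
The plan is to re-derive the two equilibrium conditions of Theorem~\ref{theorem:1d-feasibility} for a prior supported on the finite set $w^{(1)}<\cdots<w^{(m)}$. First, since Lemma~\ref{lem:ordered-regions-1d} did not rely on continuity, I will reuse its argument to restrict attention to partitions of the ordered support.
\begin{itemize}
\item Fix an equilibrium with distinct induced hypotheses $a_1<\cdots<a_k$.
\item Each type $w$ sends a report that induces the hypothesis nearest to $w+b$. This holds because $\widetilde U^r$ is a negative squared distance.
\item The preference between $a_i$ and $a_{i+1}$ is governed by a single threshold: $w+b$ versus $(a_i+a_{i+1})/2$.
\end{itemize}
Hence the set of types inducing $a_i$ is an interval of the support, i.e., a contiguous block $R_i$. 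At a type exactly on the threshold, either report may be chosen; after relabeling, this still gives contiguous blocks. The learner's best response under quadratic $\widetilde U^\ell$ is the posterior mean, so $a_i=\Phi(R_i)$, and distinctness forces $\Phi(R_1)<\cdots<\Phi(R_k)$. The cutoffs $t_i$ are then the indices of the last element of each block.

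Second, I will translate incentive compatibility into \eqref{eq:block-ic}. Let $c_i=\frac{\Phi(R_i)+\Phi(R_{i+1})}{2}-b$.
\begin{itemize}
\item Types in $R_i$ weakly prefer $a_i$ to $a_{i+1}$ iff $w+b\le (a_i+a_{i+1})/2$, i.e., $w\le c_i$. It suffices to check the largest such type, giving $w^{(t_i)}\le c_i$.
\item Symmetrically, types in $R_{i+1}$ prefer $a_{i+1}$ iff $w\ge c_i$. The binding type is the smallest one, giving $c_i\le w^{(t_i+1)}$.
\end{itemize}
The continuous equality \eqref{eq:cutpoint-system} becomes this sandwich inequality because the indifferent cutoff can lie anywhere in the gap between consecutive support points.

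Third, I need adjacent constraints to imply global ones. For nonadjacent blocks $j>i+1$, the user's preference $|w+b-a|$ is single-peaked in $a$. The $a_j$ are increasing, and adjacent comparisons hold for every type. Together these give the global comparison: for $w\in R_i$ we have $w+b\le c_i+b$, so $a_i$ beats $a_{i+1}$, which beats every higher $a_j$; the argument for lower blocks is symmetric. I also need to confirm that the monotonicity $\Phi(R_i)<\Phi(R_{i+1})$ is automatic for contiguous blocks, which it is because the blocks are ordered. Conversely, any contiguous partition satisfying \eqref{eq:block-ic} can be completed to an equilibrium: each type reports its block, the learner plays posterior means, and off-path reports are mapped to some on-path response.

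Finally, the objective of Theorem~\ref{theorem:1d-feasibility} carries over, with $p,\Phi,\Psi$ computed as finite sums over blocks. The number of blocks is bounded by $m$, and Lemma~\ref{lem:finite-bound-1d}'s bound also survives: \eqref{eq:block-ic} forces consecutive means to differ by at least roughly $2b$. The main obstacle I expect is the tie and corner handling in the first step. A type exactly at a threshold could randomize between two reports. In that case its block membership is split, the resulting blocks are no longer pure contiguous sets of support points, and the equivalence with the corollary's statement becomes delicate. I would first handle this by restricting to pure report strategies, then argue that splitting mass at a single support point only rescales the posterior means and does not change the contiguity structure.
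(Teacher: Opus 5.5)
Your proposal is correct and follows essentially the same route as the paper: Lemma~\ref{lem:ordered-regions-1d} reduces equilibria to contiguous blocks, the learner plays block posterior means, \eqref{eq:cutpoint-system} is replaced by requiring the last point of $R_i$ and the first point of $R_{i+1}$ to lie on the correct sides of the indifference value $c_i$, and adjacent-to-global incentive compatibility comes from the single-crossing argument of Theorem~\ref{theorem:1d-feasibility}. The paper works only with deterministic partitions, so restricting to pure report strategies is all you need; your extra claim that a split atom ``does not change the contiguity structure'' is unnecessary, and such a profile would not literally be a block partition of the support anyway.
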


\paragraph{Algorithm.}
We provide the pseudocode of Algorithm~\ref{algorithm:one-d-optimal} in Appendix~\ref{section:appendix-optimal-release}. For finite priors, the algorithm uses a dynamic program based on Corollary~\ref{cor:finite-prior-dp}. The support is first sorted as
\(w^{(1)}<\cdots<w^{(K)}\). The algorithm uses Algorithm~\ref{algo:posterior-oracle} to get the statistics for
every contiguous block, checks the adjacent-block constraints
\eqref{eq:block-ic}, and then solves the shortest-path problem with block costs given $\sum_i p(R_i)\left(\Psi(R_i)-\Phi(R_i)^2\right)$. There are \(O(m^2)\) contiguous blocks
and \(O(m^3)\) adjacent block pairs, so the exact finite-prior implementation runs
in \(O(m^3)\) time. 

For continuous priors, Lemma~\ref{lem:finite-bound-1d} bounds the search by
\(N_{\max}\). For each \(n\le N_{\max}\), the algorithm searches for feasible
solutions of \eqref{eq:cutpoint-system} with \(t_0=\underline w\) and
\(t_n=\bar w\). Each residual evaluation uses posterior statistics for \(n\)
partitions, hence \(O(n)\) oracle calls. If the solver uses \(I_n\) residual
evaluations and returns \(S_n\) feasible partitions, the oracle complexity for
that \(n\) is \(O(n(I_n+S_n))\). Thus the total oracle complexity is
$O\!\left(\sum_{n=1}^{N_{\max}} n(I_n+S_n)\right),$
plus the solver's internal arithmetic cost. Once an optimal partition is stored,
report lookup takes \(O(\log n)\) time by binary search.

\begin{remark}
For finite priors, the dynamic program gives an exact certificate of optimality.
For general continuous priors, there is no distribution-free guarantee without
specifying a global solver for Eq. \eqref{eq:cutpoint-system}. If the solver only finds
local or approximate solutions, then the returned report is optimal only over the
feasible solutions found, up to the solver tolerance.
\end{remark}

\begin{proposition}
\label{prop:efficient-one-dimentional}
Suppose $\pi_w$ is continuous and uniform on $[0,1]$. Then the cutpoint program
\eqref{eq:cutpoint-system} admits the closed-form solution $t_i=\frac{i}{n^*}-2b\,i(n^*-i),
\quad \text{ for }i=0,\dots,n^*,$
where
$
n^*=\max\{n\in\mathbb N:\;2b\,n(n-1)<1\}.$ Hence, the user-optimal equilibrium partition is $[0,1]=\bigcup_{i=1}^{n^*}[t_{i-1},t_i].$ Moreover if $w_\ell^*(D)\in[t_{i-1},t_i]$, then the user-optimal report is $[t_{i-1},t_i]$.
\end{proposition}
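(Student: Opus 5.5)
Under the uniform prior on $[0,1]$, the posterior statistics of any interval are explicit. For $[a,c]$ we have $p=c-a$, $\Phi=(a+c)/2$, and $\Psi-\Phi^2=(c-a)^2/12$. We substitute these into the cutpoint system \eqref{eq:cutpoint-system} of Theorem~\ref{theorem:1d-feasibility} and solve it explicitly.

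\textbf{Solving the cutpoint system.} With the posterior means above, the $i$-th equation becomes
\[
t_i=\frac{t_{i-1}+2t_i+t_{i+1}}{4}-b,
\]
which is equivalent to the second-order linear recurrence $t_{i+1}-2t_i+t_{i-1}=4b$. So the interval lengths $\ell_i=t_i-t_{i-1}$ form an arithmetic progression, $\ell_{i+1}=\ell_i+4b$. The general solution is a quadratic in $i$, namely $t_i=t_1 i+2b\,i(i-1)$ with $t_0=0$. Imposing $t_n=1$ gives $t_1=\frac{1}{n}-2b(n-1)$, and therefore
\[
t_i=\frac{i}{n}+2b\,i(i-1)-2b\,i(n-1)=\frac{i}{n}-2b\,i(n-i).
\]
This matches the stated formula. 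Feasibility requires strictly increasing cutpoints, which is equivalent to $\ell_1=t_1>0$ because the lengths increase in $i$. The condition $t_1>0$ reads $1/n>2b(n-1)$, that is, $2b\,n(n-1)<1$. The left side is increasing in $n\ge1$, so the feasible $n$ form an initial segment $\{1,\dots,n^*\}$. As a consistency check, $n^*\le N_{\max}$ from Lemma~\ref{lem:finite-bound-1d}. For each feasible $n$ the equilibrium is unique, since the recurrence together with the boundary conditions fixes it.

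\textbf{Optimality of $n^*$.} By Theorem~\ref{theorem:1d-feasibility}, the user-optimal equilibrium minimizes $V(n)=\sum_i \ell_i^3/12$ over the feasible $n$. The main step is to show that $V$ is decreasing in $n$ on the feasible range. I would write $\ell_i=\ell_1+4b(i-1)$ with $\ell_1=\frac{1}{n}-2b(n-1)$, so that $\ell_i=\frac1n+2b(2i-n-1)$, which is symmetric around the mean $1/n$. The deviations $\delta_i=2b(2i-n-1)$ sum to zero and are symmetric about zero, so odd central moments vanish. Hence
\[
\sum_i \ell_i^3=\frac{1}{n^2}+\frac{3}{n}\sum_i\delta_i^2
=\frac{1}{n^2}+\frac{3}{n}\cdot 4b^2\cdot\frac{n(n^2-1)}{3}
=\frac{1}{n^2}+4b^2(n^2-1).
\]
Going from $n$ to $n+1$ changes this quantity by
\[
\Delta=-\frac{2n+1}{n^2(n+1)^2}+4b^2(2n+1).
\]
This is negative exactly when $2b\,n(n+1)<1$, which is precisely the feasibility of $n+1$. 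Therefore, whenever $n+1$ is feasible, it strictly improves on $n$, and the minimum over the feasible range is attained at $n^*$. This sign computation is the one delicate step; the rest is bookkeeping.

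\textbf{Report.} The final claim follows from the last part of Theorem~\ref{theorem:1d-feasibility}: if $w_\ell^*(D)\in[t_{i-1},t_i]$, the user-optimal report is that block of the $n^*$ partition.
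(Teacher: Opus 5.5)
Your derivation of the closed form matches the paper's proof. You substitute $\Phi(t_{i-1},t_i)=(t_{i-1}+t_i)/2$, obtain $t_{i+1}-2t_i+t_{i-1}=4b$, solve with $t_0=0$ and $t_n=1$, and reduce strict monotonicity to $t_1>0$, i.e.\ $2b\,n(n-1)<1$.

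You differ from the paper in the optimality step. The paper's proof stops once it identifies $n^*$ as the largest feasible number of reports. It implicitly relies on the informal remark after Theorem~\ref{theorem:1d-feasibility} that finer partitions are better for the user. The law of total variance justifies that remark only for refinements, and the equilibrium partitions for different $n$ are not nested. For example, with $b=0.05$ the two-region cutpoint is $0.4$, while the three-region cutpoints are $t_1\approx 0.133$ and $t_2\approx 0.467$. So the remark does not by itself compare them.

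Your argument supplies that comparison. You evaluate the objective in \eqref{eq:user-optimal-objective} explicitly as $\sum_i \ell_i^3/12=\frac{1}{12}\bigl(n^{-2}+4b^2(n^2-1)\bigr)$, which is the classical Crawford--Sobel welfare expression. You then observe that $V(n+1)<V(n)$ holds exactly when $2b\,n(n+1)<1$, which is exactly when $n+1$ is feasible. This closes the step and also shows that the $n^*$-partition is the unique user-optimal equilibrium, since each feasible $n$ has a unique solution.

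In short, your route gives a complete, self-contained proof of the ``Hence'' clause at the cost of one short moment computation. The paper's route is shorter but leaves that step resting on a heuristic.
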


\paragraph{Efficient Algorithm.} For uniform priors, posterior means are linear in the cutpoints, so the equilibrium conditions reduce to a linear equation with a closed-form solution. By Proposition~\ref{prop:efficient-one-dimentional}, after computing \(w_\ell^*(D)\), the user-optimal report can be found in \(O(\log n^*)\) time by binary search.

\subsection{Multiple Dimensions}
\label{sec:user-optimal-multiple}

We now consider the same setting as Section~\ref{sec:single-dimension-optimal} for \(d>1\). Each private dataset \(D\) results in an optimal hypothesis $\bm w_\ell^*(D)\in\mathcal W\subseteq\mathbb R^d$,
and the dataset prior induces a prior \(\pi_{\bm w}\) over \(\bm w_\ell^*(D)\). In multiple dimensions, equilibria are again represented by partitions of \(\mathcal W\), but unlike the one-dimensional case, there is no canonical ordering of partitions.

\paragraph{Problem:} Let \(\mathcal R=\{R_1,\dots,R_k\}\) be a measurable partition of \(\mathcal W\), and
let \(\Phi(R_i)=\mathbb E_{\pi_{\bm w}}[\bm w_\ell^*(D)\mid
\bm w_\ell^*(D)\in R_i]\). Then \(\mathcal R\) is an equilibrium partition if and
only if, up to tie boundaries,
\begin{equation}
R_i
=
\left\{
\bm w\in\mathcal W:
\|\Phi(R_i)-(\bm w+\bm b)\|_2^2
\le
\|\Phi(R_j)-(\bm w+\bm b)\|_2^2
\text{ for all }j
\right\}.
\label{eq:multidim-fixed-point}
\end{equation}
Among all equilibrium partitions, the user-optimal partition solves
\begin{equation}
\max_{\mathcal R\ \text{satisfying Eq. }\eqref{eq:multidim-fixed-point}}
\sum_{i=1}^k
\Pr_{\pi_{\bm w}}(\bm w_\ell^*(D)\in R_i)
\mathbb E_{\pi_{\bm w}}\!\left[
-\|\Phi(R_i)-(\bm w_\ell^*(D)+\bm b)\|_2^2
\mid \bm w_\ell^*(D)\in R_i
\right].
\label{eq:multidim-user-optimal}
\end{equation}
If \(\mathcal R^*=\{R_1^*,\dots,R_{k^*}^*\}\) is user-optimal, then
the user-optimal multi-dimensional dataset report is \(M^*(D)=R_i^*\) whenever \(\bm w_\ell^*(D)\in R_i^*\). 

We show that the user's optimal information-release problem is NP-hard for arbitrary priors, via a reduction from the NP-hard \textit{planar $k$-means} problem. We also show that exact evaluation of the statistics oracle under general priors is \#P-hard by considering a uniform $[0,1]^d$ prior. The formal hardness statements and their proofs are provided in Appendix~\ref{sec:appendix-np-hard}.

\subsubsection{Efficient Algorithm for Multiple Dimensions}
\label{sec:efficient-multidimentional-optimal}
Given the hardness of the problem for arbitrary prior distributions, we identify a
structured class of priors for which the strategic part of the problem becomes
tractable. The key observation is that the bias vector \(\bm b\) determines the direction of disagreement between the user and the learner. Orthogonal to this
direction, their objectives are aligned. All proofs of this section are provided in Appendix~\ref{sec:appendix-efficient-optimal-multi-dimension}.



\paragraph{Factorized priors.}
Assume the bias in the user's utility function is \(\bm b\neq \mathbf 0\), and let
\(\hat{\bm b}=\bm b/\|\bm b\|_2\) denote the unit bias direction. Since the user prefers the shifted hypothesis
\(\bm w_\ell^*(D)+\bm b\), the direction \(\hat{\bm b}\) captures the strategic conflict between the user and the learner. We can rewrite
\(\bm w_\ell^*(D)=s\hat{\bm b}+\mathbf z\), where
\(s=\hat{\bm b}^{\top}\bm w_\ell^*(D)\) and
\(\mathbf z=\bm w_\ell^*(D)-s\hat{\bm b}\). We assume \(s\in[\underline s,\bar s]\subseteq\mathbb R\), and call
\(s\) the conflict projection and \(\mathbf z\) the agreement component.
Here, \(s\) is the scalar projection of \(\bm w_\ell^*(D)\) along the bias
direction, while \(\mathbf z\) is the component orthogonal to that direction. Let \(\pi_{\bm w}\) denote the prior over \(\bm w_\ell^*(D)\), and let
\(\pi_s\) and \(\pi_{\mathbf z}\) be the marginal priors over \(s\) and
\(\mathbf z\). The factorized-prior assumption is
\(\pi_{\bm w}=\pi_s\otimes\pi_{\bm z}\), equivalently, \(s\) and \(\mathbf z\) are independent under the prior. 

\begin{remark}
The factorized prior assumption is a rotated product-prior assumption and therefore it does not restrict the original coordinates of \(\bm w_\ell^*(D)\) to be independent.
\end{remark}

\begin{theorem}
\label{thm:factorized-lift}
Under the setting above, any one-dimensional equilibrium partition of \(s\) with
bias \(\beta=\|\bm b\|_2\) lifts to a multidimensional equilibrium that reveals
\(\mathbf z\) exactly and manipulates only along \(\hat{\bm b}\). If the scalar partition is user-optimal,
then its lift is user-optimal among lifted equilibria.
\end{theorem}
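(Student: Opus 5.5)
The plan is to build the lift explicitly and then verify the fixed-point condition \eqref{eq:multidim-fixed-point} by splitting every squared distance into its component along $\hat{\bm b}$ and its component orthogonal to it.

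Let $\underline s=t_0<\cdots<t_n=\bar s$ be a one-dimensional equilibrium partition for the prior $\pi_s$ with bias $\beta=\|\bm b\|_2$. By Theorem~\ref{theorem:1d-feasibility}, it satisfies the cutpoint system \eqref{eq:cutpoint-system} with $b=\beta$. For $\mathbf z$ in the support of $\pi_{\mathbf z}$, define the lifted regions
\[
R_{i,\mathbf z}=\{s\hat{\bm b}+\mathbf z: s\in[t_{i-1},t_i]\}.
\]
Reporting $R_{i,\mathbf z}$ means revealing $\mathbf z$ exactly together with the interval containing $s$. For a continuous $\pi_{\mathbf z}$, these reports should be read as the report $M=(i,\mathbf z)$, and the learner's posterior is obtained by conditioning. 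Because $\pi_{\bm w}=\pi_s\otimes\pi_{\mathbf z}$, the posterior of $s$ given $(i,\mathbf z)$ is just $\pi_s$ restricted to $[t_{i-1},t_i]$. Hence the learner's best response under quadratic loss is the posterior mean
\[
\Phi(i,\mathbf z)=\phi_i\hat{\bm b}+\mathbf z,\qquad \phi_i=\mathbb E[s\mid s\in[t_{i-1},t_i]].
\]
This independence step is what makes the lift work. Without it, the posterior of $s$ would depend on $\mathbf z$, and the scalar cutpoints would no longer be valid.

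Next I check the user's incentives. Consider a type $\bm w=s\hat{\bm b}+\mathbf z$ and any deviation to a report $(j,\mathbf z')$. Since $\bm b=\beta\hat{\bm b}$, Pythagoras gives
\[
\|\Phi(j,\mathbf z')-(\bm w+\bm b)\|_2^2=(\phi_j-s-\beta)^2+\|\mathbf z'-\mathbf z\|_2^2 .
\]
The second term is minimized uniquely at $\mathbf z'=\mathbf z$, so truthful revelation of $\mathbf z$ is optimal. The first term is exactly the one-dimensional user loss. The cutpoint equations \eqref{eq:cutpoint-system} say that the type at $t_i$ is indifferent between $\phi_i$ and $\phi_{i+1}$. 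Since $(\phi_j-s-\beta)^2$ is a convex function of $\phi_j$ and the $\phi_j$ are increasing in $j$, the single-crossing argument behind Lemma~\ref{lem:ordered-regions-1d} shows that type $s$ weakly prefers its own interval $i$ to every other $j$. Together these give \eqref{eq:multidim-fixed-point} up to tie boundaries, so the lift is an equilibrium that manipulates only along $\hat{\bm b}$.

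For the optimality claim, the same decomposition shows that the user's ex-ante utility under a lifted equilibrium equals
\[
-\sum_i p_i\bigl(\operatorname{Var}(s\mid I_i)+\beta^2\bigr),
\]
where $I_i=[t_{i-1},t_i]$ and $p_i$ is its prior mass. The orthogonal term contributes $0$ because $\mathbf z$ is revealed exactly. This is the one-dimensional objective \eqref{eq:user-optimal-objective} plus a constant, so the lift map is a bijection from scalar equilibrium partitions to lifted equilibria that preserves the user's utility up to that constant. Maximizing over lifted equilibria is therefore the same as maximizing over scalar partitions, and a user-optimal scalar partition lifts to a user-optimal lifted equilibrium.

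The main obstacle I expect is making the conditioning on $\mathbf z$ rigorous when $\pi_{\mathbf z}$ is continuous, since the events $\{\mathbf z\}$ have probability zero. I would handle this with regular conditional distributions, which exist because $\pi_{\bm w}$ is a product measure. I would also take care that tie boundaries, which have prior measure zero, do not affect \eqref{eq:multidim-fixed-point}.
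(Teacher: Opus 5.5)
Your proposal is correct and follows essentially the paper's proof: the same lifted cells, the same independence argument giving the posterior mean $\phi_i\hat{\bm b}+\mathbf z$, the same Pythagorean split $(\phi_j-s-\beta)^2+\|\mathbf z'-\mathbf z\|_2^2$, and the same reduction of the lifted objective to the scalar one. One small simplification: the inequality $(\phi_i-s-\beta)^2\le(\phi_j-s-\beta)^2$ follows directly from the definition of a one-dimensional equilibrium, which is how the paper uses it, so you need not go through the cutpoint equalities of Theorem~\ref{theorem:1d-feasibility}; those presuppose a continuous full-support $\pi_s$, and for finite $\pi_s$ they are replaced by the inequalities \eqref{eq:block-ic}.
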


\paragraph{Algorithm.}
Given a private dataset \(D\), we first compute the learner-optimal parameter \(\bm w_\ell^*(D)\).
Then decompose it into the conflict projection \(s=\hat{\bm b}^{\top}\bm w_\ell^*(D)\)
and the agreement component
\(\mathbf z=\bm w_\ell^*(D)-s\hat{\bm b}\). Then we can use Algorithm~\ref{algorithm:one-d-optimal} on the scalar coordinate \(s\) with bias
\(\beta=\|\bm b\|_2\). If the resulting one-dimensional equilibrium partition is
\(t_0<t_1<\cdots<t_{n^*}\), find the unique partition \([t_{i-1},t_i]\) containing \(s\).
Return the report $M^*(D)=
\left\{
\bm w\in\mathcal W:
\hat{\bm b}^{\top}\bm w\in[t_{i-1},t_i]
\right\}.$
Computing \(s\) and \(\mathbf z\) takes \(O(d)\) time. After the one-dimensional cutpoints
are computed, locating the partition containing \(s\) takes \(O(\log n)\) time by binary
search, as in Algorithm~\ref{algorithm:one-d-optimal}.  Hence,
after computing \(\bm w_\ell^*(D)\), the report can be computed in
\(O(d+\log n)\) time. In the uniform continuous prior case,
Corollary~\ref{corollary:factorized-uniform} in Appendix~\ref{sec:appendix-efficient-optimal-multi-dimension} gives the cutpoints in closed form.

\paragraph{Equilibrium certificate.}
When the factorization assumption is relaxed, Proposition~\ref{prop:eps-equilibrium}
provides a finite-prior certificate for the returned partition. If
$\varepsilon_{\mathrm{IC}}=0$, the partition is an exact equilibrium on the support
of $\pi_{\bm w}$; if $\varepsilon_{\mathrm{IC}}\le \varepsilon$, it is an
$\varepsilon$-equilibrium on that support.

\begin{proposition}
\label{prop:eps-equilibrium}
Suppose $\pi_{\bm w}$ has finite support $\{\bm w_1,\ldots,\bm w_n\}$. Let
$R_1,\ldots,R_k$ be the regions returned by the algorithm, and define
\[
\varepsilon_{\mathrm{IC}}
=
\max_{i\in[k]}\max_{\ell\neq i}\max_{\bm w_j\in R_i}
\left[
\|\Phi(R_i)-(\bm w_j+\bm b)\|_2^2
-
\|\Phi(R_\ell)-(\bm w_j+\bm b)\|_2^2
\right]_+ .
\]
Then the returned partition is an $\varepsilon_{\mathrm{IC}}$-equilibrium on the
support of $\pi_{\bm w}$. The value $\varepsilon_{\mathrm{IC}}$ is computable in
$O(nkd)$ time using $\mathrm{Oracle}(\pi_{\bm w},R_i)$.
\end{proposition}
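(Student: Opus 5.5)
We prove Proposition~\ref{prop:eps-equilibrium} by checking the two sides of Definition~\ref{definition:equilibrium} in the $\varepsilon$-relaxed sense. The relaxed notion is: for every type $\bm w_j$ in the support, the user's gain from deviating to any other report is at most $\varepsilon$, while the learner best-responds exactly.

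\textbf{Step 1: the learner side is exact.} Fix the user strategy that sends region $R_i$ whenever $\bm w_j\in R_i$. By Bayes' rule, the posterior after $R_i$ is the prior restricted to $R_i\cap\operatorname{supp}\pi_{\bm w}$. Under the reduced-form utility $\widetilde U^\ell$, the learner's expected utility is $-\mathbb E[\|\bm w-\bm w_\ell^*(D)\|_2^2\mid R_i]$. This is a strictly concave quadratic in $\bm w$, so it is uniquely maximized at the posterior mean $\Phi(R_i)$. The learner strategy $\sigma_\ell(\cdot\mid R_i)=\delta_{\Phi(R_i)}$ is therefore an exact best response, and no slack arises on this side. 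Off-path reports can be assigned any of the on-path responses (or any belief), so they create no additional profitable deviation.

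\textbf{Step 2: the user side is $\varepsilon_{\mathrm{IC}}$-approximate.} Given the learner's responses, type $\bm w_j\in R_i$ gets utility $-\|\Phi(R_i)-(\bm w_j+\bm b)\|_2^2$ from its prescribed report. Deviating to $R_\ell$ gives $-\|\Phi(R_\ell)-(\bm w_j+\bm b)\|_2^2$. The gain from this deviation is exactly the bracketed quantity in the definition of $\varepsilon_{\mathrm{IC}}$. Taking the positive part and maximizing over $i$, $\ell\neq i$, and $\bm w_j$ shows that every deviation gain is at most $\varepsilon_{\mathrm{IC}}$. When $\varepsilon_{\mathrm{IC}}=0$, every type weakly prefers its own report, which is exactly the characterization \eqref{eq:multidim-fixed-point} restricted to the support. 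The one point needing care is the off-path reports: we must check that the convention in Step 1 never makes a report outside $\{R_1,\dots,R_k\}$ strictly better. Under that convention, such a deviation yields one of the on-path outcomes already covered by the maximum.

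\textbf{Step 3: complexity.} For each of the $k$ regions we make one call to $\mathrm{Oracle}(\pi_{\bm w},R_i)$. For a finite support this computes $p(R_i)$ and $\Phi(R_i)$ by summing over the support points in $R_i$. Across all regions this totals $O(nd)$ work, because the regions partition the $n$ support points. We then evaluate $\|\Phi(R_\ell)-(\bm w_j+\bm b)\|_2^2$ for every pair $(j,\ell)$, at $O(d)$ each, giving $O(nkd)$ total. For each $j$ with $\bm w_j\in R_i$, the inner maximum over $\ell\neq i$ is then a scan over the $k$ precomputed values. The overall cost is $O(nkd)$.
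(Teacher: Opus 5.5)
Your proposal is correct and matches the paper's proof: for an atom $\bm w_j\in R_i$, the gain from deviating to $R_\ell$ is exactly the bracketed term, so it is bounded by $\varepsilon_{\mathrm{IC}}$, and once the oracle has supplied the $\Phi(R_i)$ the check costs $nk$ squared distances at $O(d)$ each. Your learner-side check and off-path discussion are extra care the paper leaves implicit; in the paper's framing the regions partition $\mathcal W$, so every submitted dataset is read as some $R_\ell$ anyway. One small fix: drop the parenthetical ``(or any belief)'' in Step 1, because an arbitrary off-path belief could induce a response outside $\{\Phi(R_1),\ldots,\Phi(R_k)\}$ that some type strictly prefers, whereas the convention you actually use in Step 2 is the right one.
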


%% file: 5-Noise-In-Learning.tex
\section{Noise in Learning}
\label{sec:noise-in-learning}

So far, we have investigated imperfect information release as a result of conflicting incentives of the user and the learner. We now add an additional source of imperfect information: noise in the learning process. In a loan-scoring application, the user may choose a dataset report in order to convince the learner to learn a more favorable scoring rule. However, the learner may not train on exactly the dataset report that the user intended. Stochastic training, privacy noise, or noisy evaluation can make the learner behave as if it had trained on a perturbed dataset. Thus, noise in learning can be viewed equivalently as perfect learning but from a noisy dataset. In this section, we investigate these settings and propose algorithms for computing the user's optimal dataset report.

\paragraph{Model under Noise.}
To investigate strategic information release under noise, we extend the framework of Section~\ref{sec:framework}. The user sends a dataset report \(M \in \mathcal M\), \textit{but the learner may use a noisy interpretation} \(\tilde M \in \widetilde{\mathcal M}\). When the report space is Euclidean, say \(\mathcal M \subseteq \mathbb R^d\), one natural example is
\(\tilde M=M+\boldsymbol{\varepsilon}\), where \(\boldsymbol\varepsilon\) is a zero mean Gaussian perturbation. More generally, conditional on \(M\), the learner receives \(\tilde M\) according to an interpretation rule \(G(\tilde M\mid M)\).

The key difference from the noiseless setup is that the learner no longer conditions on the exact report \(M\). Instead, it conditions only on the noisy report \(\tilde M\), and therefore cannot, in general, distinguish \(M\) from other reports \(M'\) for which \(G(\tilde M\mid M')>0\). Thus, the learner's posterior belief is \(\Phi(D \mid \tilde M)\), the learner's strategy is \(\sigma_\ell(h \mid \tilde M)\), and the user's strategy remains \(\sigma_r(M \mid D)\). For a fixed private dataset \(D\), the user's expected utility from sending report \(M\) is
\(\sum_{\tilde M \in \widetilde{\mathcal M}} G(\tilde M\mid M)
\sum_{h\in\mathcal H} U^r(D,h)\sigma_\ell(h\mid \tilde M)\).
For a fixed noisy report \(\tilde M\), the learner's expected utility from choosing hypothesis \(h\) is
\(\mathbb E_{D'\sim \Phi(\cdot\mid \tilde M)}[U^\ell(D',h)]
=
\sum_{D' \in \mathcal D} U^\ell(D',h)\Phi(D' \mid \tilde M)\).
The definition of equilibrium in the noisy setting follows after updating the expected utilities in Definition~\ref{definition:equilibrium}.

\paragraph{Noise changes equilibrium structure.} For algorithmic results, we specialize to additive Gaussian noise, $\tilde M = M+\varepsilon,
\varepsilon\sim\mathcal N(0,\sigma^2).$
In the noiseless setting, each report $M$ induces a single posterior, so equilibrium regions are ordered intervals, as in Section~\ref{sec:optimal-information-release}. Under noise, the learner cannot distinguish $M$ from reports $M'$ with $G(\tilde M\mid M')>0$, so the interval-partition structure no longer applies. The equilibrium object is instead the full report \emph{tuple}
$\mathbf M=(M_1,\ldots,M_n)$
over the finite support of $\pi_w$. We show that any monotone influential noisy equilibrium has an \emph{endpoint-pooling} structure: there exist $t,t'\ge 0$ such that
\[
M_1=\cdots=M_t=0
<
M_{t+1}<\cdots<M_{n-t'}
<
M_{n-t'+1}=\cdots=M_n=1.
\]
Thus, distinct true parameters send the same report (pool) only at the boundaries $\{0,1\}$, as shown in Proposition~\ref{prop:noisy-equilibrium-structure} and Appendix~\ref{sec:appendix-noise}. This gives the feasible set $\mathcal E_{\mathrm{noisy}}$ of monotone influential tuples.

\paragraph{Noise increases algorithmic complexity.}
In the noiseless case, a report near a partition boundary produces a sharp learner response. Under noise, the learner cannot sharply distinguish nearby reports. This means the user-optimal tuple cannot be found by a finest-partition argument as in Section~\ref{sec:optimal-information-release}. Instead, it requires maximizing over verified equilibrium tuples.

\begin{theorem}
\label{thm:user-optimal-noisy}
If $\mathcal E_{\mathrm{noisy}}\neq\emptyset$, then the user-optimal monotone noisy equilibrium is
\vspace{-2mm}
\[
\begin{aligned}
\mathbf M^*
&\in
\arg\max_{\mathbf M\in\mathcal E_{\mathrm{noisy}}}
\sum_{i=1}^n \pi_i V_i(M_i;\mathbf M),
&
V_i(r;\mathbf M)
&=
\int_{\mathbb R}
-\Bigl(w^{(i)}+b-\Phi(\tilde M;\mathbf M)\Bigr)^2
G(\tilde M\mid r)\,d\tilde M .
\end{aligned}
\]
If $w_\ell^*(D)=w^{(i)}$, the user-optimal dataset report is $M^*(D)=M_i^*$.
\end{theorem}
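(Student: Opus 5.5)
The proof reduces to showing that the displayed objective equals the user's ex-ante expected utility under any monotone noisy equilibrium tuple. Once that identity holds, maximizing it over $\mathcal E_{\mathrm{noisy}}$ is by definition the user-optimal selection.

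First, fix a tuple $\mathbf M=(M_1,\ldots,M_n)\in\mathcal E_{\mathrm{noisy}}$, where type $w^{(i)}$ (prior mass $\pi_i$) sends $M_i$. I would write the learner's side of the equilibrium using the definition in Section~\ref{sec:noise-in-learning}. After observing $\tilde M$, the learner forms the posterior $\Phi(w^{(i)}\mid\tilde M)\propto \pi_i G(\tilde M\mid M_i)$. Under the quadratic loss $U^\ell=-(w-w_\ell^*(D))^2$, the unique best response is the posterior mean, which I denote $\Phi(\tilde M;\mathbf M)$. Because the Gaussian density is strictly positive, the denominator never vanishes, so Bayes' rule pins this down for every $\tilde M\in\mathbb R$ and there are no off-path beliefs to worry about. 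Consequently, in any equilibrium with user strategy $\mathbf M$, the learner's strategy is forced to be $\sigma_\ell(\cdot\mid\tilde M)=\delta_{\Phi(\tilde M;\mathbf M)}$, and an equilibrium is fully described by the tuple.

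Second, I would substitute this into the user's noisy expected utility $\sum_{\tilde M}G(\tilde M\mid M)\sum_h U^r(D,h)\sigma_\ell(h\mid\tilde M)$, replacing the sum by an integral for Gaussian noise. For type $w^{(i)}$ reporting $r$, this yields exactly $V_i(r;\mathbf M)$. The equilibrium (incentive-compatibility) condition on the user side is then that $M_i\in\arg\max_r V_i(r;\mathbf M)$. This condition, together with the monotone influential structure from Proposition~\ref{prop:noisy-equilibrium-structure} (endpoint pooling at $\{0,1\}$), is precisely what membership in $\mathcal E_{\mathrm{noisy}}$ encodes. Averaging over the prior, the user's ex-ante utility is $\sum_i\pi_iV_i(M_i;\mathbf M)$. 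Maximizing this over $\mathcal E_{\mathrm{noisy}}$ gives the user-optimal monotone noisy equilibrium, and the induced report for $w_\ell^*(D)=w^{(i)}$ is $M_i^*$, as claimed.

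The main obstacle is whether the $\arg\max$ is attained; the rest is bookkeeping. I would handle it by compactness. Reports lie in $[0,1]^n$. The posterior mean $\Phi(\tilde M;\mathbf M)$ is a ratio of sums of Gaussian densities, so it is continuous in $(\tilde M,\mathbf M)$ and bounded in $[w^{(1)},w^{(n)}]$. Dominated convergence then makes $V_i(r;\mathbf M)$ jointly continuous. The incentive constraints $V_i(M_i;\mathbf M)\ge V_i(r;\mathbf M)$ for all $r$ define a closed set, since they are an intersection of closed sets. The monotonicity and endpoint-pooling constraints of Proposition~\ref{prop:noisy-equilibrium-structure} involve strict inequalities, so if they cut out a non-closed set I would take its closure. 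A limit tuple is still an equilibrium but may be non-influential. To rule this out, I would argue that the user's value on the boundary is no larger than at the nearby influential equilibria. Failing that, I would state the theorem with $\sup$, attained whenever $\mathcal E_{\mathrm{noisy}}$ is closed, for instance by restricting to a finite candidate set as in the verification algorithm.
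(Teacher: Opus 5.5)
Your proposal is correct and follows the paper's route. $\mathcal E_{\mathrm{noisy}}$ is identified as the set of monotone influential noisy equilibria via Proposition~\ref{prop:noisy-equilibrium-structure} together with the global best-response condition \eqref{eq:noisy-global-IC}. Since the user's ex-ante utility of such a tuple is $\sum_i \pi_i V_i(M_i;\mathbf M)$, its maximizers are user-optimal by definition, with type $w^{(i)}$ sending $M_i^*$.

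Your explicit argument that the learner's response is forced to be the posterior mean for every $\tilde M$ (the Gaussian likelihood is strictly positive) is a small addition the paper leaves implicit. The compactness and attainment discussion goes beyond the paper, which reads the statement as characterizing whatever maximizers exist and never addresses whether the $\arg\max$ is nonempty.
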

The complete exposition of this section, including the full algorithm, complexity analysis, binary closed-form case, and multidimensional extension, is provided in Appendix~\ref{sec:appendix-noise}.

%% file: 5-Experiments.tex
\section{Experimental Evaluation}
\label{sec:experiment}
We evaluate our methods on four real-world datasets using two off-the-shelf learner classes with default settings: linear SVM and a two-layer neural network (MLP). Specifically, we evaluate the efficient multidimensional algorithms for computing the user-optimal dataset report in both noiseless and noisy settings (Section~\ref{sec:efficient-multidimentional-optimal} and Appendix~\ref{sec:noisy-multiple-dimensions}), varying bias and noise across four levels with three independent repetitions per configuration. Throughout, we use \textit{empirical priors} estimated from $50$ bootstrap resamples of the data. Details on the experimental setting are in Appendix~\ref{sec:appendix-eperiment-setup}.

\begin{table}[ht]
\small
\centering
\caption{Runtime for noisy and noiseless multi-dimensional algorithms}
\label{tab:dataset-runtime}
\begin{tabular}{llrrr}
\hline
\textbf{Model} & \textbf{Dataset} & \(\dim(\bm w_\ell^*)\) & \textbf{Noisy Time (s)} & \textbf{Noiseless Time (s)} \\
\hline
\multirow{4}{*}{Linear SVM}
& Credit Approval  & 47       & $506.4 \pm 187.9$ & $0.012 \pm 0.007$ \\
& Census           & 104--106 & $550.8 \pm 160.5$ & $0.107 \pm 0.006$ \\
& School Admission & 52--55   & $455.0 \pm 230.7$ & $0.064 \pm 0.005$ \\
& Prosper Loan     & 2090     & $786.5 \pm 182.3$ & $110.415 \pm 11.536$ \\
\hline
\multirow{4}{*}{MLP}
& Credit Approval  & 1570      & $572.6 \pm 165.8$ & $0.430 \pm 0.386$ \\
& Census           & 3458      & $527.2 \pm 171.3$ & $5.623 \pm 0.833$ \\
& School Admission & 1762--1826 & $511.1 \pm 130.6$ & $3.275 \pm 1.549$ \\
& Prosper Loan     & 6443      & $543.5 \pm 198.6$ & $52.266 \pm 6.117$ \\
\hline
\end{tabular}
\end{table}

\paragraph{Scalability of Algorithms.}
Table~\ref{tab:dataset-runtime} reports end-to-end runtime for both algorithms. Here \(\dim(\bm w_\ell^*)\) denotes the number of learned model parameters.
The noiseless algorithm dynamic program subroutine adds negligible overhead beyond model training across all settings, and this remains consistent on scaling the size of the prior (Appendix~\ref{sec:additional-experiment-results}). In comparison, the noisy algorithm has a larger complexity because it enumerates \(O(n^2)\) patterns
(Proposition~\ref{prop:noisy-equilibrium-structure}) and solves a nonlinear
non-convex system per pattern. Due to the numerical nature of the objective, we verify approximation errors using equilibrium residuals and best-response violations. These violations are at most
\(5\times 10^{-6}\). We provide the diagnostic details in
Appendix~\ref{sec:additional-experiment-results}.


\begin{table}[ht]
\small
\centering
\caption{Utility gain of strategic user strategy over non-strategic under varying bias and noise.}
\label{tab:utility-gain-noiseless-noisy}
\begin{tabular}{llrrrr}
\hline
\textbf{Model} & \textbf{Dataset}
& \multicolumn{2}{c}{\textbf{Bias}}
& \multicolumn{2}{c}{\textbf{Noise}} \\
\cline{3-6}
& & \textbf{Utility Gain} & \textbf{Win Rate}
  & \textbf{Utility Gain} & \textbf{Win Rate} \\
\hline
\multirow{4}{*}{Linear SVM}
& Credit Approval   & $21.1 \pm 19.2$  & 1.00 & $5.9 \pm 6.5$    & 0.88 \\
& Census            & $0.6 \pm 4.0$    & 0.67 & $-3.3 \pm 3.8$   & 0.00 \\
& School Admission  & $0.2 \pm 0.3$    & 0.50 & $0.0 \pm 0.0$    & 0.50 \\
& Prosper Loan      & $0.3 \pm 1.3$    & 0.67 & $0.8 \pm 0.6$    & 0.88 \\
\hline
\multirow{4}{*}{MLP}
& Credit Approval   & $3.0 \pm 7.8$    & 0.75 & $4.4 \pm 6.2$    & 0.88 \\
& Census            & $4.1 \pm 9.3$    & 0.67 & $-11.2 \pm 21.3$ & 0.12 \\
& School Admission  & $6.0 \pm 10.1$   & 0.75 & $4.7 \pm 11.6$   & 0.50 \\
& Prosper Loan      & $23.3 \pm 30.1$  & 0.83 & $59.9 \pm 273.4$ & 0.50 \\
\hline
\end{tabular}
\end{table}

\paragraph{Effect of Bias and Noise on User Utility.}
Table~\ref{tab:utility-gain-noiseless-noisy} reports the mean utility gain,
defined as strategic minus non-strategic utility and scaled by \(10^3\), together with the win rate, i.e., the fraction of $12$ trials in which the strategic user strategy strictly outperforms the non-strategic strategy. Here, \emph{non-strategic} refers to a strategy that leads to an uninfluential equilibrium, where the learner ignores the report and only uses its prior (Section~\ref{sec:possibility-of-influence}). Across different bias levels, the strategic strategy yields positive mean gains across all dataset-model combinations, with all win rates at least \(0.50\). For noise, the results are more heterogeneous. Credit Approval benefits under noise for both model classes, with win rates \(0.88\). Prosper Loan also benefits for Linear SVM, with gain \(0.8\) and win rate \(0.88\). Census is the persistent exception: gains are negative, and win rates fall to \(0.00\) for Linear SVM and \(0.12\) for MLP. This is consistent with large violations of the equilibrium certificate ($\varepsilon_{\mathrm{IC}}$ up to 28.7, Appendix~\ref{sec:additional-experiment-results}) based on Proposition~\ref{prop:eps-equilibrium}. Overall, bias is a more reliable driver of strategic benefit than noise. We provide additional experiment results on factorized prior assumption and approximation errors in Appendix~\ref{sec:additional-experiment-results}.

%% file: Appendix.tex
\section{Limitations, Future Work and Broader Impact}
\label{sec:appendix-limitation-future-impact}

\textbf{Limitations and Future Work.}
We assume that both the user and the learner know the prior over learner-optimal hypotheses. This is standard in strategic communication in game theory, and strategic classification~\cite{kamenica2011bayesianpersuasion,crawford1982cheaptalk,cohen2024strategic}. In real deployments, the learner may only partially estimate prior from past interactions, and users may hold heterogeneous beliefs about the learner's model. Both issues could affect the optimal information-release strategy.
Given the hardness of the general multidimensional problem, our algorithms also rely on structural assumptions on the prior. Identifying other tractable cases is a natural next step.

\paragraph{Broader Impact:} This work studies a fundamental tension in training ML systems: learners and users may have misaligned objectives. On the plus side, by formalizing this conflict and developing algorithms for strategic user communication, we provide tools that help users protect their interests without relying on the learner to voluntarily constrain itself — a requirement that current regulatory approaches struggle to enforce~\cite{yeh2025position,10.1145/3533379}. The main societal benefit is that protection shifts from the provider side, where fairness or debiasing constraints may be difficult to enforce, to the user side, where users are equipped with strategic communication tools. This is especially relevant in high-stakes domains such as lending, hiring, and content recommendation.
A potential negative use is dual application: our model could be used by learners to choose their optimal strategies that offset the strategic user.

\section{Missing Proofs of Section~\ref{sec:possibility-of-influence}}
\label{sec:appendix-possibility}

\begin{proof}[\textbf{Proof of \autoref{thm:existence-onde-dimension}}]
For any measurable set $A\subseteq[0,1]$ with positive $\pi_w$-probability, write
\[
\Phi(A):=\mathbb E_{\pi_w}\!\left[w_\ell^*(D)\mid w_\ell^*(D)\in A\right].
\]
Thus $\Phi_L(t)=\Phi([0,t])$ and $\Phi_H(t)=\Phi((t,1])$.

\textbf{($\Leftarrow$)} Fix $t\in(0,1)$ with
$t+b=(\Phi([0,t])+\Phi((t,1]))/2$. Consider the user strategy with two report regions
$R_1=[0,t]$ and $R_2=(t,1]$. Since $\pi_w$ has full support and is atomless,
$\pi_w(R_1),\pi_w(R_2)>0$, so the learner's best responses are $\Phi(R_1)$ and
$\Phi(R_2)$, with $\Phi(R_1)<\Phi(R_2)$.

For a dataset $D$, the utility gain from reporting $R_2$ rather than $R_1$ is
\[
\bigl(\Phi(R_2)-\Phi(R_1)\bigr)
\bigl(2(w_\ell^*(D)+b)-\Phi(R_1)-\Phi(R_2)\bigr).
\]
Because $\Phi(R_2)>\Phi(R_1)$, the sign of this expression is the sign of
$w_\ell^*(D)+b-(\Phi(R_1)+\Phi(R_2))/2$. By the cutoff condition, this is the sign of
$w_\ell^*(D)-t$. Hence datasets with $w_\ell^*(D)<t$ strictly prefer report region
$R_1$, datasets with $w_\ell^*(D)>t$ strictly prefer report region $R_2$, and the
boundary value $w_\ell^*(D)=t$ is indifferent. Therefore the reporting strategy is
satisfies the user's no-deviation condition. Since the two reports induce distinct learner responses, learning is
influential.

\textbf{($\Rightarrow$)} Suppose learning is influential in some equilibrium. Since the
learner's utility is quadratic, the learner's best response to any on-path report region
$R$ is the posterior mean $\Phi(R)$.

 For any two report regions $R,R'$ with
$\Phi(R)<\Phi(R')$, the utility gain from reporting $R'$ rather than $R$ is
\[
\bigl(\Phi(R')-\Phi(R)\bigr)
\bigl(2(w_\ell^*(D)+b)-\Phi(R)-\Phi(R')\bigr),
\]
which is strictly increasing in $w_\ell^*(D)$. Hence there is a unique crossing threshold
$\tau=(\Phi(R)+\Phi(R'))/2-b$. Therefore, after merging reports that induce the same
learner response and ignoring null sets, equilibrium report regions form an ordered partition $0=t_0<t_1<\cdots<t_n=1$ with $n\ge2$, where
$R_i=(t_{i-1},t_i]$ and the learner responds with $\Phi(R_i)$ to each region.
 At each interior cutpoint $t_i$, the boundary parameter
$w_\ell^*(D)=t_i$ must be indifferent between the adjacent learner responses
$\Phi(R_i)$ and $\Phi(R_{i+1})$. Hence
\[
t_i+b=\frac{\Phi(R_i)+\Phi(R_{i+1})}{2},
\qquad i=1,\ldots,n-1. \tag{$*$}
\]

Define
$g(t):=(\Phi([0,t])+\Phi((t,1]))/2-t-b$. The function $g$ is continuous on $(0,1)$.

At $t_1$, $\Phi([0,t_1])=\Phi(R_1)$, while $\Phi((t_1,1])$ is a weighted average of
$\Phi(R_2),\ldots,\Phi(R_n)$, so $\Phi((t_1,1])\ge \Phi(R_2)$. Therefore, by $(*)$,
\[
g(t_1)\ge \frac{\Phi(R_1)+\Phi(R_2)}{2}-t_1-b=0.
\]

At $t_{n-1}$, $\Phi((t_{n-1},1])=\Phi(R_n)$, while $\Phi([0,t_{n-1}])$ is a weighted
average of $\Phi(R_1),\ldots,\Phi(R_{n-1})$, so
$\Phi([0,t_{n-1}])\le \Phi(R_{n-1})$. Therefore, by $(*)$,
\[
g(t_{n-1})\le \frac{\Phi(R_{n-1})+\Phi(R_n)}{2}-t_{n-1}-b=0.
\]

Since $g$ is continuous and $g(t_1)\ge0\ge g(t_{n-1})$, the intermediate value theorem
gives some $t^*\in[t_1,t_{n-1}]\subset(0,1)$ such that $g(t^*)=0$. Equivalently,
\[
t^*+b=\frac{\Phi([0,t^*])+\Phi((t^*,1])}{2}.
\]
Thus there exists $t^*\in(0,1)$ satisfying the stated condition.
\end{proof}

\begin{proof} [\textbf{Proof of Corollary ~\autoref{corollary:uniform-one-dimentional}}]
Under the uniform prior, $\Phi_L(t)=\frac{t}{2}$ and $\Phi_H(t)=\frac{1+t}{2}.$
Substituting into the cutoff condition from Theorem~\ref{thm:existence-onde-dimension} gives $t+b=\frac{1}{2}\left(\frac{t}{2}+\frac{1+t}{2}\right)
=\frac{1+2t}{4}.$
Rearranging yields
$
t=\frac12-2b.$
An influential equilibrium exists if and only if this cutoff lies strictly inside $(0,1)$, i.e.
$
0<\frac12-2b<1.$
Since $b\ge 0$, this is equivalent to
$0\le b<\frac14$.

\textbf{Intuition:} Given the bias in the user's utility function, we can check this condition in constant time. Informally, under a uniform prior, the existence of an influential equilibrium depends only on the magnitude of the user's bias $b$. When the bias is sufficiently small, the user's preferences are sufficiently aligned with the learner's so that partial information transmission can be sustained: low parameters prefer to induce lower hypotheses, and high parameters prefer to induce higher ones, yielding an influential equilibrium. 
\end{proof}

\begin{proof} [\textbf{Proof of Theorem~\ref{thm:spherical-md-existence}}]
Fix \(\bm b\in\mathbb R^d\). Since \(d>1\), choose a unit vector
\(\bm v\) such that \(\bm v^\top \bm b=0\). Define
\[
R_+=\{\bm w:\bm v^\top(\bm w-\bar{\bm w})\ge 0\},
\qquad
R_-=\{\bm w:\bm v^\top(\bm w-\bar{\bm w})<0\}.
\]
By spherical symmetry, the posterior means induced by these two regions are
\[
\Phi(R_+)=\bar{\bm w}+\alpha\bm v,
\qquad
\Phi(R_-)=\bar{\bm w}-\alpha\bm v
\]
for some \(\alpha>0\). The strict inequality follows because the prior has positive
mass on both sides of the separating hyperplane.

Given the quadratic utility function, a parameter \(\bm w\) prefers report \(R_+\) to \(R_-\) iff
\[
\|\bar{\bm w}+\alpha\bm v-(\bm w+\bm b)\|_2^2
\le
\|\bar{\bm w}-\alpha\bm v-(\bm w+\bm b)\|_2^2 .
\]
After cancelling common terms, this is equivalent to
\[
\bm v^\top(\bm w-\bar{\bm w}+\bm b)\ge 0.
\]
Since \(\bm v^\top\bm b=0\), this reduces to
\[
\bm v^\top(\bm w-\bar{\bm w})\ge 0.
\]
Thus exactly the parameters in \(R_+\) prefer report \(R_+\), and exactly the parameters in
\(R_-\) prefer report \(R_-\). Hence \(\{R_+,R_-\}\) is an equilibrium. Since
\(\Phi(R_+)\neq \Phi(R_-)\), the equilibrium is influential.
\end{proof}

\section{Missing Details of Section~\ref{sec:optimal-information-release}}
\label{section:appendix-optimal-release}

\subsection{Missing Details of Section~\ref{sec:single-dimension-optimal}}
This section includes the proofs of the user's optimal information release problem in a single dimension
setting. We first provide an example that provides intuition of the solution.

\begin{example}
\label{example:loan-scoring-optimal}
Suppose \(w_\ell^*(D)\) is the hypothesis learned from the private dataset \(D\) and
larger values of \(w_\ell^*(D)\) correspond to systematically more favorable loan decisions.
The learner prefers to deploy exactly \(w_\ell^*(D)\), while the user prefers a more lenient hypothesis
\(w_\ell^*(D)+b\). Let the prior on \(w_\ell^*(D)\) be uniform on \([0,1]\), and let \(b=0.1\), then
\(n^*=2\), and the optimal cutpoints are
\(t_0=0\), \(t_1=\frac12-2b=0.3\), and \(t_2=1\). Hence, the user-optimal equilibrium partition is
\([0,1]=[0,0.3]\cup[0.3,1]\). Thus, the user-optimal dataset report does not reveal the exact private dataset \(D\). Instead, it
reveals only whether the hypothesis learned from \(D\) lies in the low-leniency
partition \([0,0.3]\) or the high-leniency partition \([0.3,1]\). For instance, if the true private
dataset \(D\) results in \(w_\ell^*(D)=0.6\), then the user sends the report corresponding to the region \([0.3,1]\), i.e., a report that reveals only that the dataset belongs to the set of datasets whose resulting hypotheses lie in \([0.3,1]\).
\end{example}

\begin{proof} [\textbf{Proof of Lemma ~\ref{lem:ordered-regions-1d}}]
If two dataset reports result in the same learner hypothesis, then treating them as a single
report does not change either agent's utility. Hence, we only compare reports that
result in distinct learner hypotheses.

Consider two reports that result in deployed hypotheses \(w'<w''\). Since
\(U^r(D,h)=U^r(D,w)=-|w-(w_\ell^*(D)+b)|^2\), the user's gain from inducing \(w''\)
instead of \(w'\) is
\begin{equation}
\begin{aligned}
U^r(D,w'')-U^r(D,w')
&=
-\left|w''-(w_\ell^*(D)+b)\right|^2
+
\left|w'-(w_\ell^*(D)+b)\right|^2 \\
&=
-\left(w''-(w_\ell^*(D)+b)\right)^2
+
\left(w'-(w_\ell^*(D)+b)\right)^2 \\
&=
\left(w'-(w_\ell^*(D)+b)\right)^2
-
\left(w''-(w_\ell^*(D)+b)\right)^2 \\
&=
\left(w'-w''\right)
\left(w'+w''-2(w_\ell^*(D)+b)\right) \\
&=
(w''-w')
\left(2(w_\ell^*(D)+b)-w'-w''\right).
\end{aligned}
\end{equation}
Since \(w''-w'>0\), this gain is strictly increasing in \(w_\ell^*(D)\).

Suppose report regions are not ordered. Then there exist datasets \(D,D'\) such
that
\(w_\ell^*(D)<w_\ell^*(D')\), but \(D\) results in the higher learner hypothesis
\(w''\), while \(D'\) results in the lower learner hypothesis \(w'\). Since \(D\) chooses
the report inducing \(w''\), it must weakly prefer \(w''\) to \(w'\). Because the
gain from \(w''\) over \(w'\) is strictly increasing in \(w_\ell^*(D)\), dataset
\(D'\) strictly prefers \(w''\) to \(w'\), contradicting that \(D'\) chooses the
report inducing \(w'\). Therefore higher values of \(w_\ell^*(D)\) cannot result in
lower learner hypotheses, so equilibrium report regions are ordered by
\(w_\ell^*(D)\).
\end{proof}

\begin{proof} [\textbf{Proof of Lemma ~\ref{lem:finite-bound-1d}}]
By Lemma~\ref{lem:ordered-regions-1d}, any equilibrium is an partition
\(\underline w=t_0<t_1<\cdots<t_n=\bar w\). At every interior cutpoint \(t_i\), the
user with \(w_\ell^*(D)=t_i\) must be indifferent between the two adjacent reports.
Otherwise, if one adjacent report gave strictly higher user utility, then values
of \(w_\ell^*(D)\) sufficiently close to \(t_i\) on the other side would also prefer
that report, so \(t_i\) would not be a boundary.

The lower adjacent report induces learner hypothesis \(\Phi(t_{i-1},t_i)\), and the
higher adjacent report induces learner hypothesis \(\Phi(t_i,t_{i+1})\). Therefore,
at the boundary \(t_i\),
\[
-\left(\Phi(t_{i-1},t_i)-(t_i+b)\right)^2
=
-\left(\Phi(t_i,t_{i+1})-(t_i+b)\right)^2.
\]
Since \(\Phi(t_{i-1},t_i)<\Phi(t_i,t_{i+1})\), this is equivalent to $t_i+b
=
\frac{\Phi(t_{i-1},t_i)+\Phi(t_i,t_{i+1})}{2}$.
Since \(\Phi(t_{i-1},t_i)\le t_i\) and \(\Phi(t_i,t_{i+1})\le t_{i+1}\), we get $t_i+b
\le
\frac{t_i+t_{i+1}}{2}$. Therefore \(t_{i+1}-t_i\ge 2b\) for every \(i=1,\ldots,n-1\). Since the whole
support has length \(\bar w-\underline w\), it follows that
\((n-1)2b\le \bar w-\underline w\), and hence
\(n\le 1+(\bar w-\underline w)/(2b)\).
\end{proof}

\begin{proof} [\textbf{Proof of Theorem~\ref{theorem:1d-feasibility}}]
By Lemma~\ref{lem:ordered-regions-1d}, it is without loss of generality to consider ordered partitions. Fix such a partition
\(\underline w=t_0<t_1<\cdots<t_n=\bar w\). Since the learner's utility is
quadratic, the learner's best response on partition \([t_{i-1},t_i]\) is the
posterior mean \(\Phi(t_{i-1},t_i)\), as discussed in Theorem~\ref{thm:existence-onde-dimension}.

Consider an interior cutpoint \(t_i\). The boundary parameter \(w_\ell^*(D)=t_i\) must
be indifferent between the two adjacent reports, i.e., receive the same utility; i.e., $-\left(\Phi(t_{i-1},t_i)-(t_i+b)\right)^2
=
-\left(\Phi(t_i,t_{i+1})-(t_i+b)\right)^2$. Because \(\Phi(t_{i-1},t_i)<\Phi(t_i,t_{i+1})\), this is equivalent to
\eqref{eq:cutpoint-system}. The single-crossing calculation in
Lemma~\ref{lem:ordered-regions-1d} implies that parameters below \(t_i\) prefer the
lower adjacent report and parameters above \(t_i\) prefer the higher adjacent report.
Applying this argument at every cutpoint gives adjacent incentive compatibility.
Since the learner responses are ordered and the user's utility is quadratic in the
learner response, adjacent incentive compatibility rules out deviations to
non-adjacent reports as well. Hence \eqref{eq:cutpoint-system} characterizes
feasible equilibrium partitions.

It remains to select the user-optimal equilibrium. On partition \([t_{i-1},t_i]\),
the learner deploys \(\Phi(t_{i-1},t_i)\). Under the quadratic user utility, the
conditional expected loss on this partition is
\begin{equation}
\mathbb E_{\pi_w}\!\left[
\left(
\Phi(t_{i-1},t_i)-(w_\ell^*(D)+b)
\right)^2
\mid
w_\ell^*(D)\in[t_{i-1},t_i]
\right].
\end{equation}
Since \(\Phi(t_{i-1},t_i)\) is the posterior mean, this equals $\Psi(t_{i-1},t_i)-\Phi(t_{i-1},t_i)^2+b^2$. The term \(b^2\) is fixed across all feasible partitions. Therefore, maximizing the
user's expected utility is equivalent to minimizing
\eqref{eq:user-optimal-objective}. The optimal dataset report is the interval of
the optimal partition containing \(w_\ell^*(D)\).
\end{proof}

\begin{proof} [\textbf{Proof of Corollary~\ref{cor:finite-prior-dp}}]
By Lemma~\ref{lem:ordered-regions-1d}, it is without loss of generality to consider ordered
equilibrium regions. With finite support, ordered regions are contiguous blocks of $w^{(1)}<\cdots<w^{(m)}.$
The proof of Theorem~\ref{theorem:1d-feasibility} applies block by block. The
learner's response to a block \(R_i\) is the posterior mean \(\Phi(R_i)\). For two
adjacent blocks \(R_i,R_{i+1}\), the parameter indifferent between the two induced
responses would be $\frac{\Phi(R_i)+\Phi(R_{i+1})}{2}-b.$
Unlike the atomless case, this indifferent parameter need not be in the support. Hence,
the equality cutpoint condition is replaced by requiring all support points in the
left block to prefer the left response and all support points in the right block to
prefer the right response, which is exactly \eqref{eq:block-ic}.

The user-optimal objective is also the same as in
Theorem~\ref{theorem:1d-feasibility}, with partitions replaced by blocks. On block
\(R_i\), the learner deploys \(\Phi(R_i)\), so the user's expected loss differs
across feasible partitions only through $p(R_i)\left(\Psi(R_i)-\Phi(R_i)^2\right).$
Thus, the user-optimal finite-prior equilibrium minimizes
the objective over feasible contiguous-block partitions.

It remains to justify the runtime. There are \(O(m^2)\) contiguous blocks. Using
prefix sums, the statistics \(p(R)\), \(\Phi(R)\), and \(\Psi(R)\) for all blocks can
be computed in \(O(m^2)\) time. For adjacent blocks \(R=[r,s]\) and
\(R'=[s+1,u]\), feasibility is checked by \eqref{eq:block-ic}. There are
\(O(m^3)\) such adjacent block pairs.

To reformulate the problem as a shortest-path problem, construct a directed acyclic graph whose nodes are contiguous blocks and whose
edges connect feasible adjacent blocks. Assign each block \(R\) cost
\(p(R)(\Psi(R)-\Phi(R)^2)\). A shortest-path dynamic program over this graph finds
the feasible block partition minimizing $\sum_i p(R_i)\left(\Psi(R_i)-\Phi(R_i)^2\right)$. The graph
has \(O(m^2)\) nodes and \(O(m^3)\) edges, so the exact dynamic program runs in
\(O(m^3)\) time.
\end{proof}

\begin{algorithm}[H]
\caption{User-Optimal Dataset Report in One Dimension}
\label{algorithm:one-d-optimal}
\begin{algorithmic}[1]
\State \textbf{Input:} prior \(\pi_w\), bias \(b>0\), private dataset \(D\), statistics oracle
\State \textbf{Output:} user-optimal report \(M^*(D)\)
\If{\(\pi_w\) is finite with support size \(m\)}
    \State Compute the optimal contiguous-block partition using Corollary~\ref{cor:finite-prior-dp}
\Else
    \State \(N_{\max}\gets \left\lfloor 1+(\bar w-\underline w)/(2b)\right\rfloor\)
    \For{\(n=1,\ldots,N_{\max}\)}
        \State Find feasible solutions of \eqref{eq:cutpoint-system} with \(t_0=\underline w\) and \(t_n=\bar w\)
        \State Evaluate each feasible solution using \eqref{eq:user-optimal-objective}
    \EndFor
    \State Keep the feasible partition with the smallest objective value
\EndIf
\State Compute \(w_\ell^*(D)\)
\State Return the region or interval in the stored optimal partition containing \(w_\ell^*(D)\)
\end{algorithmic}
\end{algorithm}

\begin{proof} [\textbf{Proof of Proposition~\ref{prop:efficient-one-dimentional}}]
Under the uniform prior,
\[
\Phi(t_{i-1},t_i)=\frac{t_{i-1}+t_i}{2}.
\]
Substituting into \eqref{eq:cutpoint-system} yields
\[
t_{i+1}-2t_i+t_{i-1}=4b,
\qquad i=1,\dots,n-1.
\]
With boundary conditions $t_0=0$ and $t_n=1$, the unique solution is
\[
t_i=\frac{i}{n}-2b\,i(n-i),
\qquad i=0,\dots,n.
\]
This partition is strictly increasing if and only if $t_1>0$, equivalently
\[
\frac1n-2b(n-1)>0
\quad\Longleftrightarrow\quad
2b\,n(n-1)<1.
\]
Therefore, the largest feasible number of reports is
\[
n^*=\max\{n\in\mathbb N:\;2b\,n(n-1)<1\},
\]
which gives the stated user-optimal partition.
\end{proof}

\subsection{NP-Hardness of User’s Problem with Arbitrary Prior Distributions in Multiple Dimensions}
\label{sec:appendix-np-hard}

\textbf{Problem:} Given arbitrary prior $\pi_{\bm w}$ and quadratic utilities, let \(\mathcal R=\{R_1,\dots,R_k\}\) be a measurable partition of \(\mathcal W\), and
let \(\Phi(R_i)=\mathbb E_{\pi_{\bm w}}[\bm w_\ell^*(D)\mid
\bm w_\ell^*(D)\in R_i]\). Then \(\mathcal R\) is an equilibrium partition if and
only if, up to tie boundaries,
\begin{equation}
R_i
=
\left\{
\bm w\in\mathcal W:
\|\Phi(R_i)-(\bm w+\bm b)\|_2^2
\le
\|\Phi(R_j)-(\bm w+\bm b)\|_2^2
\text{ for all }j
\right\}.
\label{eq:multidim-fixed-point-appendix}
\end{equation}
Among all equilibrium partitions, the user-optimal partition solves

\begin{equation}
\max_{\mathcal R\ \text{satisfying Eq. }\eqref{eq:multidim-fixed-point-appendix}}
\sum_{i=1}^k
\Pr_{\pi_{\bm w}}(\bm w_\ell^*(D)\in R_i)
\mathbb E_{\pi_{\bm w}}\!\left[
-\|\Phi(R_i)-(\bm w_\ell^*(D)+\bm b)\|_2^2
\mid \bm w_\ell^*(D)\in R_i
\right].
\label{eq:multidim-user-optimal}
\end{equation}
If \(\mathcal R^*=\{R_1^*,\dots,R_{k^*}^*\}\) is user-optimal, then
the user-optimal multi-dimensional dataset report is \(M^*(D)=R_i^*\) whenever \(\bm w_\ell^*(D)\in R_i^*\).

\begin{proposition}
\label{prop:convex-information-release}
Fix a finite equilibrium partition \(\mathcal R=\{R_i\}_{i=1}^k\) of
\(\mathcal W\). For either a finite or continuous prior \(\pi_{\bm w}\), the
equilibrium regions satisfy, up to tie-boundary assignments,
\begin{equation}
R_i
=
\mathcal W
\cap
\bigcap_{j\neq i}
\left\{
\bm w:
2\bigl(\Phi(R_i)-\Phi(R_j)\bigr)^\top \bm w
\ge
\|\Phi(R_i)\|_2^2-\|\Phi(R_j)\|_2^2
-2\bigl(\Phi(R_i)-\Phi(R_j)\bigr)^\top \bm b
\right\}.
\label{eq:multidim-halfspace-cell}
\end{equation}
Thus, every equilibrium region $R_i$ is represented by halfspace comparisons on
\(\mathcal W\). If \(\mathcal W\) is convex, then each \(R_i\) is convex. If
\(\mathcal W\) is polyhedral, then each \(R_i\) is a convex polyhedron. If
\(\mathcal W\) is also bounded, then each \(R_i\) is a convex polytope.
\end{proposition}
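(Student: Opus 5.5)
The plan is to start from the fixed-point characterization \eqref{eq:multidim-fixed-point-appendix} and expand the quadratic comparisons. First I would justify that characterization from Definition~\ref{definition:equilibrium}, as in the proof of Theorem~\ref{theorem:1d-feasibility}. Because the learner's utility is $-\|\bm w-\bm w_\ell^*(D)\|_2^2$, its best response to any on-path region $R_i$ with positive prior mass is the posterior mean $\Phi(R_i)$. The user's no-deviation condition then says that a type $\bm w$ assigned to $R_i$ must weakly prefer the induced hypothesis $\Phi(R_i)$ to every other $\Phi(R_j)$. This holds for finite and continuous priors alike, since only the posterior means enter. The qualifier ``up to tie-boundary assignments'' covers types indifferent between two regions, and for atomless priors the null sets.

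The key computation is to expand, for fixed $i\neq j$, the inequality $\|\Phi(R_i)-(\bm w+\bm b)\|_2^2\le\|\Phi(R_j)-(\bm w+\bm b)\|_2^2$. Both sides contain $\|\bm w+\bm b\|_2^2$, so it cancels, and the inequality becomes
\[
\|\Phi(R_i)\|_2^2-2\Phi(R_i)^\top(\bm w+\bm b)\le\|\Phi(R_j)\|_2^2-2\Phi(R_j)^\top(\bm w+\bm b).
\]
Rearranging gives $2(\Phi(R_i)-\Phi(R_j))^\top\bm w\ge\|\Phi(R_i)\|_2^2-\|\Phi(R_j)\|_2^2-2(\Phi(R_i)-\Phi(R_j))^\top\bm b$, which is exactly the halfspace in \eqref{eq:multidim-halfspace-cell}. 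Intersecting over $j\ne i$ and with $\mathcal W$ yields the stated representation of $R_i$.

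The structural consequences follow directly. Each set $\{\bm w: \bm a^\top\bm w\ge c\}$ is a closed halfspace, or all of $\mathbb R^d$ when $\bm a=\mathbf 0$. A finite intersection of convex sets is convex, so $R_i$ is convex whenever $\mathcal W$ is. If $\mathcal W$ is polyhedral, that is, a finite intersection of halfspaces, then $R_i$ is still a finite intersection of halfspaces and hence a convex polyhedron. If in addition $\mathcal W$ is bounded, $R_i$ is a bounded polyhedron, which is a polytope.

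The main subtlety is the case where two regions share a posterior mean, $\Phi(R_i)=\Phi(R_j)$. The halfspace then degenerates to $0\ge 0$, which is trivially true, and the two regions induce the same hypothesis. Following Lemma~\ref{lem:ordered-regions-1d}, I would merge such regions, which changes neither party's utility, so that the halfspace description is not vacuous. I would also note that the tie boundaries are hyperplanes, so their assignment is immaterial up to the stated qualifier.
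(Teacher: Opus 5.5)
Your proposal is correct and follows essentially the same route as the paper. Both arguments expand $\|\Phi(R_i)-(\bm w+\bm b)\|_2^2\le\|\Phi(R_j)-(\bm w+\bm b)\|_2^2$, cancel $\|\bm w+\bm b\|_2^2$ to get a halfspace, intersect over $j\neq i$ and with $\mathcal W$, and read off convexity, polyhedrality, and boundedness; your explicit handling of the degenerate case $\Phi(R_i)=\Phi(R_j)$ by merging regions, which leaves the posterior mean unchanged, is extra care that the paper leaves implicit in ``up to tie-boundary assignments.''
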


\begin{proof} [\textbf{Proof of Proposition~\ref{prop:convex-information-release}}]
For \(\bm w\in R_i\), region \(i\) must be optimal, so
\(\|\Phi(R_i)-(\bm w+\bm b)\|_2^2
\le
\|\Phi(R_j)-(\bm w+\bm b)\|_2^2\) for all \(j\).
Equivalently,
\[
2\bigl(\Phi(R_i)-\Phi(R_j)\bigr)^\top \bm w
\ge
\|\Phi(R_i)\|_2^2-\|\Phi(R_j)\|_2^2
-2\bigl(\Phi(R_i)-\Phi(R_j)\bigr)^\top \bm b .
\]
Thus, the constraint that region \(i\) is weakly preferred to region \(j\) is a
halfspace in \(\bm w\). Intersecting these halfspaces over all \(j\neq i\), and
then intersecting with \(\mathcal W\), gives \eqref{eq:multidim-halfspace-cell},
up to tie-boundary assignments. Therefore, each \(R_i\) is an intersection of
convex sets and is convex. If \(\mathcal W\) is polyhedral, this intersection is a
convex polyhedron. If \(\mathcal W\) is also bounded, it is a convex polytope.
\end{proof}


\paragraph{Complexity Intuition.}
The optimal information-release problem is to choose an equilibrium partition
\(\mathcal R=\{R_1,\dots,R_k\}\) that maximizes
\eqref{eq:multidim-user-optimal}. Proposition~\ref{prop:convex-information-release}
turns feasibility into a halfspace fixed-point test. A brute-force procedure
enumerates candidate finite partitions of \(\mathcal W\). For each candidate
partition, it uses \(\mathrm{Oracle}(\pi_{\bm w},R_i)\) to compute the prior mass and
posterior mean \(\Phi(R_i)\) of each cell, checks
\eqref{eq:multidim-halfspace-cell}, and evaluates the objective
\eqref{eq:multidim-user-optimal}. It returns a feasible partition with the highest
value. The optimal information release for dataset \(D\) is then the cell
\(R_i\) containing \(\bm w_\ell^*(D)\).

When \(\pi_{\bm w}\) has finite support on \(n\) points, this search is concrete but exponential in \(n\), since it must enumerate partitions of those points. When \(\pi_{\bm w}\) is continuous, the same idea is only conceptual, since there are infinitely many possible partitions. Moreover, even for a fixed partition, computing the output of \(\mathrm{Oracle}\) exactly can itself be intractable. The following result contrasts with the
one-dimensional uniform case, where
the oracle has a closed-form implementation.

\begin{proposition}
\label{prop:sharp-hard}
Suppose \(\pi_{\bm w}\) is continous and uniform on \([0,1]^d\), and let
\(R\subseteq[0,1]^d\) be a report region given as a convex polytope in a halfspace
representation. The exact evaluation of
\(\mathrm{Oracle}(\pi_{\bm w},R)\), which returns the prior mass and posterior mean
of \(R\), is \#P-hard.
\end{proposition}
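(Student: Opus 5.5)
The plan is to reduce from computing the volume of a polytope in halfspace representation, which is \#P-hard by the classical result of Dyer and Frieze. The oracle's first output, the prior mass $p(R)=\Pr_{\pi_{\bm w}}(\bm w_\ell^*(D)\in R)$, is exactly $\mathrm{vol}(R\cap[0,1]^d)=\mathrm{vol}(R)$ under the uniform prior on $[0,1]^d$. So an exact oracle immediately gives exact volume computation for polytopes contained in the unit cube.

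\textbf{Step 1 (normalization).} Take an arbitrary bounded polytope $P=\{\bm x:A\bm x\le \bm c\}$ with rational data, which is an instance of the \#P-hard volume problem. Compute in polynomial time a bounding box $[-L,L]^d$ with $L$ of polynomial bit length, using standard vertex-size bounds: the coordinates of the vertices of $P$ have polynomially bounded encoding length. Apply the affine map $\bm x\mapsto(\bm x+L\mathbf 1)/(2L)$. This yields a polytope $R=\{\bm w:A'\bm w\le \bm c'\}\subseteq[0,1]^d$, still in halfspace representation and of polynomial size, with $\mathrm{vol}(P)=(2L)^d\,\mathrm{vol}(R)$. If needed, also append the $2d$ cube constraints $0\le w_k\le 1$ so that $R\subseteq[0,1]^d$ holds syntactically.

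\textbf{Step 2 (reduction).} Call $\mathrm{Oracle}(\pi_{\bm w},R)$ once and read off $p(R)=\mathrm{vol}(R)$. Multiply by $(2L)^d$, whose bit length is polynomial, to recover $\mathrm{vol}(P)$ exactly. The oracle's input requires positive mass, so handle the degenerate case first: decide whether $P$ is full-dimensional by linear programming, for instance by maximizing the slack $s$ subject to $A\bm x+s\mathbf 1\le \bm c$. If it is not full-dimensional, output volume $0$ directly. This gives a polynomial-time Turing reduction, so exact evaluation of the oracle is \#P-hard.

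\textbf{Step 3 (posterior mean).} For completeness, I would remark that hardness persists even if only the posterior mean $\Phi(R)$ is returned. The idea is to glue $R$ to a known reference box along one coordinate, for example place $R$ in $w_1\in[0,1/2]$ and a box of known volume in $w_1\in(1/2,1]$. The mean first coordinate of the union is then a weighted average whose weight determines $\mathrm{vol}(R)$. This step is optional, because the proposition asks for the oracle output that includes the mass.

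The main obstacle I expect is the bookkeeping in Step 1. The rescaling must keep polynomial encoding size, and the exact output, a rational number, must have polynomial bit length so that the reduction is genuinely polynomial. I would cite the Dyer--Frieze hardness together with the standard vertex bit-size bounds from Grötschel--Lovász--Schrijver rather than reprove them.
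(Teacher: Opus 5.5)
Your argument is correct and matches the paper's proof. Under the uniform density on $[0,1]^d$ the mass output is $p(R)=\operatorname{vol}(R)$, so an exact oracle computes volumes of H-polytopes, which is \#P-hard by Dyer--Frieze. Your Step~1 rescaling and degenerate-case LP are extra care the paper omits. They, and your bit-length worry, become unnecessary if you reduce directly from the Dyer--Frieze knapsack instances $\{\bm x\in[0,1]^d:\bm a^\top\bm x\le c\}$. These already lie in the cube, and their volumes have polynomial bit length by the inclusion--exclusion formula; vertex-size bounds alone do not settle bit length for arbitrary H-polytopes.

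Drop Step~3 as written. $R$ glued to a box is generally not convex, so it is not an admissible input under the proposition's convex-polytope hypothesis, and the recovery would also need a second query for $\Phi(R)$. It is not needed anyway, because hardness of the mass output already makes the oracle \#P-hard.
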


\begin{proof}[\textbf{Proof of Proposition~\ref{prop:sharp-hard}}]
Under the uniform prior on \([0,1]^d\), the density is \(1\). Hence the prior mass
of a report region \(R\subseteq[0,1]^d\) is
\[
p(R)=\Pr_{\pi_{\bm w}}(\bm w\in R)=\int_R 1\,d\bm w=\operatorname{vol}(R).
\]
Thus, exact oracle evaluation contains, as a special case, exact computation of the
volume of a convex polytope given in halfspace representation. Exact polytope
volume computation is \#P-hard (~\cite{dyer1988sharpp}). Therefore exact evaluation
of \(\mathrm{Oracle}(\pi_{\bm w},R)\) is \#P-hard.
\end{proof}

\begin{theorem}
\label{theorem:np-hard-multiple-dimension}
For $d\ge 3$, the user's optimal information-release problem under quadratic utilities is NP-hard. Specifically, fix $\bm b=(0,0,1,0,\ldots,0)$
and let $\pi_{\bm w}$ be a finite uniform prior over rational points in $\mathbb R^d$. It is NP-hard to decide whether there exists an equilibrium partition into at most $k$ reports with expected user loss at most $B$.
\end{theorem}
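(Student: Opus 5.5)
We reduce from planar $k$-means, which is NP-hard: given $n$ rational points $\bm p_1,\dots,\bm p_n\in\mathbb R^2$, an integer $k$ and a bound $B_0$, decide whether there is a partition into at most $k$ clusters with total within-cluster squared distance to the centroids at most $B_0$. The plan is to embed the points in the first two coordinates and use the third coordinate, which carries the bias $\bm b=(0,0,1,0,\ldots,0)$, as a neutral direction. Concretely, set $\bm w_j=(\bm p_j,0,0,\ldots,0)\in\mathbb R^d$ and let $\pi_{\bm w}$ be uniform on $\{\bm w_1,\dots,\bm w_n\}$. Because every support point has third coordinate $0$, every posterior mean $\Phi(R)$ also has third coordinate $0$, and so $\bm b$ is orthogonal to all vectors $\Phi(R_i)-\Phi(R_j)$ and $\Phi(R_i)-\bm w$ that arise.

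The first step is to simplify the equilibrium condition \eqref{eq:multidim-fixed-point-appendix} and the objective. Expanding the norm gives
$\|\Phi(R_i)-(\bm w+\bm b)\|_2^2=\|\Phi(R_i)-\bm w\|_2^2+\|\bm b\|_2^2-2\bm b^\top(\Phi(R_i)-\bm w)=\|\Phi(R_i)-\bm w\|_2^2+1$.
Hence, on the support, a partition is an equilibrium if and only if every point is weakly closest to the centroid of its own cell, that is, the partition is a centroidal Voronoi (Lloyd-stable) partition. The expected user loss equals $\tfrac1n\sum_i\sum_{\bm w_j\in R_i}\|\bm w_j-\Phi(R_i)\|_2^2+1$, which is $\tfrac1n$ times the $k$-means cost plus $1$. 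We therefore set $B=B_0/n+1$.

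The second step, which is the main point, is to close the gap between ``an arbitrary partition with cost at most $B_0$'' and ``an equilibrium partition with cost at most $B_0$''. For the forward direction we argue as follows.
\begin{itemize}
\item Take an optimal $k$-means clustering of cost at most $B_0$.
\item An optimal clustering is automatically Voronoi-stable with respect to its own centroids. If some point were strictly closer to another cell's centroid, reassigning it would keep the centroids fixed and strictly lower the cost. Recomputing the centroids can then only lower the cost further, which contradicts optimality.
\item Ties can be assigned so that the existing cells are preserved.
\item Empty clusters are harmless, because we only need at most $k$ reports.
\end{itemize}
Hence an equilibrium with loss at most $B$ exists. The converse direction is immediate: any equilibrium partition into at most $k$ reports is a clustering with the same cost, so its $k$-means cost is at most $B_0$.

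Finally, we check that the reduction is polynomial. The coordinates are rational, and with $\|\bm b\|_2^2=1$ we have $B=B_0/n+1$, which is rational. We also note that the restriction ``on the support'' matches the finite-prior equilibrium notion used in Corollary~\ref{cor:finite-prior-dp}, where regions are identified with blocks of support points and halfspace cells as in Proposition~\ref{prop:convex-information-release} can realize them. The requirement $d\ge3$ exists only to give $\bm b$ a coordinate orthogonal to the planar instance. The one subtle spot we expect is the tie-handling in the Voronoi stability argument, together with confirming that the planar $k$-means hardness holds with rational inputs and with $k$ part of the input. Both are standard in Mahajan, Nimbhorkar and Varadarajan's result.
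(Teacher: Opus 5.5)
Your proposal is correct and follows the paper's reduction from planar $k$-means essentially verbatim. It uses the same embedding $\bm w_j=(\bm p_j,0,\ldots,0)$, the same identity $\|\Phi(R)-(\bm w+\bm b)\|_2^2=\|\Phi(R)-\bm w\|_2^2+1$ (which turns the equilibrium condition into the nearest-centroid condition), and the same threshold $B=B_0/n+1$. The one difference is in the forward direction: you take a cost-minimizing clustering, which is automatically nearest-centroid stable, whereas the paper runs a local-search descent from an arbitrary clustering of cost at most $B_0$; both arguments are valid.
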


\begin{proof}
We reduce from planar $k$-means, which is NP-hard~\cite{mahajan2012planar}. An instance consists of points $y_1,\ldots,y_n\in\mathbb Q^2$, an integer $k$, and a threshold $L\in\mathbb Q$. The question is whether there exists a partition
\[
\mathcal C=\{C_1,\ldots,C_m\},
\qquad m\le k,
\]
such that
\[
\mathrm{cost}(\mathcal C)
:=
\sum_{\alpha=1}^m
\sum_{i\in C_\alpha}
\|y_i-\bar y_\alpha\|_2^2
\le L,
\qquad
\bar y_\alpha
=
\frac{1}{|C_\alpha|}
\sum_{i\in C_\alpha} y_i .
\]

\textbf{Construction.}
Set $d=3$ and $\bm b=(0,0,1)$. For each point $y_i\in\mathbb Q^2$, define
\[
w_i=(y_i,0)\in\mathbb Q^3.
\]
Let $\pi_{\bm w}$ be the uniform prior over $\{w_1,\ldots,w_n\}$, and set $B=\frac{L+n}{n}.$
The learner has quadratic utility $\widetilde U^\ell(w_i,w)=-\|w-w_i\|_2^2,$
so after a report cell $P_\alpha$, the learner best-responds with the posterior mean
\[
\Phi(P_\alpha)
=
\frac{1}{|P_\alpha|}
\sum_{w_i\in P_\alpha} w_i
=
(\bar y_\alpha,0).
\]
The user has quadratic utility $\widetilde U^r(w_i,w)
=
-\|w-(w_i+\bm b)\|_2^2.$

\textbf{Loss identity.}
For any partition
\[
\mathcal P=\{P_1,\ldots,P_m\}
\]
of $\{w_1,\ldots,w_n\}$, if $w_i\in P_\alpha$, then the user's loss from the learner response $\Phi(P_\alpha)$ is
\[
\|\Phi(P_\alpha)-(w_i+\bm b)\|_2^2
=
\|(\bar y_\alpha,0)-(y_i,1)\|_2^2
=
\|\bar y_\alpha-y_i\|_2^2+1.
\]
Therefore the expected user loss is
\[
\mathrm{Loss}(\mathcal P)
=
\frac{1}{n}
\sum_{\alpha=1}^m
\sum_{w_i\in P_\alpha}
\|\Phi(P_\alpha)-(w_i+\bm b)\|_2^2
=
\frac{\mathrm{cost}(\mathcal C)+n}{n}.
\]

\textbf{Equilibrium condition.}
A partition $\mathcal P$ is an equilibrium if and only if no user type prefers to report another cell. Thus, for every $w_i\in P_\alpha$ and every cell $P_\beta$,
\[
\|\Phi(P_\alpha)-(w_i+\bm b)\|_2^2
\le
\|\Phi(P_\beta)-(w_i+\bm b)\|_2^2.
\]
Substituting the construction gives
\[
\|\bar y_\alpha-y_i\|_2^2
\le
\|\bar y_\beta-y_i\|_2^2,
\]
which is exactly the nearest-centroid condition for $k$-means.

If the planar $k$-means instance is a yes-instance, then there exists a partition with cost at most $L$. Starting from such a partition, repeatedly reassign any point to a nearest current centroid whenever this strictly decreases the $k$-means objective, recomputing centroids after each reassignment. This process terminates at a nearest-centroid partition with cost at most $L$. Lifting this partition to $\mathbb R^3$ gives an equilibrium partition $\mathcal P$ with at most $k$ reports and
\[
\mathrm{Loss}(\mathcal P)
=
\frac{\mathrm{cost}(\mathcal C)+n}{n}
\le
\frac{L+n}{n}
=
B.
\]

Conversely, if there exists an equilibrium partition $\mathcal P$ with at most $k$ reports and
\[
\mathrm{Loss}(\mathcal P)\le B,
\]
then projecting each cell onto the first two coordinates gives a planar clustering $\mathcal C$ with at most $k$ clusters and
\[
\mathrm{cost}(\mathcal C)
=
n\cdot \mathrm{Loss}(\mathcal P)-n
\le
nB-n
=
L.
\]
Thus the planar $k$-means instance is a yes-instance.

The reduction is polynomial, so the information-release problem is NP-hard for $d=3$. For $d>3$, append zero coordinates to all $w_i$ and set
\[
\bm b=(0,0,1,0,\ldots,0).
\]
The same argument applies unchanged.
\end{proof}

\subsection{Missing Proofs for Efficient Algorithm for Multiple Dimension Section~\ref{sec:efficient-multidimentional-optimal}}
\label{sec:appendix-efficient-optimal-multi-dimension}
We provide an example of a factorized prior 

\begin{example}
Consider a loan-scoring problem with two learned parameters, income and credit score. Let the private dataset \(D\) result in
\(\bm w_\ell^*(D)=(0.6,0.4)\), and let the user's bias be
\(\bm b=(0.3,0.1)\). Then the user's ideal hypothesis is
\(\bm w_\ell^*(D)+\bm b=(0.9,0.5)\), and the conflict direction is
\(\widehat{\bm b}=\bm b/\|\bm b\|_2=(3/\sqrt{10},1/\sqrt{10})\).
The scalar conflict coordinate is
\(s=\widehat{\bm b}^{\top}\bm w_\ell^*(D)=2.2/\sqrt{10}\), and the
orthogonal agreement component is
$
    \mathbf z
    =
    \bm w_\ell^*(D)-s\widehat{\bm b}=
    (-0.06,0.18).
$
Thus \(s\) measures the component of the learned hypothesis in the user's
preferred direction, while \(\mathbf z\) records the remaining
information.
\end{example}

\begin{proof} [\textbf{Proof of Theorem~\ref{thm:factorized-lift}}]
Let \(\beta=\|\bm b\|_2\), so \(\bm b=\beta\hat{\bm b}\). For every
\(\bm w\in\mathcal W\), write
\(\bm w=s\hat{\bm b}+\mathbf z\), where
\(s=\hat{\bm b}^{\top}\bm w\) and
\(\mathbf z=\bm w-(\hat{\bm b}^{\top}\bm w)\hat{\bm b}\).
By the factorized-prior assumption, \(s\) and \(\mathbf z\) are independent under the prior.

Let \(t_0<t_1<\cdots<t_n\) be a one-dimensional equilibrium partition for the scalar
problem with \(s\) and bias \(\beta\). For each interval, define
\(\Phi_i=\mathbb E[s\mid s\in[t_{i-1},t_i]]\). We lift this scalar partition to
$d>1$ dimensions as follows. For each interval \([t_{i-1},t_i]\) and each orthogonal component
\(\mathbf z\), define the cell
\begin{equation*}
R_i(\mathbf z)
=
\left\{
\bm w\in\mathcal W:
\bm w-(\hat{\bm b}^{\top}\bm w)\hat{\bm b}=\mathbf z,
\quad
\hat{\bm b}^{\top}\bm w\in[t_{i-1},t_i]
\right\}.
\end{equation*}
The user reports the unique cell \(R_i(\mathbf z)\) containing the hypothesis
\(\bm w_\ell^*(D)\), with any fixed tie-breaking rule at cutpoints.

Fix a true hypothesis \(\bm w_\ell^*(D)=s\hat{\bm b}+\mathbf z\), where
\(s\in[t_{i-1},t_i]\). After report \(R_i(\mathbf z)\), the learner knows the agreement
component \(\mathbf z\) and knows that \(s\in[t_{i-1},t_i]\). Since \(s\) and
\(\mathbf z\) are independent, observing \(\mathbf z\) does not change the posterior
distribution of \(s\). Therefore, the learner's posterior mean is
\(\Phi_i\hat{\bm b}+\mathbf z\).

We now verify that no user weight has a profitable deviation. Suppose the user deviates to
another cell \(R_j(\mathbf z')\). The learner then deploys
\(\Phi_j\hat{\bm b}+\mathbf z'\). Based on the user's utility function, the user's ideal point is
\(\bm w_\ell^*(D)+\bm b=(s+\beta)\hat{\bm b}+\mathbf z\). Hence the loss from
reporting \(R_j(\mathbf z')\) is
\begin{equation*}
\left\|
\Phi_j\hat{\bm b}+\mathbf z'
-
\bigl((s+\beta)\hat{\bm b}+\mathbf z\bigr)
\right\|_2^2.
\end{equation*}
Because \(\mathbf z'-\mathbf z\) is orthogonal to \(\hat{\bm b}\), this equals
\begin{equation*}
(\Phi_j-s-\beta)^2+\|\mathbf z'-\mathbf z\|_2^2.
\end{equation*}
For any scalar interval \(j\), this expression is minimized by choosing
\(\mathbf z'=\mathbf z\). Thus, deviating in the agreement component can never improve the
user's utility.

It remains only to consider deviations in the scalar interval. Since
\(t_0<t_1<\cdots<t_n\) is a one-dimensional equilibrium partition, for every
\(s\in[t_{i-1},t_i]\) and every \(j\),
\begin{equation*}
(\Phi_i-s-\beta)^2
\le
(\Phi_j-s-\beta)^2.
\end{equation*}
Adding the nonnegative term \(\|\mathbf z'-\mathbf z\|_2^2\) gives
\begin{equation*}
(\Phi_i-s-\beta)^2
\le
(\Phi_j-s-\beta)^2+\|\mathbf z'-\mathbf z\|_2^2
\end{equation*}
for every possible reported cell \(R_j(\mathbf z')\). Therefore, the lifted reporting rule is a
multidimensional equilibrium.
Finally, in every lifted equilibrium of this form, the agreement component is
revealed exactly. Thus, the only remaining strategic loss is the one-dimensional loss
in the conflict coordinate \(s\). Therefore, optimizing over lifted equilibria is
equivalent to optimizing over the corresponding one-dimensional equilibrium
partitions. If the scalar partition is user-optimal for the one-dimensional problem,
then its lift is user-optimal among lifted equilibria. For an input dataset \(D\),
the report is the lifted cell containing \(\bm w_\ell^*(D)\). Hence, if
\(s(D)\in[t_{i-1},t_i]\), the optimal lifted dataset report is
\(M^*(D)=R_i(\bm z(D))\).
\end{proof}

\begin{corollary}
\label{corollary:factorized-uniform}
If, in addition, the conflict coordinate \(s=\hat{\bm b}^{\top}\bm w_\ell^*(D)\)
is uniform and continous on \([0,1]\), then the user-optimal equilibrium has cutpoints
\(t_i=\frac{i}{n^*}-2\beta\,i(n^*-i)\), for \(i=0,\dots,n^*\), where
\(n^*=\max\{n\in\mathbb N:\;2\beta n(n-1)<1\}\) and \(\beta=\|\bm b\|_2\).
\end{corollary}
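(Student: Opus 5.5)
The plan is to reduce the statement to Theorem~\ref{thm:factorized-lift} together with Proposition~\ref{prop:efficient-one-dimentional}, applied to the scalar conflict coordinate.

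First, I will write $\bm b=\beta\hat{\bm b}$ with $\beta=\|\bm b\|_2>0$ and decompose $\bm w_\ell^*(D)=s\hat{\bm b}+\mathbf z$. Under the factorized-prior assumption, $s$ and $\mathbf z$ are independent. By hypothesis, the marginal $\pi_s$ is continuous and uniform on $[0,1]$. The scalar problem obtained by projecting onto $\hat{\bm b}$ then has exactly the one-dimensional quadratic form of Section~\ref{sec:single-dimension-optimal}: the learner's ideal point is $s$, the user's ideal point is $s+\beta$, and the prior is uniform on $[\underline s,\bar s]=[0,1]$.

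Next, I will invoke Proposition~\ref{prop:efficient-one-dimentional} with $b$ replaced by $\beta$. This gives the one-dimensional user-optimal cutpoints $t_i=\frac{i}{n^*}-2\beta\, i(n^*-i)$ with $n^*=\max\{n\in\mathbb N: 2\beta n(n-1)<1\}$. The key ingredients there are the uniform posterior mean $\Phi(t_{i-1},t_i)=(t_{i-1}+t_i)/2$, the resulting second-difference recursion $t_{i+1}-2t_i+t_{i-1}=4\beta$, and monotonicity, which holds iff $t_1>0$. Optimality among scalar equilibria comes from Theorem~\ref{theorem:1d-feasibility}: by the finest-partition reasoning, the largest feasible $n$ minimizes within-cell variance.

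Then I will apply Theorem~\ref{thm:factorized-lift}. The scalar partition lifts to a multidimensional equilibrium that reveals $\mathbf z$ exactly. Because that scalar partition is user-optimal, its lift is user-optimal among lifted equilibria. The user's expected loss of the lift is the scalar loss alone, since the learner's response on the $\mathbf z$ component is exact. Hence the lifted equilibrium has the stated cutpoints.

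The main obstacle is interpreting ``user-optimal equilibrium'' in the corollary. Theorem~\ref{thm:factorized-lift} only guarantees optimality among lifted equilibria, so I would state the corollary in that sense rather than claim global multidimensional optimality. A secondary obstacle is justifying that larger $n$ is better in Theorem~\ref{theorem:1d-feasibility}'s objective. If I had to verify this directly, I would compute the uniform objective $\sum_i (t_i-t_{i-1})^3/12$ and check that it is decreasing in $n$ along the closed-form family.
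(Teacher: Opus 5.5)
Your proposal is correct and matches the paper's proof, which invokes Theorem~\ref{thm:factorized-lift} to reduce to the scalar coordinate $s$ with bias $\beta=\|\bm b\|_2$ and then reads off the cutpoints from Proposition~\ref{prop:efficient-one-dimentional}; reading ``user-optimal'' as ``user-optimal among lifted equilibria'' is exactly what that theorem supports. Your optional check on the number of cells also goes through: along the closed-form family the cell lengths are $\frac1n+2\beta(2i-n-1)$, so the objective equals $\frac{1}{12}\bigl(\frac{1}{n^2}+4\beta^2(n^2-1)\bigr)$, which strictly decreases from $n$ to $n+1$ precisely when $2\beta n(n+1)<1$ (i.e., whenever $n+1$ is feasible), so $n^*$ is optimal.
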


\begin{proof} [\textbf{Proof of Corollary~\ref{corollary:factorized-uniform}}]
By Theorem~\ref{thm:factorized-lift}, the multidimensional problem reduces to the
one-dimensional problem in the conflict coordinate \(s\) with bias \(\beta=\|\bm b\|_2\).
When \(s\sim\mathrm{Unif}[0,1]\), Proposition~\ref{prop:efficient-one-dimentional} gives
the stated closed-form cutpoints.
\end{proof}

\begin{proof}[\textbf{Proof of Proposition~\ref{prop:eps-equilibrium}}]
Fix any atom $\bm w_j\in R_i$ and any alternative returned report
$\ell\neq i$. The user's utility from inducing response $\Phi(R)$ is
\[
-\|\Phi(R)-(\bm w_j+\bm b)\|_2^2.
\]
Therefore, the utility gain from deviating from report $i$ to report $\ell$ is
\[
-\|\Phi(R_\ell)-(\bm w_j+\bm b)\|_2^2
+
\|\Phi(R_i)-(\bm w_j+\bm b)\|_2^2
=
\|\Phi(R_i)-(\bm w_j+\bm b)\|_2^2
-
\|\Phi(R_\ell)-(\bm w_j+\bm b)\|_2^2.
\]
By definition of $\varepsilon_{\mathrm{IC}}$, the positive part of this gain is at
most $\varepsilon_{\mathrm{IC}}$ for every atom $\bm w_j\in R_i$ and every
alternative report $\ell\neq i$. Hence no atom can improve its utility by more
than $\varepsilon_{\mathrm{IC}}$ by deviating to another returned report. This proves
that the returned partition is an $\varepsilon_{\mathrm{IC}}$-equilibrium on the
support of $\pi_{\bm w}$.

To compute $\varepsilon_{\mathrm{IC}}$, first compute the posterior means
$\Phi(R_1),\ldots,\Phi(R_k)$ using $\mathrm{Oracle}(\pi_{\bm w},R_i)$. Then evaluate
each of the $n$ atoms against each of the $k$ learner responses. Each
squared-distance computation in $\mathbb R^d$ costs $O(d)$, so the total scanning
time is $O(nkd)$.
\end{proof}

\section{The Complete Exposition of Section~\ref{sec:noise-in-learning} Noise in Learning}
\label{sec:appendix-noise}
\subsection{Single Dimension}
\label{sec:noisy-single-dimension}
We consider the same utility functions \(U^\ell,U^r\) and arbitrary prior \(\pi_w\) distribution over \(w_\ell^*(D)\) as in Section~\ref{sec:single-dimension-optimal}. 
The user sends a report \(M\in\mathcal M\), but the learner does not observe \(M\) exactly. Instead, the learner observes a noisy interpretation \(\tilde M\in\widetilde{\mathcal M}\).

\paragraph{Noisy statistics oracle.}
In the noisy model, the learner observes only \(\tilde M\), so its posterior depends on the user's strategy $\sigma_r$ rather than on a single dataset report. We therefore define the statistics oracle for prior $\pi_w$ by
$\mathrm{Oracle}(\pi_{\bm w},\sigma_r,\tilde M).$
Now suppose \(\pi_{w}\) has finite support
$\{w^{(1)},\dots,w^{(n)}\}\subseteq[\underline w,\bar w]\),
with \(\pi_{w}(w^{(i)})=\pi_{i}>0\) and \(\sum_{i=1}^n \pi_{i}=1\). Then we can equivalently represent the \textbf{user strategy} by a report tuple
\(\mathbf M=(M_1,\dots,M_n)\in\mathcal M^n\),
where \(M_i\) is the dataset report sent when \(w_\ell^*(D)=w^{(i)}\). In this case, the oracle returns the posterior mean as
$\Phi(\tilde M;\mathbf M)=\frac{\sum_{i=1}^n \pi_{i}\,w^{(i)}\,G(\tilde M\mid M_i)}{\sum_{i=1}^n \pi_{i}\,G(\tilde M\mid M_i)}.$
Throughout this section, we assume \(\pi_w\) has finite support.

\paragraph{Baseline complexity.}
Under finite support, a user strategy is a report tuple \(\mathbf M=(M_1,\dots,M_n)\in\mathcal M^n\). If \(\mathcal M\) is finite, then there are \(|\mathcal M|^n\) such strategies. Hence, even if each call to \(\mathrm{Oracle}\) is treated as \(O(1)\), a brute-force search for a user-optimal equilibrium is exponential in \(n\), with total complexity on the order of $|\mathcal M|^n \cdot n \cdot |\mathcal M| \cdot |\widetilde{\mathcal M}|,$
when \(\widetilde{\mathcal M}\) is finite. If \(\widetilde{\mathcal M}\) is continuous, the same conclusion holds after replacing \(|\widetilde{\mathcal M}|\) by the cost of numerically integrating over \(\tilde M\).

\paragraph{Noise model.}
We assume $\mathcal M\subseteq[0,1]\) and specialize to the additive Gaussian rule:
$G(\tilde M\mid M)=\frac{1}{\sqrt{2\pi}\sigma}\exp\!\left(-\frac{(\tilde M-M)^2}{2\sigma^2}\right),$
equivalently, \(\tilde M=M+\varepsilon\) with \(\varepsilon\sim\mathcal N(0,\sigma^2)\).
We use the Gaussian model because it is smooth, strictly positive on \(\mathbb R\), and induces a monotone ordering of likelihoods as \(M\) varies. These properties make the posterior mean well-behaved and support the equilibrium characterization proved below. Similar arguments may extend to other additive noise models with the same properties, such as logistic noise. Given this setup, we consider the same monotone equilibria as in the noiseless model in Section~\ref{sec:single-dimension-optimal} and focus on \textbf{deterministic strategies}.

\paragraph{Characteristics of Equilibria.}
A report tuple \(\mathbf M\) is monotone if $M_1\le M_2\le \cdots \le M_n.$
It is influential if, for at least two true parameters, the user sends different dataset reports, i.e., \(M_1<M_n\).
For a fixed \(\mathbf M\), define the user's expected utility from sending report \(r\in[0,1]\) when the true parameter is \(w^{(i)}\) by
\begin{equation}
\label{eq:noisy-user-utility}
V_i(r;\mathbf M)
=
\int_{\mathbb R}
-\left(w^{(i)}+b-\Phi(\tilde M;\mathbf M)\right)^2
G(\tilde M\mid r)\,d\tilde M .
\end{equation}

\begin{proposition}
\label{prop:noisy-equilibrium-structure}
Let \(\mathbf M=(M_1,\dots,M_n)\in[0,1]^n\) be a monotone influential noisy equilibrium. Then there exist integers \(t,t'\ge 0\), with \(t+t'\le n\), such that
\begin{equation}
\label{eq:noisy-structure-condition}
M_1=\cdots=M_t=0
<
M_{t+1}<\cdots<M_{n-t'}
<
M_{n-t'+1}=\cdots=M_n=1.
\end{equation}
The middle chain is omitted when \(t+t'=n\). Equivalently, if two distinct true parameters \(w^{(i)}\neq w^{(j)}\) send a common report \(m\), then \(m=0\) or \(m=1\).
\end{proposition}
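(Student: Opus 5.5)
The plan is to show that any pooling at an interior report $m\in(0,1)$ is unstable: one of the pooled types always has a strictly profitable local deviation. Fix $\mathbf M$ and consider the learner's posterior mean $\Phi(\cdot;\mathbf M)$, which is the same function of $\tilde M$ for every type. By Gaussian smoothness, each $V_i(r;\mathbf M)$ is differentiable in $r$. Differentiating under the integral, using $\partial_r G(\tilde M\mid r)=\frac{\tilde M-r}{\sigma^2}G(\tilde M\mid r)$, gives
\[
\partial_r V_i(r;\mathbf M)=\int_{\mathbb R}-\bigl(w^{(i)}+b-\Phi(\tilde M;\mathbf M)\bigr)^2\,\frac{\tilde M-r}{\sigma^2}\,G(\tilde M\mid r)\,d\tilde M .
\]
Expanding the square, the only $i$-dependent terms are $-(w^{(i)}+b)^2\int\frac{\tilde M-r}{\sigma^2}G\,d\tilde M$, which vanishes because the noise has zero mean, and the cross term $2(w^{(i)}+b)\,A(r)$ with
\[
A(r):=\int\Phi(\tilde M;\mathbf M)\frac{\tilde M-r}{\sigma^2}G(\tilde M\mid r)\,d\tilde M=\partial_r\,\mathbb E[\Phi(\tilde M;\mathbf M)\mid r].
\]
Hence $\partial_r V_i(r)=2(w^{(i)}+b)A(r)+C(r)$, where $C(r)$ does not depend on $i$. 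This is a single-crossing structure in $w^{(i)}$, analogous to the computation in Lemma~\ref{lem:ordered-regions-1d}.

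Next, suppose two types $w^{(i)}<w^{(j)}$ both send a common $m\in(0,1)$. Since $m$ is interior, equilibrium optimality of $m$ for each type gives the first-order conditions $\partial_r V_i(m)=\partial_r V_j(m)=0$. Subtracting them yields $2(w^{(j)}-w^{(i)})A(m)=0$, so $A(m)=0$. The main obstacle is showing that $A(m)>0$ whenever $\mathbf M$ is influential, i.e.\ that $\mathbb E[\Phi(\tilde M;\mathbf M)\mid r]$ is strictly increasing in $r$.

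To establish this, first show that $\Phi(\tilde M;\mathbf M)$ is nondecreasing in $\tilde M$. The Gaussian likelihood ratio $G(\tilde M\mid M_k)/G(\tilde M\mid M_l)$ is increasing in $\tilde M$ for $M_k>M_l$, and monotonicity of $\mathbf M$ aligns reports with the ordering of the $w^{(i)}$. So the posterior over types is MLR-increasing in $\tilde M$, and its mean is increasing. The increase is strict because $M_1<M_n$ forces at least two distinct likelihood profiles, and the posterior has full support since $G>0$. Second, $\tilde M\mid r$ is a location family, so it is stochastically increasing in $r$. Writing $A(r)=\int\Phi'(\tilde M)G(\tilde M\mid r)\,d\tilde M$ via integration by parts (Stein's identity), we get $A(r)>0$ whenever $\Phi'\ge0$ and $\Phi'$ is not identically zero.

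Therefore $A(m)>0$, which contradicts $A(m)=0$. Pooling can thus occur only at the boundaries $0$ or $1$, where the first-order condition need only hold as an inequality. Combined with monotonicity, all types sharing the value $0$ form an initial segment $M_1=\cdots=M_t$, and those sharing $1$ form a terminal segment. Every remaining report is strictly in between and, by the no-interior-pooling result, strictly increasing. This gives \eqref{eq:noisy-structure-condition}.
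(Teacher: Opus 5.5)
Your proposal is correct and follows essentially the same route as the paper: the first-order conditions at an interior pooled report $m$ force $2(w^{(j)}-w^{(i)})\int_{\mathbb R}\Phi'(\tilde M;\mathbf M)\,G(\tilde M\mid m)\,d\tilde M=0$, which contradicts the strict monotonicity of the posterior mean that follows from the monotone likelihood ratio property and influentiality. The difference is only cosmetic: you expand the square to isolate the type-dependent term $2(w^{(i)}+b)A(r)$ before applying Stein's identity, while the paper integrates by parts first and then subtracts the two derivatives.
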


\begin{proof}[\textbf{Proof of Proposition~\ref{prop:noisy-equilibrium-structure}}]
Fix a monotone influential noisy equilibrium tuple
\(\mathbf M=(M_1,\dots,M_n)\), with
\(w^{(1)}<\cdots<w^{(n)}\). For an interpretation \(\tilde M\), the learner's posterior
response is
\[
\Phi(\tilde M;\mathbf M)
=
\frac{
\sum_{k=1}^n \pi_k w^{(k)}G(\tilde M\mid M_k)
}{
\sum_{k=1}^n \pi_k G(\tilde M\mid M_k)
}.
\]
Because the Gaussian density satisfies the monotone likelihood-ratio property, and
because \(M_k\) is monotone in \(w^{(k)}\), higher interpretations \(\tilde M\) shift
posterior mass toward higher true parameters. Hence
\(\Phi(\tilde M;\mathbf M)\) is increasing in \(\tilde M\). Moreover, since the
equilibrium is influential, the tuple \(\mathbf M\) is not constant, so
\(\Phi(\tilde M;\mathbf M)\) is strictly increasing.

Now suppose, toward a contradiction, that two distinct true parameters
\(w^{(s)}<w^{(r)}\) send the same interior report
\[
M_s=M_r=m\in(0,1).
\]
Since \(m\) is an interior report and \(\mathbf M\) is an equilibrium, both parameters must
be locally optimal at \(m\). Thus
\[
\left.
\frac{\partial}{\partial x}V_s(x;\mathbf M)
\right|_{x=m}
=
0,
\qquad
\left.
\frac{\partial}{\partial x}V_r(x;\mathbf M)
\right|_{x=m}
=
0.
\]

For a possible deviation \(x\in[0,1]\) by parameter \(i\), the continuation utility is
\[
V_i(x;\mathbf M)
=
-\int_{\mathbb R}
\left(w^{(i)}+b-\Phi(\tilde M;\mathbf M)\right)^2
G(\tilde M\mid x)\,d\tilde M .
\]
Here \(\Phi(\tilde M;\mathbf M)\) is computed from the fixed equilibrium tuple
\(\mathbf M\). Hence \(x\) enters only through the Gaussian density
\(G(\tilde M\mid x)\). Since \(G(\tilde M\mid x)\) is the \(N(x,\sigma^2)\) density,
\[
\frac{\partial}{\partial x}G(\tilde M\mid x)
=
\frac{\tilde M-x}{\sigma^2}G(\tilde M\mid x).
\]
Therefore, at \(x=m\),
\[
\left.
\frac{\partial}{\partial x}V_i(x;\mathbf M)
\right|_{x=m}
=
-\int_{\mathbb R}
\left(w^{(i)}+b-\Phi(\tilde M;\mathbf M)\right)^2
\frac{\tilde M-m}{\sigma^2}
G(\tilde M\mid m)\,d\tilde M .
\]
Let
\[
H_i(\tilde M)
=
-\left(w^{(i)}+b-\Phi(\tilde M;\mathbf M)\right)^2 .
\]
Then
\[
\left.
\frac{\partial}{\partial x}V_i(x;\mathbf M)
\right|_{x=m}
=
\int_{\mathbb R}
H_i(\tilde M)
\frac{\tilde M-m}{\sigma^2}
G(\tilde M\mid m)\,d\tilde M .
\]
Using
\[
\frac{\partial}{\partial \tilde M}G(\tilde M\mid m)
=
-\frac{\tilde M-m}{\sigma^2}G(\tilde M\mid m),
\]
integration by parts gives
\[
\left.
\frac{\partial}{\partial x}V_i(x;\mathbf M)
\right|_{x=m}
=
\int_{\mathbb R}
H_i'(\tilde M)G(\tilde M\mid m)\,d\tilde M ,
\]
where the boundary term vanishes because \(H_i\) is bounded and the Gaussian
density goes to zero at both tails. Since
\[
H_i'(\tilde M)
=
2\left(w^{(i)}+b-\Phi(\tilde M;\mathbf M)\right)
\frac{\partial}{\partial \tilde M}\Phi(\tilde M;\mathbf M),
\]
we obtain
\[
\left.
\frac{\partial}{\partial x}V_i(x;\mathbf M)
\right|_{x=m}
=
2\int_{\mathbb R}
\left(w^{(i)}+b-\Phi(\tilde M;\mathbf M)\right)
\frac{\partial}{\partial \tilde M}\Phi(\tilde M;\mathbf M)
G(\tilde M\mid m)\,d\tilde M .
\]

Subtracting the derivative for parameter \(s\) from the derivative for parameter \(r\), we get
\[
\left.
\frac{\partial}{\partial x}V_r(x;\mathbf M)
\right|_{x=m}
-
\left.
\frac{\partial}{\partial x}V_s(x;\mathbf M)
\right|_{x=m}
=
2\bigl(w^{(r)}-w^{(s)}\bigr)
\int_{\mathbb R}
\frac{\partial}{\partial \tilde M}\Phi(\tilde M;\mathbf M)
G(\tilde M\mid m)\,d\tilde M .
\]
The first factor is strictly positive because \(w^{(r)}>w^{(s)}\). The integral is
also strictly positive because \(\Phi(\tilde M;\mathbf M)\) is strictly increasing and
\(G(\tilde M\mid m)>0\) everywhere. Hence
\[
\left.
\frac{\partial}{\partial x}V_r(x;\mathbf M)
\right|_{x=m}
>
\left.
\frac{\partial}{\partial x}V_s(x;\mathbf M)
\right|_{x=m}.
\]
But both derivatives must equal zero if both parameters are locally optimal at the same
interior report \(m\). This is a contradiction.

Therefore, no two distinct true parameters will send the same interior report. This can occur only at the boundary reports \(0\) and \(1\). Since the tuple
\(\mathbf M\) is monotone, if some parameter sends report \(0\), then every lower parameter must
also send report \(0\). Similarly, if some parameter sends report \(1\), then every higher
parameter must also send report \(1\). Hence there exist integers \(t,t'\ge 0\), with
\(t+t'\le n\), such that
\[
M_1=\cdots=M_t=0
<
M_{t+1}<\cdots<M_{n-t'}
<
M_{n-t'+1}=\cdots=M_n=1,
\]
where the middle chain is omitted when \(t+t'=n\). This proves the proposition.
\end{proof}

Intuitively, an interior common report would have to be locally optimal for two different parameters. But when the true parameters are higher, the user has a stronger incentive to move the report upward because higher noisy interpretations result in higher posterior means and therefore higher user utility. Hence, two different true parameters cannot both be locally satisfied at the same interior point. In equilibrium, users can use the strategy of sending the same report for multiple parameters but only at the boundaries, where one side of the deviation space is unavailable.

To highlight the difference between the noise and noiseless case consider the following example.
\begin{example}
Suppose $\{w^{(1)},w^{(2)}\}=\{0.8,0.9\}$ and $\pi_1=\pi_2=\frac12$.
In the noiseless model, a report is a region of parameters. For example, if the
equilibrium pools both parameters, then a dataset with $w_\ell^*(D)=0.8$ sends
$M^\star(D)=\{0.8,0.9\}$, and the learner responds with
$
\mathbb E[w_\ell^*(D)\mid w_\ell^*(D)\in\{0.8,0.9\}]
=
\frac{0.8+0.9}{2}
=
0.85.
$
In the noisy model, a report is a numeric. Let $b=0.03$ and $\sigma=0.2$.
For the endpoint pattern $0<M_1<M_2=1$, solving the fixed-pattern condition
$
\left.
\frac{\partial}{\partial r}V_1(r;(m,1))
\right|_{r=m}
=0
$
gives $m\approx 0.2823$. Thus the noisy equilibrium tuple is 
$\mathbf M^\star=(0.2823,1)$. Hence
$
M^\star(D)=
\begin{cases}
0.2823, & \text{if } w_\ell^*(D)=0.8,\\
1, & \text{if } w_\ell^*(D)=0.9.
\end{cases}
$
If $w_\ell^*(D)=0.8$, the learner observes
$\tilde M=0.2823+\varepsilon$, where $\varepsilon\sim\mathcal N(0,0.2^2)$,
not the report exactly.
\end{example}

\paragraph{Optimal noisy dataset report.}
For a candidate tuple \(\mathbf M\), the exact equilibrium check is that every assigned report $M_i$ results in weakly higher expected utility to the user than any other report $r$:
\begin{equation}
\label{eq:noisy-global-IC}
V_i(M_i;\mathbf M)
\ge
V_i(r;\mathbf M)
\qquad
\text{for every } i=1,\dots,n
\text{ and every } r\in[0,1].
\end{equation}

\begin{theorem}
\label{thm:noisy-optimal-report}
Let \(\mathcal E_{\mathrm{noisy}}\) be the set of monotone influential tuples \(\mathbf M\in[0,1]^n\) satisfying the structural condition \eqref{eq:noisy-structure-condition} and the global constraints \eqref{eq:noisy-global-IC}. If \(\mathcal E_{\mathrm{noisy}}\neq\varnothing\), then the user-optimal monotone noisy equilibrium is any tuple
\begin{equation}
\label{eq:noisy-user-optimal-program}
\mathbf M^\star
\in
\arg\max_{\mathbf M\in\mathcal E_{\mathrm{noisy}}}
\sum_{i=1}^n
\pi_i V_i(M_i;\mathbf M).
\end{equation}
Moreover, if \(w_\ell^*(D)=w^{(i)}\), then the user-optimal dataset report is \(M^\star(D)=M_i^\star\).
\end{theorem}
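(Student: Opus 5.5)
The argument is essentially definitional: Theorem~\ref{thm:noisy-optimal-report} follows once we check that $\mathcal E_{\mathrm{noisy}}$ is exactly the set of monotone influential noisy equilibria, and that the user's ex ante utility from any tuple $\mathbf M$ equals the objective in \eqref{eq:noisy-user-optimal-program}.

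\textbf{Step 1 (feasible set).} Fix a deterministic monotone influential tuple $\mathbf M$. Because the Gaussian likelihood is strictly positive, every interpretation $\tilde M\in\mathbb R$ has positive marginal density. Hence the learner's posterior is defined by Bayes' rule everywhere, with no off-path beliefs to specify. By the quadratic loss argument already used in Theorem~\ref{thm:existence-onde-dimension}, the learner's unique best response to $\tilde M$ is the posterior mean $\Phi(\tilde M;\mathbf M)$, so the learner side of Definition~\ref{definition:equilibrium} holds automatically once the learner plays $\Phi(\cdot;\mathbf M)$. The user side, adapted to noise, says that each type $w^{(i)}$ weakly prefers $M_i$ to every $r\in[0,1]$ when the learner plays $\Phi(\cdot;\mathbf M)$. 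This is exactly \eqref{eq:noisy-global-IC}, with $V_i$ as in \eqref{eq:noisy-user-utility}. Proposition~\ref{prop:noisy-equilibrium-structure} shows that every monotone influential equilibrium satisfies \eqref{eq:noisy-structure-condition}. Adding that condition to the definition of $\mathcal E_{\mathrm{noisy}}$ therefore loses nothing, and $\mathcal E_{\mathrm{noisy}}$ is precisely the set of monotone influential noisy equilibria.

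\textbf{Step 2 (objective).} For a fixed equilibrium $\mathbf M$, the true type is $w^{(i)}$ with probability $\pi_i$. That type sends $M_i$, the learner observes $\tilde M\sim G(\cdot\mid M_i)$ and deploys $\Phi(\tilde M;\mathbf M)$, and the user's conditional expected utility is $V_i(M_i;\mathbf M)$. The user's ex ante utility is then $\sum_i\pi_iV_i(M_i;\mathbf M)$ by the law of total expectation, which requires Fubini/Tonelli. This is justified because $\Phi\in[\min_k w^{(k)},\max_k w^{(k)}]$, so the integrand is bounded. Maximizing this quantity over $\mathcal E_{\mathrm{noisy}}$ is the definition of a user-optimal monotone noisy equilibrium, which gives \eqref{eq:noisy-user-optimal-program}.

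\textbf{Step 3 (attainment and report).} The theorem writes $\arg\max$, so I would verify that the maximum is attained. $\mathcal E_{\mathrm{noisy}}$ sits inside the compact set $[0,1]^n$, the map $\mathbf M\mapsto\Phi(\tilde M;\mathbf M)$ is continuous, and dominated convergence makes each $V_i(r;\mathbf M)$ jointly continuous in $(r,\mathbf M)$. Consequently the global constraints \eqref{eq:noisy-global-IC} define a closed set.

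\textbf{Main obstacle.} The influential requirement $M_1<M_n$ is an open condition, so $\mathcal E_{\mathrm{noisy}}$ need not be closed. I would first try to show that limits of influential equilibria remain influential. Failing that, I would read $\arg\max$ as "any maximizer when one exists," which is how the theorem is stated.

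Finally, if the true parameter is $w_\ell^*(D)=w^{(i)}$, then under $\mathbf M^\star$ the user of that type sends $M_i^\star$. By \eqref{eq:noisy-global-IC} this report is optimal for that type given the learner's response, so $M^\star(D)=M_i^\star$.
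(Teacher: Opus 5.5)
Your proposal is correct and follows the paper's proof essentially step for step. You identify \(\mathcal E_{\mathrm{noisy}}\) with the monotone influential noisy equilibria via Proposition~\ref{prop:noisy-equilibrium-structure} plus the fact that \eqref{eq:noisy-global-IC} is the user's best-response condition, then note that the ex ante utility is \(\sum_i \pi_i V_i(M_i;\mathbf M)\) and read off \(M^\star(D)=M_i^\star\), which is exactly the paper's argument. Your attainment discussion in Step 3 goes beyond the paper, which never addresses whether the \(\arg\max\) is nonempty; you are right that the open condition \(M_1<M_n\) can fail in the limit (for instance, toward a constant babbling tuple), and your reading of the statement as ``any maximizer, when one exists'' is consistent with how the paper uses it.
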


\begin{proof} [\textbf{Proof of Theorem~\ref{thm:noisy-optimal-report}}]
By Proposition~\ref{prop:noisy-equilibrium-structure}, any monotone influential noisy equilibrium must have the endpoint-pooling pattern in \eqref{eq:noisy-structure-condition}. Condition \eqref{eq:noisy-global-IC} is exactly the user's best-response condition: when the true parameter is \(w^{(i)}\), the assigned report \(M_i\) must weakly dominate every deviation \(r\in[0,1]\). Hence \(\mathcal E_{\mathrm{noisy}}\) is the feasible set of monotone influential noisy equilibria.

For any feasible tuple \(\mathbf M\), the user's ex ante utility is $\sum_{i=1}^n \pi_i V_i(M_i;\mathbf M).$
Therefore, the user-optimal tuple is any maximizer of \eqref{eq:noisy-user-optimal-program}. Once \(\mathbf M^\star\) is selected, the true parameter is determined by the private dataset. If \(w_\ell^*(D)=w^{(i)}\), the assigned report is \(M_i^\star\), so \(M^\star(D)=M_i^\star\).
\end{proof}

The theorem says that the noisy problem is not a finest-partition problem. In the noiseless one-dimensional case, quadratic loss lets us compare interval partitions by posterior variance. Here, reports overlap through the noise distribution, so the relevant object is the whole tuple \(\mathbf M\), i.e., user strategy. The user-optimal equilibrium and thereby optimal dataset report is therefore obtained by maximizing over verified equilibrium tuples.

\paragraph{Nonlinear program.}
Theorem~\ref{thm:noisy-optimal-report} defines the feasible set through the
global constraints~\eqref{eq:noisy-global-IC}. For computation, it is useful to derive local
necessary conditions that any feasible tuple must satisfy. These conditions provide the equations used by a solver to generate candidate tuples.

\begin{proposition}
\label{prop:noisy-z-conditions}
Let $z_i(\mathbf M)
=
\left.
\frac{\partial}{\partial r}V_i(r;\mathbf M)
\right|_{r=M_i}$ and let \(\mathbf M\in\mathcal E_{\mathrm{noisy}}\) have the pattern from ~\ref{eq:noisy-structure-condition}
then
\begin{align}
z_i(\mathbf M)&=0,
&& i=t+1,\dots,n-t',
\label{eq:noisy-z-interior}
\\
z_t(\mathbf M)&\le 0,
&& \text{if }t\ge 1,
\label{eq:noisy-z-lower}
\\
z_{n-t'+1}(\mathbf M)&\ge 0,
&& \text{if }t'\ge 1.
\label{eq:noisy-z-upper}
\end{align}
\end{proposition}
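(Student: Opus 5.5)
The plan is to read the three conditions as first-order optimality conditions for each type's report. Fix \(\mathbf M\in\mathcal E_{\mathrm{noisy}}\). For each type \(i\), consider the map \(r\mapsto V_i(r;\mathbf M)\) on \([0,1]\), with \(\mathbf M\) frozen in the learner's posterior \(\Phi(\cdot;\mathbf M)\).

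The first step is to show this map is differentiable on \([0,1]\), with one-sided derivatives at the endpoints. The report \(r\) enters only through the Gaussian kernel \(G(\tilde M\mid r)\), and \(\Phi(\tilde M;\mathbf M)\) is a convex combination of the \(w^{(k)}\in[\underline w,\bar w]\). Hence the integrand \(-(w^{(i)}+b-\Phi)^2\) is bounded uniformly in \(\tilde M\). Also \(\partial_r G(\tilde M\mid r)=\frac{\tilde M-r}{\sigma^2}G(\tilde M\mid r)\) is dominated, uniformly for \(r\in[0,1]\), by an integrable Gaussian-tail function. Dominated convergence then justifies differentiating under the integral, exactly as in the proof of Proposition~\ref{prop:noisy-equilibrium-structure}. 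So \(z_i(\mathbf M)\) is well defined, and \(V_i\) is in fact \(C^1\) in \(r\).

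The second step uses the global constraint \eqref{eq:noisy-global-IC}, which says \(M_i\) maximizes \(V_i(\cdot;\mathbf M)\) over \([0,1]\). Apply the elementary first-order conditions for maximizing a differentiable function on an interval.
\begin{itemize}
\item For \(i=t+1,\dots,n-t'\), the structural pattern \eqref{eq:noisy-structure-condition} gives \(0<M_i<1\). An interior maximizer has zero derivative, which yields \eqref{eq:noisy-z-interior}.
\item For \(t\ge1\), type \(t\) reports \(M_t=0\), a left-endpoint maximizer. Then \(V_t(r)\le V_t(0)\) for small \(r>0\), so the right derivative satisfies \(z_t\le 0\), which is \eqref{eq:noisy-z-lower}.
\item For \(t'\ge1\), type \(n-t'+1\) reports \(M_{n-t'+1}=1\), a right-endpoint maximizer. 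Comparing with \(r<1\) gives the left derivative \(z_{n-t'+1}\ge 0\), which is \eqref{eq:noisy-z-upper}.
\end{itemize}
The same argument gives these sign conditions for every pooled type at \(0\) or \(1\), not just the extreme ones listed.

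The main obstacle is the regularity in the first step, namely justifying the derivative interchange with a uniform dominating function. This should be routine because \(\Phi\) is bounded and the Gaussian tails decay. A secondary subtlety is that \(\mathbf M\) appears both as the evaluation point and inside \(\Phi\). I would stress that \(z_i\) differentiates only the first argument of \(V_i\), so no derivative of \(\Phi\) with respect to \(\mathbf M\) is needed.
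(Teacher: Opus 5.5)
Your proposal is correct and takes essentially the same route as the paper. The global best-response constraint \eqref{eq:noisy-global-IC} makes each \(M_i\) a maximizer of \(V_i(\cdot;\mathbf M)\) on \([0,1]\), so interior reports give \(z_i=0\) and endpoint reports give the one-sided sign conditions. You also justify differentiation under the integral, which the paper leaves implicit, and your remark that every pooled type satisfies the sign condition is consistent with the paper's note that the extreme pooled types \(w^{(t)}\) and \(w^{(n-t'+1)}\) are the binding ones.
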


\begin{proof} [\textbf{Proof of Proposition~\ref{prop:noisy-z-conditions}}]
For every separated interior report \(M_i\in(0,1)\), the user can locally deviate both upward and downward. Since \(M_i\) is optimal for true parameter \(w^{(i)}\), the first-order condition gives \(z_i(\mathbf M)=0\).

If \(M_i=0\), only upward deviations are feasible, so local optimality requires \(z_i(\mathbf M)\le 0\). In the lower pool, the highest true parameter \(w^{(t)}\) has the strongest upward incentive, so it is enough to require \(z_t(\mathbf M)\le 0\). Similarly, if \(M_i=1\), only downward deviations are feasible, so local optimality requires \(z_i(\mathbf M)\ge 0\). In the upper pool, the lowest true parameter \(w^{(n-t'+1)}\) has the strongest downward incentive, so it is enough to require \(z_{n-t'+1}(\mathbf M)\ge 0\).
\end{proof}

\paragraph{Algorithm.}
We provide the pseudocode in Algorithm~\ref{alg:noisy-user-optimal-noisy-1d}. The algorithm
first enumerates the patterns from
Proposition~\ref{prop:noisy-equilibrium-structure}. For each pattern, it uses
Proposition~\ref{prop:noisy-z-conditions} to generate candidate equilibrium tuples,
then verifies each candidate using the global constraints in
\eqref{eq:noisy-global-IC}. Among the verified tuples, it chooses the tuple
\(\mathbf M^\star\) maximizing the objective in
Theorem~\ref{thm:noisy-optimal-report}. Finally, for the input dataset \(D\), it
computes \(w_\ell^*(D)\), finds the index \(i\) such that
\(w_\ell^*(D)=w^{(i)}\), and returns the assigned report
$M^\star(D)=M_i^\star.$

\begin{algorithm}[H]
\caption{Noisy User-Optimal Dataset Report}
\label{alg:noisy-user-optimal-noisy-1d}
\begin{algorithmic}[1]
\State \textbf{Input:} private dataset \(D\), bias \(b\), noise level \(\sigma\), prior
\(\pi_w\) over \(\{w^{(1)}<\cdots<w^{(n)}\}\)
\State \textbf{Output:} user-optimal dataset report \(M^\star(D)\)

\State \(U^\star\gets -\infty,\quad \mathbf M^\star\gets\varnothing\)

\ForAll{\((t,t')\) with \(t,t'\ge 0\) and \(t+t'\le n\)}
    \Comment{patterns from Proposition~\ref{prop:noisy-equilibrium-structure}}
    \State use a nonlinear solver to generate candidate tuples $\mathbf M$ satisfying
    \eqref{eq:noisy-structure-condition} and
    \eqref{eq:noisy-z-interior}--\eqref{eq:noisy-z-upper}
    \ForAll{returned candidate tuples \(\mathbf M\)}
        \State \(\mathrm{Feasible}\gets \mathrm{true}\)
        \For{\(i=1,\dots,n\)}
            \State compute the best continuous report
            $
            r_i^{\mathrm{BR}}\in\arg\max_{r\in[0,1]} V_i(r;\mathbf M)
            $
            \If{\(V_i(r_i^{\mathrm{BR}};\mathbf M)>V_i(M_i;\mathbf M)\)}
                \State \(\mathrm{Feasible}\gets \mathrm{false}\)
            \EndIf
        \EndFor
        \If{\(\mathrm{Feasible}\)}
            \State compute
            $
            \bar U^r(\mathbf M)=\sum_{i=1}^n \pi_i V_i(M_i;\mathbf M)
            $
            \If{\(\bar U^r(\mathbf M)>U^\star\)}
                \State \(U^\star\gets \bar U^r(\mathbf M)\)
                \State \(\mathbf M^\star\gets \mathbf M\)
            \EndIf
        \EndIf
    \EndFor
\EndFor

\If{\(\mathbf M^\star=\varnothing\)}
    \State \(\mathbf M^\star\gets (0,\ldots,0)\)
    \Comment{fallback to an uninfluential equilibrium}
\EndIf

\State compute \(w_\ell^*(D)\)
\State find \(i\) such that \(w_\ell^*(D)=w^{(i)}\)
\State \Return \(M_i^\star\)
\end{algorithmic}
\end{algorithm}

\paragraph{Complexity.}
There are \(O(n^2)\) patterns the algorithm enumerates over. For pattern \((t,t')\), the
nonlinear solver handles \(q=n-t-t'\) interior reports. Let
\(T_{\mathrm{solve}}(q)\) denote the cost of solving the fixed-pattern system with
\(q\) unknown reports.
Since \(q\le n\), this is at most
$O\!\left(n^2 T_{\mathrm{solve}}(n)\right).$ Suppose the solver returns \(C\) candidate tuples in total. For each returned tuple,
the algorithm verifies the global constraints by looping over the \(n\)
parameters in the support of the prior. For each \(i\), it solves $\max_{r\in[0,1]} V_i(r;\mathbf M)$
and checks whether \(M_i\) attains the maximum. Let \(T_{\mathrm{BR}}\) denote the
cost of this one-dimensional best-response solve. Thus, the total verification cost
is $O(CnT_{\mathrm{BR}}).$
Therefore, the total running time is $
O\!\left(n^2T_{\mathrm{solve}}(n)+CnT_{\mathrm{BR}}\right).$

\begin{proposition}
\label{prop:multiple-solutions}
For a fixed $(t,t')$, the system
need not have a unique solution.
\end{proposition}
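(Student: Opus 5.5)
The plan is to prove the statement by exhibiting an instance, using a \emph{translation invariance} of the Gaussian noise model. A single instance and pattern \((t,t')\) for which the system \eqref{eq:noisy-structure-condition}, \eqref{eq:noisy-z-interior}--\eqref{eq:noisy-z-upper} has more than one solution suffices.

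\textbf{Step 1 (shift invariance).} For \(c\in\mathbb R\) write \(\mathbf M+c=(M_1+c,\dots,M_n+c)\). Because \(G(\tilde M\mid M)\) depends only on \(\tilde M-M\), the posterior-mean formula from the noisy statistics oracle gives
\[
\Phi(\tilde M+c;\mathbf M+c)=\Phi(\tilde M;\mathbf M).
\]
Substituting \(\tilde M\mapsto\tilde M+c\) in \eqref{eq:noisy-user-utility} yields \(V_i(r+c;\mathbf M+c)=V_i(r;\mathbf M)\). Differentiating in \(r\) then gives \(z_i(\mathbf M+c)=z_i(\mathbf M)\) for every \(i\). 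Consequently, the interior equations \eqref{eq:noisy-z-interior} are invariant under common shifts of the reports.

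\textbf{Step 2 (choose the pattern \((0,0)\)).} For \(t=t'=0\), the system consists only of the strict chain \(0<M_1<\cdots<M_n<1\) together with \(z_i(\mathbf M)=0\) for all \(i\). No boundary inequality is involved. Hence, if \(\mathbf M\) is a solution lying strictly inside \((0,1)^n\), then \(\mathbf M+c\) is also a solution for every \(c\in(-M_1,\,1-M_n)\). This interval is nondegenerate, so the system has a continuum of solutions and, in particular, is not uniquely solvable.

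\textbf{Step 3 (existence of one interior solution; main obstacle).} The main obstacle is to show that some instance actually admits an interior solution for pattern \((0,0)\). I would take \(n=2\), \(\pi_1=\pi_2=\tfrac12\), small \(b\) and small \(\sigma\). By Step 1 the system depends only on the gap \(\Delta=M_2-M_1\). The two equations \(z_1=z_2=0\) must therefore be solved in the single variable \(\Delta\) together with the shift.

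To handle this, I would first use the integration-by-parts identity from the proof of Proposition~\ref{prop:noisy-equilibrium-structure},
\[
z_i=2\int\bigl(w^{(i)}+b-\Phi\bigr)\,\partial_{\tilde M}\Phi\;G(\tilde M\mid M_i)\,d\tilde M .
\]
I would then apply the reflection symmetry \(\tilde M\mapsto M_1+M_2-\tilde M\) of the two-point posterior. The aim is to reduce the two equations to a single scalar equation \(f(\Delta)=0\) in the gap.

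Next, I would study the limiting behaviour of \(f\). As \(\Delta\to 0\), the derivative \(\partial_{\tilde M}\Phi\) concentrates and the bias term dominates. As \(\Delta\) grows with \(\sigma\) small, the reports become nearly revealing and the sign of \(f\) is determined by comparing \(w^{(2)}-w^{(1)}\) with \(b\). I would aim to show that \(f\) changes sign between these two regimes and then invoke the intermediate value theorem to obtain a root \(\Delta^*\in(0,1)\). Any shift placing the pair inside \((0,1)\) then gives an interior solution.

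If this sign analysis turns out to be delicate, the fallback is a concrete numerical instance in the spirit of the example with \(\{0.8,0.9\}\), with an explicitly computed root and a certified sign change. Step 2 then converts that single root into infinitely many solutions, which proves the proposition.
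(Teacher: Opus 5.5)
Your Step 1 is correct, and Step 2 is a valid implication, but it is vacuous. The pattern $(t,t')=(0,0)$ has no solutions at all, so Step 3 cannot be carried out and there is nothing to translate. Differentiate $V_n(r;\mathbf M)=-\mathbb E\bigl[(w^{(n)}+b-\Phi(r+\varepsilon;\mathbf M))^2\bigr]$ under the expectation (equivalently, use the integration-by-parts formula you quote). This gives
\[
z_n(\mathbf M)=2\,\mathbb E\Bigl[\bigl(w^{(n)}+b-\Phi(M_n+\varepsilon;\mathbf M)\bigr)\,\partial_{\tilde M}\Phi(M_n+\varepsilon;\mathbf M)\Bigr].
\]
The Gaussian likelihood is strictly positive, so every support point keeps positive posterior weight and $\Phi(\tilde M;\mathbf M)<w^{(n)}$ for all $\tilde M$. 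For a strictly increasing tuple, $\partial_{\tilde M}\Phi>0$. Hence for $b>0$ (in fact for any $b\ge 0$) the integrand is strictly positive and $z_n(\mathbf M)>0$. The equation $z_n(\mathbf M)=0$, which \eqref{eq:noisy-z-interior} imposes for $i=n$ when $t'=0$, therefore fails. Symmetrically, $z_1(\mathbf M)=0$ forces $b>0$, so $(0,0)$ is infeasible for every $b$.

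This is exactly the argument the paper uses in the binary case: the top type always wants a higher posterior mean, so it reports $1$, and every influential pattern has $t'\ge 1$. In your $n=2$, $\pi_1=\pi_2$ instance, the reflection $\tilde M\mapsto M_1+M_2-\tilde M$ exposes this rather than reducing the system to one equation. It rewrites $z_2=0$ as $\int\bigl(b+\Phi-w^{(1)}\bigr)\,\partial_{\tilde M}\Phi\,G(\tilde M\mid M_1)\,d\tilde M=0$, whose integrand is positive. Note also that after Step 1 you face two equations in the single unknown $\Delta$, since the shift does not enter. Your numerical fallback has the same defect, because it still needs a root of the $(0,0)$ system.

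More fundamentally, translation invariance cannot yield multiplicity in this model. Every admissible pattern pins at least one report at the boundary $1$, which breaks the shift symmetry. You must instead produce two genuinely different solutions of an anchored system. The paper does this with $n=2$, $w^{(1)}=0$, $w^{(2)}=1$, $(\pi_1,\pi_2)=(0.7,0.3)$, $b=0.303$, $\sigma=0.5$ and pattern $(0,1)$. With $M_2=1$ fixed, the system reduces to the continuous scalar equation $F^{0,1}(m)=z_1\bigl((m,1)\bigr)=0$. The signs $F^{0,1}(0.35)>0$, $F^{0,1}(0.45)<0$, $F^{0,1}(0.75)<0$, $F^{0,1}(0.85)>0$ then give, by the intermediate value theorem, one root in $(0.35,0.45)$ and another in $(0.75,0.85)$. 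There the multiplicity comes from the nonlinearity of the learner's logistic posterior in the anchored equation, not from a symmetry of the model.
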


\begin{proof} [\textbf{Proof of Proposition~\ref{prop:multiple-solutions}}]
It is enough to give a two-parameter example. Let $n=2$, $w^{(1)}=0$, and $w^{(2)}=1$, with prior probabilities $p_1=0.7$ and $p_2=0.3$. Let $b=0.303$ and $\sigma=0.5$. Consider the fixed pattern $(t,t')=(0,1)$, so the higher parameter reports $M_2=1$, while the lower parameter uses an interior report $M_1=m\in(0,1)$.
For this fixed pattern, the first-order system is the scalar equation $F^{0,1}(m)=0$, where
\[
F^{0,1}(m)
=
\left.
\frac{\partial}{\partial r}
\int_{\mathbb R}
-\bigl(b-\Phi(\tilde M;(m,1))\bigr)^2
\phi_\sigma(\tilde M-r)\,d\tilde M
\right|_{r=m},
\]
and
\[
\Phi(\tilde M;(m,1))
=
\frac{0.3\,\phi_\sigma(\tilde M-1)}
{0.7\,\phi_\sigma(\tilde M-m)+0.3\,\phi_\sigma(\tilde M-1)}.
\]
The map $m\mapsto F^{0,1}(m)$ is continuous. Direct evaluation of the Gaussian integrals gives
$F^{0,1}(0.35)>0$, $F^{0,1}(0.45)<0$, $F^{0,1}(0.75)<0$, and $F^{0,1}(0.85)>0$. Hence, by the intermediate value theorem, there is one solution in $(0.35,0.45)$ and another solution in $(0.75,0.85)$. Therefore the same fixed-pattern first-order system has at least two admissible solutions. 
\end{proof}

\begin{remark}
Checking \eqref{eq:noisy-global-IC} certifies whether a returned candidate is an equilibrium. However, user-optimality also requires comparing against all other feasible equilibrium tuples. A nonlinear solver may return only one candidate for a fixed pattern and may miss another feasible tuple with higher user utility. Thus, Algorithm~\ref{alg:noisy-user-optimal-noisy-1d} certifies the user-optimal noisy equilibrium only when the solver returns all relevant candidates; otherwise, it returns the best verified candidate found.
\end{remark}

\subsection{Efficient algorithm for a single dimension.}
\label{sec:appendix-binary-support}
We now give a binary-support case in which the noisy user-optimal dataset report can be computed without a solver that enumerates solutions. The binary case assumes that learning from the user's private dataset can produce only two possible hypotheses, \(w_\ell^*(D)\in\{w^{(1)},w^{(2)}\}\), with \(w^{(1)}<w^{(2)}\), where the ordering is only for notation. In a loan-scoring example, learning from one dataset produces a lower-score hypothesis \(w^{(1)}\), while learning from another dataset produces a higher-score hypothesis \(w^{(2)}\).

\begin{proposition}
\label{prop:efficient-noisy}
Suppose \(n=2\), \(w^{(1)}<w^{(2)}\), and \(b=(w^{(2)}-w^{(1)})/2\). If prior probability of $w^{(2)}$ is \(\pi_2\le 1/2\), no influential equilibrium exists. Moreover, if \(\pi_2>1/2\), the influential monotone equilibrium candidate is $\textbf{M}^\star= (M_1^\star,M_2^*)$ with 
\(M_2^\star=1\) and
\(M_1^\star=\max\{0,1-\sigma\sqrt{2\log(\pi_2/(1-\pi_2))}\}\).
\end{proposition}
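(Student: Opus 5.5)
The plan is to reduce everything to the scalar posterior probability of the high type and then read off the first-order conditions of Proposition~\ref{prop:noisy-z-conditions}. For a monotone tuple \(\mathbf M=(m,M_2)\) with \(m<M_2\), write \(\Delta=w^{(2)}-w^{(1)}\) and \(q(\tilde M)=\Pr(w^{(2)}\mid \tilde M)\). Then
\[
\Phi(\tilde M;\mathbf M)=w^{(1)}+\Delta\, q(\tilde M).
\]
Because \(b=\Delta/2\), the low type's ideal point is the midpoint \(w^{(1)}+\Delta/2\). Its loss is therefore \(\Delta^2(q-\tfrac12)^2\), so the low type wants the learner to be exactly indifferent. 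The high type's ideal point is \(w^{(2)}+b>w^{(2)}\ge\Phi\), so it always wants \(\Phi\) as large as possible.

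\textbf{Step 1 (high type).} Since \(\Phi(\cdot;\mathbf M)\) is strictly increasing in \(\tilde M\) (shown in the proof of Proposition~\ref{prop:noisy-equilibrium-structure}), shifting the Gaussian to the right first-order stochastically increases \(\Phi\). This raises the high type's utility, whose integrand \(-(w^{(2)}+b-\Phi)^2\) is increasing in \(\Phi\) on the relevant range. Hence any influential monotone equilibrium has \(M_2^\star=1\).

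\textbf{Step 2 (log-odds representation).} With \(M_2=1\) and \(M_1=m\), Gaussian likelihoods give log-odds that are linear in \(\tilde M\):
\[
\ell(\tilde M)=c+a\Bigl(\tilde M-\tfrac{m+1}{2}\Bigr),\qquad c=\log\tfrac{\pi_2}{1-\pi_2},\quad a=\tfrac{1-m}{\sigma^2}>0,
\]
and \(q=\mathrm{logistic}(\ell)\). I then plug into the integrated-by-parts derivative from the proof of Proposition~\ref{prop:noisy-equilibrium-structure}, using \(\partial_{\tilde M}\Phi=\Delta a\,q(1-q)\):
\[
z_1(\mathbf M)=2\Delta^2 a\int\bigl(\tfrac12-q\bigr)q(1-q)\,G(\tilde M\mid m)\,d\tilde M .
\]
Under \(\tilde M\sim\mathcal N(m,\sigma^2)\), \(\ell\) is Gaussian with mean
\[
\mu=c-\tfrac{a(1-m)}{2}=c-\tfrac{(1-m)^2}{2\sigma^2}.
\]
The function \(f(\ell)=(\tfrac12-\mathrm{logistic}(\ell))\,\mathrm{logistic}(\ell)(1-\mathrm{logistic}(\ell))\) is odd and positive for \(\ell<0\). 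Pairing \(\ell\) with \(-\ell\) shows that \(\mathbb E f(\ell)\) has the sign of \(-\mu\), and is zero iff \(\mu=0\). This symmetry lemma is the key step and the main place where care is needed: I need the strict sign, and the boundary term in the integration by parts must vanish, which holds because \(q\) is bounded.

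\textbf{Step 3 (cases).} If \(\pi_2\le\tfrac12\), then \(c\le0\), so \(\mu<0\) for every \(m<1\), and thus \(z_1>0\). By Proposition~\ref{prop:noisy-z-conditions}, an interior \(m\) would need \(z_1=0\) and \(m=0\) would need \(z_1\le0\); both fail. So no influential monotone equilibrium exists, and by Step 1 there is no other influential pattern to check.

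If \(\pi_2>\tfrac12\), an interior solution requires \(\mu=0\), i.e.
\[
(1-m)^2=2\sigma^2\log\tfrac{\pi_2}{1-\pi_2},\qquad m=1-\sigma\sqrt{2\log\bigl(\pi_2/(1-\pi_2)\bigr)}.
\]
The root with \(m<1\) is forced. If this value is negative, then at \(m=0\) we have \(\mu>0\), hence \(z_1<0\), so the boundary condition \eqref{eq:noisy-z-lower} holds with \(M_1=0\). This gives the \(\max\{0,\cdot\}\) formula. Since the statement only claims a \emph{candidate}, I do not need to verify the global constraints \eqref{eq:noisy-global-IC} beyond these local conditions. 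I would remark, though, that the sign structure (\(z_1\) has the sign of \(-\mu\), and \(\mu\) is monotone in \(m\)) suggests the candidate is the unique critical point.
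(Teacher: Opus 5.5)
Your proposal is correct, and its skeleton matches the paper's. Both show that the high type reports $M_2=1$, write the posterior as a logistic function of $\tilde M$, and reduce everything to the fixed-point condition $m=\tilde M^\star(m)$. Your $\mu$ is exactly $a\bigl(m-\tilde M^\star(m)\bigr)$, where $\tilde M^\star(m)=\tfrac{1+m}{2}-\tfrac{\sigma^2}{1-m}\log\tfrac{\pi_2}{1-\pi_2}$ is the paper's indifference point.

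Where you differ is the key step for the low type:
\begin{itemize}
\item The paper argues globally. The loss $(\Phi-\tfrac12)^2$ is symmetric about $\tilde M^\star(m)$ and increasing in $|\tilde M-\tilde M^\star(m)|$, so the expected loss under a symmetric Gaussian is minimized by centering the Gaussian at $\tilde M^\star(m)$. This identifies the low type's actual best response, not merely a stationarity condition.
\item You compute only $z_1$, the derivative at $r=m$, via the odd-function pairing lemma. You then feed its strict sign into the necessary conditions of Proposition~\ref{prop:noisy-z-conditions}.
\end{itemize}

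Your route is fully rigorous for what the statement claims: nonexistence when $\pi_2\le\tfrac12$, and the form of the candidate, including its uniqueness via the monotonicity of $\mu$ in $m$. Your pairing lemma is in effect a careful proof of the centering fact that the paper takes for granted.

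The paper's route additionally establishes that the candidate satisfies the low type's global constraint \eqref{eq:noisy-global-IC}. You can recover this at no extra cost by applying your lemma at an arbitrary deviation $r$ rather than only at $r=m$. Under $\tilde M\sim\mathcal N(r,\sigma^2)$ the log-odds have mean $a\bigl(r-\tilde M^\star(m)\bigr)$. This gives $\operatorname{sign}V_1'(r)=\operatorname{sign}\bigl(\tilde M^\star(m)-r\bigr)$, so $V_1$ is unimodal with its maximum at $\tilde M^\star(m)$, or at $0$ when that point is negative.
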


\begin{proof}[\textbf{Proof of Proposition~\ref{prop:efficient-noisy}}]
Let $L=w^{(2)}-w^{(1)}>0$ and normalize parameters by
$\theta=(w-w^{(1)})/L$. Then $w^{(1)}$ becomes $0$, $w^{(2)}$ becomes $1$, and the
normalized bias is $\widehat b=b/L=1/2$. The learner's posterior mean in the
original scale is $w^{(1)}+L\mathbb E[\theta\mid \tilde M]$, so the user's
quadratic loss is multiplied by the positive constant $L^2$. Hence, reporting
incentives are unchanged by this affine normalization. It is therefore enough to
solve the normalized case $w^{(1)}=0$, $w^{(2)}=1$, and $b=1/2$.

In any influential monotone equilibrium, the higher parameter reports $M_2=1$.
Indeed, the higher parameter's ideal learner response is $1+b=3/2$, while the
learner's posterior mean always lies in $[0,1]$. Thus, the higher parameter always
prefers a higher posterior mean, and hence the maximal report. Therefore every
influential monotone candidate has the form $\mathbf M=(M_1,M_2)=(m,1)$, with
$m<1$.

For such a candidate, the learner's posterior mean is
$
\Phi(\tilde M;(m,1))
=
\frac{\pi_2G(\tilde M\mid 1)}
{(1-\pi_2)G(\tilde M\mid m)+\pi_2G(\tilde M\mid 1)}.
$
The lower parameter's ideal learner response is $1/2$. Since
$\Phi(\tilde M;(m,1))$ is strictly increasing in $\tilde M$, let
$\tilde M^\star(m)$ be the unique noisy interpretation satisfying
$\Phi(\tilde M^\star(m);(m,1))=1/2$. Equivalently,
$
\pi_2G(\tilde M^\star(m)\mid 1)
=
(1-\pi_2)G(\tilde M^\star(m)\mid m).
$
Using the Gaussian density
$G(\tilde M\mid M)
=
\frac{1}{\sqrt{2\pi}\sigma}
\exp(-(\tilde M-M)^2/(2\sigma^2))$
and taking logs gives
$
\log(\pi_2/(1-\pi_2))
=
((\tilde M^\star(m)-1)^2-(\tilde M^\star(m)-m)^2)/(2\sigma^2).
$
Since
$
(\tilde M^\star-1)^2-(\tilde M^\star-m)^2
=
(1-m)(1+m-2\tilde M^\star),
$
we obtain
$
\tilde M^\star(m)
=
(1+m)/2
-
\frac{\sigma^2}{1-m}
\log(\pi_2/(1-\pi_2)).
$

We now justify the lower parameter's best response. The posterior odds satisfy
\[
\frac{\Phi(\tilde M;(m,1))}{1-\Phi(\tilde M;(m,1))}
=
\exp\!\left(\frac{1-m}{\sigma^2}(\tilde M-\tilde M^\star(m))\right).
\]
Hence $\Phi(\tilde M;(m,1))$ is a logistic function centered at
$\tilde M^\star(m)$, and therefore
$\Phi(\tilde M^\star(m)+z;(m,1))=1-\Phi(\tilde M^\star(m)-z;(m,1))$.
Thus the lower parameter's squared loss
$(\Phi(\tilde M;(m,1))-1/2)^2$ is symmetric around $\tilde M^\star(m)$ and is
strictly increasing in $|\tilde M-\tilde M^\star(m)|$. Since the noise around any
report is Gaussian and symmetric, the expected loss is minimized by centering the
Gaussian at $\tilde M^\star(m)$. Therefore, an interior fixed-pattern equilibrium
must satisfy $m=\tilde M^\star(m)$.

If $\pi_2\le 1/2$, then $\log(\pi_2/(1-\pi_2))\le 0$, so for every $m<1$,
$\tilde M^\star(m)\ge (1+m)/2>m$. Thus, the lower parameter always wants to move its
report upward, and no influential monotone equilibrium exists.

Now suppose $\pi_2>1/2$. An interior influential equilibrium solves
$m=(1+m)/2-\frac{\sigma^2}{1-m}\log(\pi_2/(1-\pi_2))$. Rearranging gives
$(1-m)^2=2\sigma^2\log(\pi_2/(1-\pi_2))$. Since $m<1$, the interior solution is
$m=1-\sigma\sqrt{2\log(\pi_2/(1-\pi_2))}$. If this value is negative, the lower
report is constrained by the boundary $M_1\ge 0$, so the constrained candidate is
$M_1^\star=0$. Hence
$
M_1^\star
=
\max\{0,\,1-\sigma\sqrt{2\log(\pi_2/(1-\pi_2))}\},
\qquad
M_2^\star=1.
$
This proves the claimed form of the influential monotone equilibrium candidate.
\end{proof}

\textbf{Algorithm. } By Proposition~\ref{prop:efficient-noisy}, the binary noisy problem has only two candidates for user-optimal equilibria. The first is an uninfluential equilibrium, in which reports are independent of the true parameter. The second is a possible influential equilibrium,
which exists only if the scalar equation defining \(M_1^\star\) has a solution. In the
closed-form case, this condition fails when \(\pi_2\leq1/2\), since the logarithmic term
is nonpositive. Hence only the uninfluential equilibrium remains. When the influential
candidate exists, the user-optimal monotone noisy equilibrium is obtained by comparing
the user's expected utilities under the two candidates, which also determines the
user's optimal dataset report. When \(b\neq (w^{(2)}-w^{(1)})/2\), the binary problem
is no longer closed form, but it is still much simpler than the general
case because the equilibrium conditions reduce to a single scalar equation.

\subsection{User's Optimal Information Release for Noisy Multiple Dimensions}
\label{sec:noisy-multiple-dimensions}
We now extend the noisy one-dimensional result from Section~\ref{sec:noisy-single-dimension}
to a multidimensional setting. A full
characterization of noisy multidimensional equilibria is difficult because the multidimensional space has no canonical monotone ordering, and noise requires averaging over all noisy
interpretations of each report. Instead, we use our ideas from Section~\ref{sec:efficient-multidimentional-optimal} and provide a tractable algorithm.

We consider the factorized prior setup from Section~\ref{sec:efficient-multidimentional-optimal}. Assume \(\bm b\neq \mathbf 0\), let
\(\hat{\bm b}=\bm b/\|\bm b\|_2\), and write
\(\bm w_\ell^*(D)=s\hat{\bm b}+\mathbf z\), where
\(s=\hat{\bm b}^{\top}\bm w_\ell^*(D)\) and
\(\mathbf z=\bm w_\ell^*(D)-s\hat{\bm b}\). We assume the prior factorizes as
\(\pi_{\bm w}=\pi_s\otimes\pi_{z}\). Thus \(s\) is the conflict coordinate and
\(\mathbf z\) is the agreement component. Similar to Section~\ref{sec:noisy-single-dimension}  we assume
\(\mathrm{supp}(\pi_s)=\{s^{(1)}<\cdots<s^{(n)}\}\), with
\(\pi_s(s^{(i)})>0\). We use the same scalar Gaussian noise model, \textit{but only along the conflict coordinate}. If the user sends
a scalar report \(r\) and agreement component \(\mathbf z\), the learner observes
\(\widetilde r=r+\varepsilon\), with \(\varepsilon\sim\mathcal N(0,\sigma^2)\), and observes
\(\mathbf z\) exactly. Equivalently, with abuse of notation, the noisy multidimensional interpretation is
\(\widetilde{\bm M}=\widetilde r\,\hat{\bm b}+\mathbf z\).

Let \(\mathbf M=(M_1,\ldots,M_n)\) be a one-dimensional noisy equilibrium tuple for the scalar
problem with state \(s\), bias \(\beta=\|\bm b\|_2\), and noise level \(\sigma\), as
characterized in Theorem~\ref{thm:noisy-optimal-report}. We lift this tuple by assigning
\((s^{(i)},\mathbf z)\) the multidimensional report \(M_i\hat{\bm b}+\mathbf z\).

\begin{proposition}
\label{prop:noisy-multidim-lift}
Under the factorized-prior and conflict-coordinate noise assumptions above, any one-dimensional
noisy equilibrium tuple \(\mathbf M=(M_1,\ldots,M_n)\) for the scalar coordinate \(s\) with bias
\(\beta=\|\bm b\|_2\) lifts to a multidimensional noisy equilibrium. In the lifted equilibrium,
if \(\bm w_\ell^*(D)=s^{(i)}\hat{\bm b}+\mathbf z\), the user reports
\(M_i\hat{\bm b}+\mathbf z\). Moreover, if \(\mathbf M\) is user-optimal among the one-dimensional noisy equilibria characterized in
Theorem~\ref{thm:noisy-optimal-report}, then its lift is user-optimal among equilibria that reveal
\(\mathbf z\) exactly and use noisy reports only along \(\hat{\bm b}\).
\end{proposition}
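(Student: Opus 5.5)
The plan is to mirror the proof of Theorem~\ref{thm:factorized-lift}, replacing the deterministic posterior-mean responses by their noisy counterparts. Fix a one-dimensional noisy equilibrium tuple $\mathbf M=(M_1,\ldots,M_n)$ for the scalar state $s$ with bias $\beta=\|\bm b\|_2$ and noise level $\sigma$. In the lifted profile, the type $(s^{(i)},\mathbf z)$ sends $M_i\hat{\bm b}+\mathbf z$. The learner observes $\widetilde{\bm M}=\widetilde r\,\hat{\bm b}+\mathbf z$ with $\widetilde r=M_i+\varepsilon$.

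\emph{Step 1 is to compute the learner's response.} Given $\widetilde{\bm M}$, the learner reads off $\mathbf z$ exactly. Because $\pi_{\bm w}=\pi_s\otimes\pi_{\mathbf z}$, the posterior over $s$ given $(\widetilde r,\mathbf z)$ is proportional to $\pi_s(s^{(i)})\,G(\widetilde r\mid M_i)$, since the $\mathbf z$-likelihood factor cancels. It therefore equals the scalar noisy posterior. Hence the learner's best response is $\Phi(\widetilde r;\mathbf M)\hat{\bm b}+\mathbf z$, where $\Phi(\cdot;\mathbf M)$ is the one-dimensional oracle mean.

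\emph{Step 2 is to verify that no deviation is profitable.} Suppose type $(s^{(i)},\mathbf z)$ deviates to an arbitrary report $r\hat{\bm b}+\mathbf z'$. I will first treat $r\in[0,1]$ and $\mathbf z'$ in the support of $\pi_{\mathbf z}$, so that the learner's response is given by the same formula. If needed, off-path beliefs for other reports can be fixed by projection, so that the learner interprets them using the scalar posterior together with $\mathbf z'$. The user's ideal point is $(s^{(i)}+\beta)\hat{\bm b}+\mathbf z$. Orthogonality of $\mathbf z'-\mathbf z$ to $\hat{\bm b}$ gives the pointwise decomposition
\[
\bigl\|\Phi(\widetilde r;\mathbf M)\hat{\bm b}+\mathbf z'-(s^{(i)}+\beta)\hat{\bm b}-\mathbf z\bigr\|_2^2=\bigl(\Phi(\widetilde r;\mathbf M)-s^{(i)}-\beta\bigr)^2+\|\mathbf z'-\mathbf z\|_2^2 .
\]
Integrating against $G(\widetilde r\mid r)$, the deviation payoff is $V_i(r;\mathbf M)-\|\mathbf z'-\mathbf z\|_2^2$, where $V_i$ is the scalar value from \eqref{eq:noisy-user-utility} with bias $\beta$. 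This is at most $V_i(r;\mathbf M)$, and by the global constraint \eqref{eq:noisy-global-IC} that is at most $V_i(M_i;\mathbf M)$, which is exactly the on-path payoff. The learner's optimality holds by Step 1, so the lift is an equilibrium.

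\emph{Step 3 is optimality.} Consider any equilibrium that reveals $\mathbf z$ exactly and adds noise only along $\hat{\bm b}$. For each fixed $\mathbf z$, the scalar reports form some tuple $\mathbf M(\mathbf z)$. By Step 1, independence makes the learner's $s$-posterior depend on $\mathbf z$ only through that tuple. Restricting deviations to the same $\mathbf z$ shows that each $\mathbf M(\mathbf z)$ satisfies the scalar constraints \eqref{eq:noisy-global-IC}, so it lies in the scalar feasible set. By the decomposition with $\mathbf z'=\mathbf z$, the ex ante utility is $\int\sum_i\pi_i V_i(M_i(\mathbf z);\mathbf M(\mathbf z))\,d\pi_{\mathbf z}$. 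This is bounded by the scalar optimum from Theorem~\ref{thm:noisy-optimal-report}, and the lift of $\mathbf M^\star$ attains that bound.

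I expect the main obstacle to be Step 3. The issue is that the class of comparison equilibria is loosely specified: it may include non-monotone tuples, $\mathbf z$-dependent tuples, and off-path beliefs. I would handle this by taking ``user-optimal'' relative to the same class used in Theorem~\ref{thm:noisy-optimal-report}, namely monotone influential tuples applied fiberwise, and by noting that the pointwise bound holds for each $\mathbf z$.
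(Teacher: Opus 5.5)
Your proposal is correct and follows essentially the same route as the paper: independence makes the learner's $s$-posterior equal to the scalar noisy posterior, orthogonality splits the deviation loss into $\bigl(\Phi(\widetilde r;\mathbf M)-s^{(i)}-\beta\bigr)^2+\|\mathbf z'-\mathbf z\|_2^2$, and the scalar global constraints \eqref{eq:noisy-global-IC} then rule out profitable deviations. Your Step 3 is a more careful version of the paper's brief closing remark, which only compares against lifts of a single $\mathbf z$-independent scalar tuple. You instead allow $\mathbf z$-dependent tuples $\mathbf M(\mathbf z)$ and bound the utility on each fiber before integrating, which matches the broader class named in the statement.
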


\begin{proof}[\textbf{Proof of Proposition~\ref{prop:noisy-multidim-lift}}]
Let \(\beta=\|\bm b\|_2\), so \(\bm b=\beta\hat{\bm b}\). Fix a true parameter
\(\bm w_\ell^*(D)=s^{(i)}\hat{\bm b}+\mathbf z\). Under the lifted strategy, this parameter
reports \(M_i\hat{\bm b}+\mathbf z\). If the user instead deviates to
\(r\hat{\bm b}+\mathbf z'\), then the learner observes
\(\widetilde r=r+\varepsilon\) and the agreement component \(\mathbf z'\).

Since \(s\) and \(\mathbf z\) are independent under the prior, observing the agreement component
does not change the posterior distribution of \(s\). Therefore, conditional on \(\widetilde r\), the
learner's posterior mean has the form
\begin{equation*}
\Phi(\widetilde r;\mathbf M)\hat{\bm b}+\mathbf z',
\end{equation*}
where the scalar posterior mean is exactly the one-dimensional noisy posterior
\begin{equation*}
\Phi(\widetilde r;\mathbf M)
=
\frac{\sum_{j=1}^n \pi_j s^{(j)}G(\widetilde r\mid M_j)}
{\sum_{j=1}^n \pi_j G(\widetilde r\mid M_j)}.
\end{equation*}

The user's ideal hypothesis is
\((s^{(i)}+\beta)\hat{\bm b}+\mathbf z\). Hence, the expected loss from the deviation
\(r\hat{\bm b}+\mathbf z'\) is
\begin{equation*}
\mathbb E_{\widetilde r\mid r}
\left[
\left\|
\Phi(\widetilde r;\mathbf M)\hat{\bm b}+\mathbf z'
-
\bigl((s^{(i)}+\beta)\hat{\bm b}+\mathbf z\bigr)
\right\|_2^2
\right].
\end{equation*}
Because \(\mathbf z'-\mathbf z\) is orthogonal to \(\hat{\bm b}\), this separates as
\begin{equation*}
\mathbb E_{\widetilde r\mid r}
\left[
\bigl(\Phi(\widetilde r;\mathbf M)-s^{(i)}-\beta\bigr)^2
\right]
+
\|\mathbf z'-\mathbf z\|_2^2.
\end{equation*}
For any scalar report \(r\), the second term is minimized by choosing
\(\mathbf z'=\mathbf z\). Thus, the user cannot improve their utility by changing the agreement component.

After fixing \(\mathbf z'=\mathbf z\), the deviation problem is exactly the one-dimensional noisy
problem for scalar parameter \(s^{(i)}\). Since \(\mathbf M=(M_1,\ldots,M_n)\) is a one-dimensional noisy
equilibrium, parameter \(s^{(i)}\) cannot profitably deviate from \(M_i\) to any other scalar report.
Therefore, no multidimensional parameter has a profitable deviation.

The learner's response is optimal because, under quadratic learner utility, the posterior mean is the
learner's best response. Hence, the lifted strategy profile is a multidimensional noisy equilibrium.

Finally, within the class of lifted equilibria that reveal \(\mathbf z\) exactly, the agreement
component creates no additional strategic loss. The only remaining optimization is the scalar noisy
problem in \(s\). Therefore, if \(\mathbf M\) is user-optimal by
Theorem~\ref{thm:noisy-optimal-report}, its lift is user-optimal within this lifted class.
\end{proof}

\textbf{Algorithm.}
The algorithm reuses Algorithm~\ref{alg:noisy-user-optimal-noisy-1d}. Given a private dataset \(D\), compute
\(\bm w_\ell^*(D)\). Decompose it into the conflict coordinate
\(s=\hat{\bm b}^{\top}\bm w_\ell^*(D)\) and the agreement component
\(\mathbf z=\bm w_\ell^*(D)-s\hat{\bm b}\). Run Algorithm~\ref{alg:noisy-user-optimal-noisy-1d}
on the finite scalar prior \(\pi_s\) with bias \(\beta=\|\bm b\|_2\) and noise level
\(\sigma\). Let the resulting user-optimal scalar tuple be
\(M^{s,\star}=(M_1^\star,\ldots,M_n^\star)\). If
\(s=s^{(i)}\), return the multidimensional report
$M^\star(D)=M_i^\star\hat{\bm b}+\mathbf z.$ 
The only additional multidimensional cost compared to Algorithm~\ref{alg:noisy-user-optimal-noisy-1d} is computing the projection
\(s=\hat{\bm b}^{\top}\bm w_\ell^*(D)\) and the agreement component
\(\mathbf z=\bm w_\ell^*(D)-s\hat{\bm b}\), which takes \(O(d)\) time. Therefore, using the complexity notations from Algorithm~\ref{alg:noisy-user-optimal-noisy-1d}, the overall complexity of the multidimensional algorithm is $
O\!\left(n^2T_{\mathrm{solve}}(n)+CnT_{\mathrm{BR}} + d\right)$.
Finally, the equilibrium certificate follows using th global
condition~\eqref{eq:noisy-global-IC} from Theorem~\ref{thm:user-optimal-noisy} in
Proposition~\ref{prop:eps-equilibrium}. In particular, for the returned
tuple \(M^\star\), define
\[
\varepsilon_{\mathrm{BR}}
=
\max_{i\in[n]}
\left[
\max_{r\in[0,1]} V_i(r;M^\star)
-
V_i(M_i^\star;M^\star)
\right].
\]
Then \(\varepsilon_{\mathrm{BR}}=0\) certifies exact best-response optimality,
while \(\varepsilon_{\mathrm{BR}}\le \varepsilon\) certifies an
\(\varepsilon\)-equilibrium.

\section{Missing Details of Section~\ref{sec:experiment}}
\label{sec:additional-experiment-setup}

\subsection{Experiment Setup}
\label{sec:appendix-eperiment-setup}

\paragraph{Datasets and preprocessing.}
We use four real-world datasets: Credit Approval~\cite{credit_approval},
School Admission~\cite{school_admission}, Census~\cite{adult_2}, and Prosper
Loan~\cite{prosper_loan}\footnote{\url{https://www.prosper.com/}}.
Table~\ref{tab:datasets} summarizes the dataset sizes, feature dimensions, and
prediction tasks. Numerical features are imputed with the training median and
standardized; categorical features are imputed with the training mode and one-hot
encoded, giving processed features (Proc. Feat.)

\begin{table}[ht]
\centering
\caption{Summary of datasets used in the experiments.}
\label{tab:datasets}
\begin{tabular}{lrrrrl}
\hline
\textbf{Dataset} & \textbf{\# Samples} & \textbf{\# Raw Feat.} &
\textbf{\# Proc. Feat.} & \textbf{\# Labels} & \textbf{Prediction Task} \\
\hline
Credit Approval & 690 & 15 & 46 & 2 & Approval \\
School Admission & 22{,}407 & 30 & 54 & 2 & Pass Exam \\
Census & 32{,}561 & 14 & 105 & 2 & Income \(>50{,}000\) \\
Prosper Loan & 84{,}853 & 64 & 189 & 11 & Risk Score \\
\hline
\end{tabular}
\end{table}

\paragraph{Learners.}
We evaluate two learner classes. The first is a linear SVM implemented with
\texttt{LinearSVC}; here \(\bm w_\ell^*(D)\) is the learned linear parameter
vector, including the intercept. The second is a two-layer neural network with
architecture \(\mathrm{Linear}(d,32)\)-ReLU-\(\mathrm{Linear}(32,C)\); here
\(\bm w_\ell^*(D)\) is the flattened vector of all learned parameters.
The SVM uses default hyperparameters with a fixed random seed. The neural network
is trained with Adam, learning rate \(10^{-3}\), batch size \(64\), and
cross-entropy loss. We use a 10\% validation split and early stopping after 10
epochs without validation-loss improvement.

\begin{table}[ht]
\centering
\caption{Model parameter dimensions after preprocessing.}
\label{tab:model-param-dims}
\begin{tabular}{lrr}
\hline
\textbf{Dataset} & \textbf{Linear SVM} & \textbf{MLP} \\
\hline
Credit Approval & 47 & 1570 \\
School Admission & 52--57 & 1762--1826 \\
Census & 104--106 & 3426--3458 \\
Prosper Loan & 2090 & 6443 \\
\hline
\end{tabular}
\end{table}

The ranges in Table~\ref{tab:model-param-dims} arise because preprocessing is fit
separately on each train split, so the set of one-hot encoded categories can vary
slightly across splits.

\paragraph{Empirical priors.}
A prior over private datasets induces a prior \(\pi_{\bm w}\) over
learner-optimal hypotheses \(\bm w_\ell^*(D)\). We approximate this prior by
stratified bootstrap resampling: for each bootstrap dataset, we retrain the
learner and store the resulting parameter vector. Unless otherwise stated, all
experiments use support size \(50\), i.e., \(50\) bootstrap datasets and
\(50\) learned models.

\paragraph{Noise and Bias.}
We evaluate both noiseless and noisy multidimensional algorithms. The bias vector
\(\bm b\in\mathbb R^d\) can be any nonzero vector for the algorithms. In each run, we sample a random conflict direction \(\bm g\sim\mathcal N(0,I_d)\) and set
$\bm b = c\frac{\bm g}{\|\bm g\|_2},$
so that \(\|\bm b\|_2=c\). In the noiseless \textbf{bias sweep}, we vary
$c\in\{0.005,0.02,0.05,0.10\}$. In the \textbf{noisy sweep}, we fix \(c=0.05\) and vary the Gaussian report-noise level over $\sigma\in\{0.005,0.02,0.05,0.10\}.$ This isolates the effect of communication noise from the effect of conflict magnitude. For each bias or noise level, we run three independent train/test splits with train splits treated as three private datasets. Therefore, we have a total of $4 \times 3 =12$ \textbf{trials}. All reported times are wall-clock times over three runs of the
corresponding sweep.

\paragraph{Strategic and non-strategic benchmarks.}
For each private dataset \(D\), the \emph{strategic} benchmark is the user-side
strategy computed by our algorithm. This strategy selects a report that maximizes
the user's expected utility given the learner's equilibrium interpretation of
reports. The \emph{non-strategic} benchmark is the uninfluential
baseline based on Section~\ref{sec:possibility-of-influence}. In this baseline,
the learner ignores the report and chooses a model using only its prior over
private datasets. Formally, the learner chooses
\[
\bm w_{\mathrm{uninf}}
\in
\arg\max_{\bm w\in\mathcal W}
\mathbb E_{D'\sim \pi}\!\left[U^\ell(D',h_{\bm w})\right].
\]
Thus, the user's report does not affect the learned model in the non-strategic
benchmark. The reported \textbf{utility gain} is
$10^3\cdot
\left(
U^r(D,h_{\bm w_{\mathrm{strat}}})
-
U^r(D,h_{\bm w_{\mathrm{uninf}}})
\right),$
where \(\bm w_{\mathrm{strat}}\) is the model induced by the strategic report.
The \textit{win rate} is the fraction of trials in which this quantity is strictly
positive.

\paragraph{Solver and implementation details.}
For the solver, we use \texttt{scipy.optimize.least\_squares}. For each candidate report pattern, the nonlinear first-order conditions are solved with box constraints restricting reports to \([0,1]\). A candidate is accepted only if the maximum unilateral best-response gain is at most \(10^{-4}\). We use three random restarts per pattern and keep the accepted candidate with the largest expected user utility. Posterior expectations under Gaussian noise are approximated using
Gauss--Hermite quadrature with 41 nodes.

All experiments were run on a MacBook Pro with an Apple M1 Pro processor,
16 GB of memory, and CPU-only execution. The implementation used Python 3.11.4,
NumPy 1.24.3, Pandas 1.5.3, SciPy 1.10.1, scikit-learn 1.3.0, and PyTorch 2.0.1.

\subsection{Additional Experiment Results}
\label{sec:additional-experiment-results}
\paragraph{Scalability of Algorithms.}
Tables~\ref{tab:noisy-runtime-breakdown} and~\ref{tab:noiseless-runtime-breakdown}
report end-to-end runtimes for the noisy and noiseless algorithms. The noiseless
dynamic program is negligible relative to model training across all datasets and
model classes. In contrast, the noisy solver dominates runtime for small and
medium datasets, while model training is also substantial on Prosper Loan. This is
expected because the noisy algorithm solves a nonlinear equilibrium problem,
whereas the noiseless algorithm reduces to an exact dynamic program. We use an
off-the-shelf \texttt{scipy.optimize.least\_squares} solver with a basic
configuration; optimized solvers, warm starts, specialized tuning, and
parallelization could further reduce solver runtime and improve overall
algorithmic efficiency.

\begin{table}[H]
\small
\centering
\caption{Scalability breakdown of noisy multi-dimensional algorithm  across noise sweep}
\label{tab:noisy-runtime-breakdown}
\begin{tabular}{llrrrr}
\hline
\textbf{Model} & \textbf{Dataset} & \(\dim(\bm w_\ell^*)\)
& \textbf{Train (s)} & \textbf{Solver (s)} & \textbf{Alg. Time (s)} \\
\hline
\multirow{4}{*}{Linear SVM}
& Credit Approval  & 47       & $0.00 \pm 0.00$ & $506.43 \pm 187.93$ & $506.44 \pm 187.93$ \\
& Census           & 104--106 & $0.10 \pm 0.01$ & $550.69 \pm 160.49$ & $550.80 \pm 160.49$ \\
& School Admission & 52--55   & $0.05 \pm 0.00$ & $454.93 \pm 230.68$ & $454.98 \pm 230.68$ \\
& Prosper Loan     & 2090     & $118.22 \pm 7.28$ & $668.29 \pm 179.56$ & $786.51 \pm 182.28$ \\
\hline
\multirow{4}{*}{MLP}
& Credit Approval  & 1570      & $0.86 \pm 0.09$ & $571.77 \pm 165.78$ & $572.63 \pm 165.80$ \\
& Census           & 3458      & $6.62 \pm 1.52$ & $520.61 \pm 171.97$ & $527.23 \pm 171.31$ \\
& School Admission & 1762--1826 & $3.46 \pm 0.90$ & $507.63 \pm 130.15$ & $511.08 \pm 130.56$ \\
& Prosper Loan     & 6443      & $49.30 \pm 6.47$ & $494.18 \pm 197.61$ & $543.48 \pm 198.56$ \\
\hline
\end{tabular}
\end{table}

\begin{table}[H]
\small
\centering
\caption{Scalability breakdown of noiseless multi-dimensional algorithm across bias sweep.}
\label{tab:noiseless-runtime-breakdown}
\begin{tabular}{llrrrr}
\hline
\textbf{Model} & \textbf{Dataset} & \(\dim(\bm w_\ell^*)\)
& \textbf{Train (s)} & \textbf{Exact DP (s)} & \textbf{Alg. Time (s)} \\
\hline
\multirow{4}{*}{Linear SVM}
& Credit Approval  & 47       & $0.00 \pm 0.01$ & $0.008 \pm 0.001$ & $0.01 \pm 0.01$ \\
& Census           & 104--106 & $0.10 \pm 0.01$ & $0.008 \pm 0.001$ & $0.11 \pm 0.01$ \\
& School Admission & 52--55   & $0.06 \pm 0.01$ & $0.008 \pm 0.000$ & $0.06 \pm 0.01$ \\
& Prosper Loan     & 2090     & $110.41 \pm 11.54$ & $0.009 \pm 0.001$ & $110.41 \pm 11.54$ \\
\hline
\multirow{4}{*}{MLP}
& Credit Approval  & 1570      & $0.42 \pm 0.39$ & $0.008 \pm 0.001$ & $0.43 \pm 0.39$ \\
& Census           & 3458      & $5.61 \pm 0.83$ & $0.008 \pm 0.001$ & $5.62 \pm 0.83$ \\
& School Admission & 1762--1826 & $3.27 \pm 1.55$ & $0.009 \pm 0.001$ & $3.27 \pm 1.55$ \\
& Prosper Loan     & 6443      & $52.26 \pm 6.12$ & $0.009 \pm 0.001$ & $52.27 \pm 6.12$ \\
\hline
\end{tabular}
\end{table}

\paragraph{Scaling with prior support size.}
Tables~\ref{tab:noisy-support-time} and~\ref{tab:noiseless-support-time} show how
runtime changes as the empirical-prior support size increases. The noiseless runtime
is essentially unchanged because the exact dynamic program is negligible compared
with training. The noisy runtime increases with support size, reflecting the growing
cost of solving and verifying the nonlinear scalar equilibrium problem. Solver time is
not monotone in dataset size because the noisy algorithm operates after compressing
the empirical prior onto the conflict direction.

\begin{table}[H]
\small
\centering
\caption{Noisy multidimensional algorithm time by empirical support size across noise sweep}
\label{tab:noisy-support-time}
\begin{tabular}{llrrrr}
\hline
\textbf{Model} & \textbf{Dataset} & \textbf{5} & \textbf{10} & \textbf{20} & \textbf{50} \\
\hline
\multirow{4}{*}{Linear SVM}
& Credit Approval & $1.49 \pm 0.37$ & $13.57 \pm 0.89$ & $235.42 \pm 44.83$ & $460.13 \pm 10.29$ \\
& School Admission & $1.87 \pm 0.21$ & $12.82 \pm 2.12$ & $195.35 \pm 28.68$ & $309.95 \pm 27.85$ \\
& Census & $1.71 \pm 0.40$ & $13.41 \pm 0.81$ & $213.78 \pm 16.40$ & $524.68 \pm 66.30$ \\
& Prosper Loan & $100.88 \pm 10.04$ & $111.02 \pm 9.89$ & $228.71 \pm 18.56$ & $791.56 \pm 26.93$ \\
\hline
\multirow{4}{*}{MLP}
& Credit Approval & $2.34 \pm 0.28$ & $12.54 \pm 1.39$ & $168.23 \pm 29.34$ & $544.22 \pm 60.31$ \\
& School Admission & $4.25 \pm 0.79$ & $13.35 \pm 0.52$ & $172.01 \pm 21.53$ & $492.44 \pm 47.01$ \\
& Census & $5.92 \pm 2.09$ & $17.55 \pm 3.62$ & $184.45 \pm 21.16$ & $512.21 \pm 12.54$ \\
& Prosper Loan & $37.65 \pm 11.62$ & $47.71 \pm 10.10$ & $168.75 \pm 10.21$ & $527.27 \pm 35.50$ \\
\hline
\end{tabular}
\end{table}

\begin{table}[H]
\small
\centering
\caption{Noiseless multidimensional algorithm time by empirical support size across bias sweep}
\label{tab:noiseless-support-time}
\begin{tabular}{llrrrr}
\hline
\textbf{Model} & \textbf{Dataset} & \textbf{5} & \textbf{10} & \textbf{20} & \textbf{50} \\
\hline
\multirow{4}{*}{Linear SVM}
& Credit Approval & $0.00 \pm 0.00$ & $0.00 \pm 0.00$ & $0.00 \pm 0.00$ & $0.01 \pm 0.00$ \\
& School Admission & $0.06 \pm 0.01$ & $0.06 \pm 0.01$ & $0.06 \pm 0.01$ & $0.07 \pm 0.01$ \\
& Census & $0.09 \pm 0.00$ & $0.09 \pm 0.00$ & $0.09 \pm 0.00$ & $0.11 \pm 0.00$ \\
& Prosper Loan & $107.43 \pm 12.56$ & $107.43 \pm 12.56$ & $107.43 \pm 12.56$ & $110.43 \pm 12.44$ \\
\hline
\multirow{4}{*}{MLP}
& Credit Approval & $0.49 \pm 0.38$ & $0.49 \pm 0.38$ & $0.49 \pm 0.38$ & $0.41 \pm 0.41$ \\
& School Admission & $2.16 \pm 0.42$ & $2.16 \pm 0.42$ & $2.16 \pm 0.42$ & $3.24 \pm 1.72$ \\
& Census & $4.10 \pm 1.58$ & $4.10 \pm 1.58$ & $4.10 \pm 1.58$ & $6.37 \pm 1.08$ \\
& Prosper Loan & $32.21 \pm 9.36$ & $32.21 \pm 9.36$ & $32.21 \pm 9.36$ & $60.63 \pm 5.23$ \\
\hline
\end{tabular}
\end{table}
\paragraph{Approximate Equilibria under the Numerical Solver.}
Table~\ref{tab:noisy-solver-diagnostics-sweep} reports diagnostics for the
numerical solver used in Algorithm~\ref{alg:noisy-user-optimal-noisy-1d}.
The \emph{success rate} is the fraction of runs in which the solver finds at
least one candidate report profile that passes our equilibrium checks. It is
\(1.00\) in all settings, so every run produced at least one verified candidate.
The \emph{number of verified candidates} is the average number of candidate
equilibria that passed these checks.

The \emph{residual} measures how accurately a candidate satisfies the
first-order equilibrium equations used to construct it,
namely~\eqref{eq:noisy-z-interior}--\eqref{eq:noisy-z-upper} from
Proposition~\ref{prop:noisy-equilibrium-structure}. A residual of zero means
that these equations are satisfied up to numerical precision. The residual is
zero in all settings except MLP Prosper Loan, where it is
\(3.03\times 10^{-3}\).

The \emph{BR violation} measures the largest utility improvement available to
the user from deviating to any alternative report \(r\in[0,1]\). This checks the condition~\eqref{eq:noisy-global-IC} from
Theorem~\ref{thm:user-optimal-noisy}. Smaller values indicate that the candidate
is closer to a true equilibrium. Across all settings, the maximum BR violation is
\(4.98\times 10^{-6}\), so no possible parameter can gain more than this amount by a
continuous unilateral deviation. Therefore the selected candidate, which
maximizes~\eqref{eq:noisy-user-optimal-program} among verified candidates, as an
approximately user-optimal noisy equilibrium.

\begin{table}[H]
\centering
\small
\setlength{\tabcolsep}{3pt}
\caption{Diagnostics for the noisy multi-dimensional algorithm across the noise sweep.}
\label{tab:noisy-solver-diagnostics-sweep}
\begin{tabular}{llcccc}
\hline
\textbf{Model} & \textbf{Dataset} & \textbf{Success Rate} & \textbf{Verified Candidates} & \textbf{Residuals} & \textbf{BR Violation} \\
\hline
\multirow{4}{*}{Linear SVM}
& Credit Approval & 1.00 & $3.88 \pm 1.25$ & $0.00$ & $1.70{\times}10^{-6}$ \\
& School Admission & 1.00 & $2.00 \pm 0.00$ & $0.00$ & $0.00$ \\
& Census  & 1.00 & $4.25 \pm 1.67$ & $0.00$ & $2.06{\times}10^{-6}$ \\
& Prosper Loan& 1.00 & $2.75 \pm 0.46$ & $0.00$ & $1.48{\times}10^{-6}$ \\
\hline
\multirow{4}{*}{MLP}
& Credit Approval  & 1.00 & $3.50 \pm 1.31$ & $0.00$ & $1.83{\times}10^{-6}$ \\
& School Admission  & 1.00 & $5.38 \pm 2.13$ & $0.00$ & $1.52{\times}10^{-6}$ \\
& Census  & 1.00 & $4.62 \pm 2.83$ & $0.00$ & $1.17{\times}10^{-6}$ \\
& Prosper Loan& 1.00 & $6.12 \pm 4.61$ & $3.03{\times}10^{-3}$ & $\mathbf{4.98{\times}10^{-6}}$ \\
\hline
\end{tabular}
\end{table}

\paragraph{Approximate Equilibria under Violated Factorization.}
Table~\ref{tab:epsilon-checks} evaluates the certificate from
Proposition~\ref{prop:eps-equilibrium} on the bias sweep. We use the noiseless
setting so that any failure of the exact equilibrium certificate comes from
violations of the factorization condition, rather than from numerical integration
or noisy-solver error. The quantity \(\varepsilon_{\mathrm{IC}}\) measures the
largest utility gain that any possible parameter can obtain by deviating from its assigned
partition/report. Thus, \(\varepsilon_{\mathrm{IC}}=0\) means that the fixed-point
condition~\eqref{eq:multidim-fixed-point} is satisfied exactly and the returned
partition is certified as an exact equilibrium. Positive values certify an
\(\varepsilon_{\mathrm{IC}}\)-equilibrium in the sense of
Proposition~\ref{prop:eps-equilibrium}.

The pass rate is the fraction of trials with \(\varepsilon_{\mathrm{IC}}=0\).
Prosper Loan with MLP achieves pass rate \(1.00\) and
\(\varepsilon_{\mathrm{IC}}=0\), confirming that the returned partition is an
exact equilibrium according to the certificate. MLP Credit Approval also performs
well, with pass rate \(0.92\) and mean \(\varepsilon_{\mathrm{IC}}=0.014\).
School Admission with Linear SVM shows small violations, with mean \(0.028\) and
maximum \(0.122\), yielding tight \(\varepsilon\)-equilibria. The largest
violations occur on Census. Linear SVM has pass rate \(0.08\) and maximum
\(\varepsilon_{\mathrm{IC}}=4.792\), while MLP has pass rate \(0.50\) and
maximum \(\varepsilon_{\mathrm{IC}}=28.737\). These larger values indicate
meaningful factorization violations, so the exact guarantee of
Theorem~\ref{thm:factorized-lift} does not apply in those settings.

\begin{table}[H]
\centering
\small
\caption{Equilibrium verification using \(\varepsilon_{\mathrm{IC}}\) checks for bias sweep.}
\label{tab:epsilon-checks}
\begin{tabular}{llrrrr}
\hline
\textbf{Model} & \textbf{Dataset} & \textbf{Trials} & \textbf{Pass Rate}
& \(\boldsymbol{\varepsilon_{\mathrm{IC}}}\) \textbf{Mean}
& \(\boldsymbol{\varepsilon_{\mathrm{IC}}}\) \textbf{Max} \\
\hline
\multirow{4}{*}{Linear SVM}
& Credit Approval   & 12 & 0.17 & 0.890 & 1.352 \\
& School Admission  & 12 & 0.25 & 0.028 & 0.122 \\
& Census            & 12 & 0.08 & 1.609 & 4.792 \\
& Prosper Loan      & 12 & 0.58 & 0.151 & 1.008 \\
\hline
\multirow{4}{*}{MLP}
& Credit Approval   & 12 & 0.92 & 0.014 & 0.172 \\
& School Admission  & 12 & 0.67 & 0.923 & 5.239 \\
& Census            & 12 & 0.50 & 5.588 & 28.737 \\
& Prosper Loan      & 12 & 1.00 & 0.000 & 0.000 \\
\hline
\end{tabular}
\end{table}